\newtheorem{definition}{Definition}
\newtheorem{theorem}{Theorem}
\newtheorem{corollary}{Corollary}
\newtheorem{lemma}{Lemma}
\newtheorem{observation}{Observation}
\crefname{observation}{Observation}{Observations}
\Crefname{equation}{Eq.}{Eqs.}
\Crefname{figure}{Fig.}{Figs.}
\Crefname{table}{Table}{Tables}
\def\calA{{\mathcal{A}}}
\def\calB{{\mathcal{B}}}
\def\calC{{\mathcal{C}}}
\def\calE{{\mathcal{E}}}
\def\calF{{\mathcal{F}}}
\def\calG{{\mathcal{G}}}
\def\calH{{\mathcal{H}}}
\def\calL{{\mathcal{L}}}
\def\calM{{\mathcal{M}}}
\def\calN{{\mathcal{N}}}
\def\calR{{\mathcal{R}}}
\def\calS{{\mathcal{S}}}
\def\calU{{\mathcal{U}}}
\def\calV{{\mathcal{V}}}
\def\calX{{\mathcal{X}}}
\def\GT{{\mathrm{GT}}}
\DeclareMathOperator*{\argmax}{arg\,max}
\DeclareMathOperator*{\E}{\mathbb{E}}
\def\dichotomous{{binary}}
\def\Dichotomous{{Binary}}
\def\sing{\operatorname{sr}} 
\newcommand{\gerr}{\operatorname{err}}    
\newcommand{\cerr}{\operatorname{err}_{\mathrm{iiv}}}   
\def\cerrt{\cerr}
\def\deltat{\delta}
\def\hatft{\hat{f}}
\def\MM{\mathrm{MM}}
\def\I{{\mathrm{I}}}
\def\IDK{{\mathrm{IDK}}}
\def\1{\mathbf{1}}
\def\eps{\varepsilon}
\def\opt{\mathrm{opt}}
\newcommand{\prompt}[1]{\verb+#1+}
\title{Why Language Models Hallucinate}
\author{Adam Tauman Kalai\thanks{Email: {\tt adam@kal.ai}}\\OpenAI \and Ofir Nachum\\OpenAI 
\and Santosh S. Vempala\thanks{Supported in part by NSF award CCF-2106444 and a Simons Investigator award. Email: {\tt vempala@gatech.edu}}\\Georgia Tech
\and Edwin Zhang \\OpenAI
}
\begin{document}
\date{September 4, 2025}
\maketitle
\begin{abstract}
Like students facing hard exam questions, large language models sometimes guess when uncertain, producing plausible yet incorrect statements instead of admitting uncertainty. Such ``hallucinations'' persist even in state-of-the-art systems and undermine trust. 
We argue that language models hallucinate because the training and evaluation procedures reward guessing over acknowledging uncertainty, and we analyze the statistical causes of hallucinations in the modern training pipeline. Hallucinations need not be mysterious---they originate simply as errors in binary classification. If incorrect statements cannot be distinguished from facts, then hallucinations in pretrained language models will arise through natural statistical pressures. 
We then argue that hallucinations persist due to the way most evaluations are graded---language models are optimized to be good test-takers, and guessing when uncertain improves test performance. This ``epidemic'' of penalizing uncertain responses can only be addressed through a socio-technical mitigation: modifying the scoring of existing benchmarks that are misaligned but dominate leaderboards, rather than introducing additional hallucination evaluations. This change may steer the field toward more trustworthy AI systems.
\end{abstract}

\section{Introduction}
Language models are known to produce overconfident, plausible falsehoods, which diminish their utility and trustworthiness. This error mode is known as ``hallucination,''  though it differs fundamentally from the human perceptual experience. Despite significant progress, hallucinations continue to plague the field, and are still present in the latest models  \citep{openai2025gpt5}. 
Consider the prompt:
$$\text{What is Adam Tauman Kalai’s birthday? If you know, just respond with DD-MM.}$$ 
On three separate attempts, a state-of-the-art open-source language model\footnote{The language model was DeepSeek-V3 (600 B parameters), accessed via the DeepSeek app on 11 May 2025.} output three incorrect dates: ``03-07'', ``15-06'', and ``01-01'', 
even though a response was requested only if known. The correct date is in Autumn. \Cref{tab:dissertation} provides an example of more elaborate hallucinations. 

Hallucinations are an important special case of \textit{errors} produced by language models, which we analyze more generally using computational learning theory \citep[e.g.,][]{kearns1994introduction}. We consider general sets of \textit{errors} $\calE$, an arbitrary subset of plausible strings $\calX = \calE\cup\calV$, with the other plausible strings $\calV$ being called \textit{valid}. We then analyze the statistical nature of these errors, and apply the results for the type of errors of interest: plausible falsehoods called hallucinations. Our formalism also includes the notion of a \textit{prompt} to which a language model must respond.

The distribution of language is initially learned from a corpus of training examples, which inevitably contains errors and half-truths. However, we show that even if the training data were error-free, the objectives optimized during language model training would lead to errors being generated. With realistic training data containing shades of error, one may expect \textit{even higher error rates}. Thus, our lower bounds on errors apply to more realistic settings, as in traditional computational learning theory \citep{kearns1994introduction}.

Our error analysis is general yet has specific implications for hallucination. It applies broadly, including to reasoning and search-and-retrieval language models, and the analysis does not rely on properties of next-word prediction or Transformer-based neural networks. It only considers the two stages of the modern training paradigm: pretraining and post-training, described below. For hallucinations, taxonomies \citep{Maynez2020,ji2023survey} often further distinguish \textit{intrinsic} hallucinations that contradict the user's prompt, such as:
$$\text{How many Ds are in DEEPSEEK? If you know, just say the number with no commentary.}$$
DeepSeek-V3 returned ``2'' or ``3'' in ten independent trials; Meta AI and Claude 3.7 Sonnet\footnote{\url{https://meta.ai} and \url{https://claude.ai}, accessed May 9, 2025.} performed similarly, including answers as large as ``6'' and ``7''. 
Our theory also sheds light on \textit{extrinsic} hallucinations, which contradict the training data or external reality.



\begin{table}[t]
\centering
\begin{tabular}{@{}p{0.11\textwidth}@{\hspace{8pt}}p{0.87\textwidth}@{}}
\hline
ChatGPT: (GPT-4o) &
Adam Tauman Kalai's Ph.D.\ dissertation (completed in 2002 at CMU) is entitled:
``Boosting, Online Algorithms, and Other Topics in Machine Learning.''  \\[6pt]
DeepSeek: &
``Algebraic Methods in Interactive Machine Learning''\ldots at Harvard University in 2005.\\[6pt]
Llama: &
``Efficient Algorithms for Learning and Playing Games''\ldots in 2007 at MIT.\\\hline
\end{tabular}
\caption{Excerpts from responses to “What was the title of Adam Kalai’s dissertation?” from three popular language models.\tablefootnote{The models were freely accessed 2025-05-09 via \url{chatgpt.com},  the DeepSeek app \citep[R1,][]{DeepSeekR1_2025}, and \url{huggingface.co} (Llama-4-Scout-17B-16E-Instruct), respectively. None of the models searched the Web.} None generated the correct title or year \citep{kalai_probabilistic_2001}.}
\label{tab:dissertation}
\end{table}

\subsection{Errors caused by pretraining}\label{sec:intro_pretrain}
During pretraining, a \textit{base model} learns the distribution of language in a large text corpus.
We show that, even with error-free training data, the statistical objective minimized during pretraining would lead to a language model that generates errors. 
Proving this is nontrivial because some models make no errors, such as one that always outputs ``I don't know'' (IDK) or one that simply memorizes and reproduces an error-free corpus. Our analysis explains what types of errors should be expected after pretraining.

To do this, we draw a connection to binary classification. Consider questions of the form ``Is this a valid language model output?'' Generating valid outputs is in some sense harder than answering these Yes/No questions, because generation implicitly requires answering ``Is this valid'' about each candidate response. Formally, we consider the Is-It-Valid (IIV) binary  classification problem which has a training set consisting of a large number of responses, each labeled either as valid ($+$) or error ($-$), as illustrated in \Cref{fig:IIV}. For this supervised learning problem, both train and test data are 50/50 mixtures of valid examples labeled as $+$ (i.e., the pretraining data since we assume it is valid) and uniformly random errors from $\calE$ labeled as $-$. We then show how any language model can be used as an IIV classifier. This in turn allows us to establish a mathematical relationship between generative errors (such as hallucinations) and IIV misclassification rate:
$$\text{(generative error rate)} \gtrsim 2 \cdot \text{(IIV misclassification rate)}.$$

\begin{figure}
    \centering
    \includegraphics[width=\linewidth]{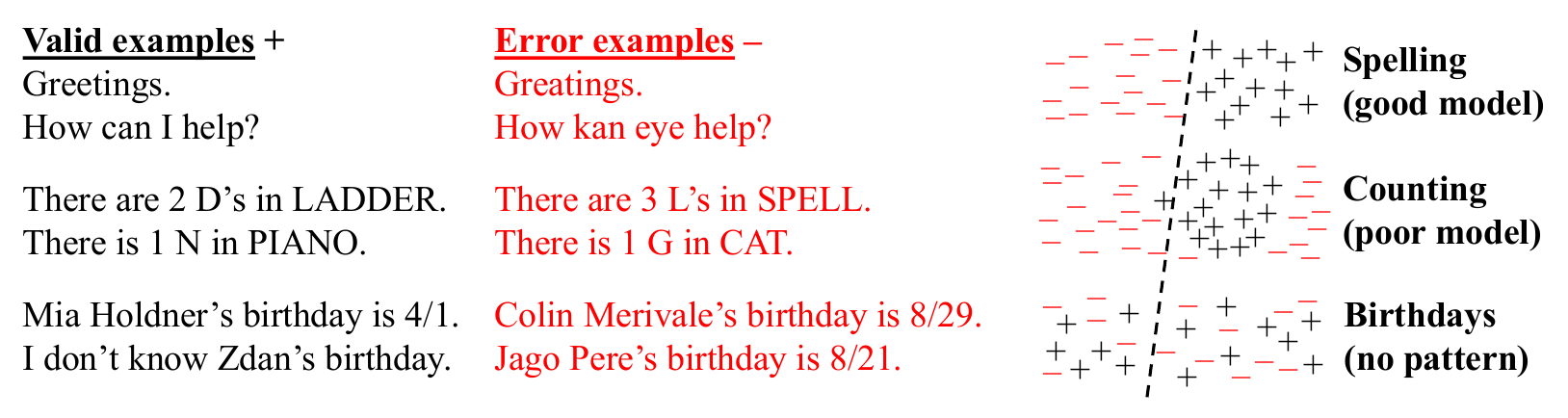}
    \caption{Is-It-Valid requires learning to identify valid generations using labeled $\pm$ examples (left). Classifiers (dashed lines) may be accurate on certain concepts like spelling (top) but errors often arise due to poor models (middle) or arbitrary facts when there is no pattern in the data (bottom).}
    \label{fig:IIV}
\end{figure}


Language models avoid many types of errors such as spelling mistakes, and not all errors are hallucinations. 
The reduction from IIV misclassification to generation illuminates the statistical nature of generative errors. The analysis shows how pretraining directly contributes to errors. Furthermore, it shows that the same  
\textit{statistical factors} contributing to errors in binary classification 
also cause language model errors. Decades of research has shed light on the multifaceted nature of misclassification errors \citep{Domingos2012UsefulThings}.
  \Cref{fig:IIV} (right) illustrates these factors visually: top, separable data classified accurately; middle, a poor model of a linear separator for a circular region; and bottom, no succinct pattern.
\Cref{sec:taxonomy} analyzes several factors, including the following stylized setting with epistemic uncertainty, when there is no pattern in the data. 

This reduction ties together earlier work which covered different types of facts. For example, \citet{kv23} considered a special case of arbitrary facts where there is no learnable pattern in the data, like the earlier birthday hallucination example. We show how the IIV reduction covers this case and recovers their bound that the hallucination rate, after pretraining, should be at least the fraction of training facts that appear once.  For instance, if 20\% of birthday facts appear exactly once in the pretraining data, then one expects base models to hallucinate on at least 20\% of birthday facts. In fact, our analysis strengthens their result to include prompts and IDK responses, both essential components of hallucination.




\subsection{Why hallucinations survive post-training}

The second stage, \textit{post-training}, refines the base model, often with a goal of reducing hallucinations. While the analysis of pretraining covered errors more generally, our analysis of post-training focuses on why overconfident hallucinations are generated rather than omitting information or expressing uncertainty such as IDK. 
We offer a socio-technical explanation for the persistence of hallucinations after post-training and discuss how the field can suppress them. 

As an analogy, consider the following context where humans also occasionally fabricate plausible-sounding information.
When uncertain, students may guess on multiple-choice exams and even bluff on written exams, submitting plausible answers in which they have little confidence. Language models are evaluated by similar tests. In both settings, guessing when unsure maximizes the expected score under a \dichotomous{} 0-1 scheme that awards 1 point for a correct answer and none for blanks or IDKs. Bluffs are often overconfident and specific, such as ``September 30'' rather than ``Sometime in autumn'' for a question about a date. 
Many language-model benchmarks mirror standardized human exams, using \dichotomous{} metrics such as accuracy or pass-rate. Optimizing models for these benchmarks may therefore foster hallucinations. Humans learn the value of expressing uncertainty outside of school, in the school of hard knocks. On the other hand, language models are primarily evaluated using exams that penalize uncertainty. Therefore, they are always in ``test-taking'' mode. Put simply, most evaluations are not aligned.

We are not the first to realize that \dichotomous{} grading does not measure hallucination. 
However, prior work on hallucination evaluation has generally sought after the elusive ``perfect hallucination eval.'' In \Cref{sec:post}, we argue that this is insufficient. We observe that existing primary evaluations overwhelmingly penalize uncertainty, and thus the root problem is the  \textit{abundance of evaluations that are not aligned}. Suppose Model A is an aligned model that correctly signals uncertainty and never hallucinates. Let Model B be similar to Model A except that it never indicates uncertainty and always ``guesses'' when unsure. Model B will outperform A under 0-1 scoring, the basis of most current benchmarks. This creates an ``epidemic'' of penalizing uncertainty and abstention, which we argue that a small fraction of hallucination evaluations won't suffice. The numerous primary evaluations must be adjusted to stop penalizing abstentions when uncertain.

\paragraph{Contributions.} We identify the main statistical drivers of hallucinations, from their pretraining origins to their post-training persistence. A novel connection between supervised and unsupervised learning demystifies their origin, even when training data contain IDK. The persistence of hallucinations, despite extensive work on the problem, is explained by the recognition that hallucination-like guessing is rewarded by most primary evaluations. We discuss statistically rigorous modifications to existing evaluations that pave the way to effective mitigation.

\section{Related work}\label{sec:related}

To the best of our knowledge, the reduction from supervised learning (binary classification) to unsupervised learning (density estimation or self-supervised learning) presented in this work is novel. The general method of reduction between learning problems, however, is a well-established technique for demonstrating that one problem is at least as hard as another \citep[see, e.g.,][]{Beygelzimer2016Reductions}.

A number of surveys and studies have explored the underlying causes of hallucination in language models.  \citet{sun2025why} cite factors such as model overconfidence~\cite{yin2023large}, decoding randomness~\cite{lee2022factuality}, snowballing effects~\cite{zhang2023language}, 
long-tailed training samples~\cite{sun2023head},
misleading alignment training~\cite{wei2023simple}, spurious correlations~\cite{li2022pre}, exposure bias~\cite{bengio2015scheduled}, the reversal curse~\cite{berglundreversal}, and context hijacking~\cite{jeong2024hijacking}. Analogous sources of error have long been studied in broader machine learning and statistical settings \citep{RussellNorvig2020}.

The most closely related theoretical work is by \citet{kv23}, which we show is a special case of our reduction. They connected the Good-Turing missing mass estimates \citep{good_population_1953} to hallucinations, which inspired \Cref{thm:agnostic}.  However, that work does not address uncertainty expressions (e.g., IDK), connections to supervised learning, post-training modifications, and their model did not include prompts.
\citet{active18nonsense} analyze an interactive learning algorithm that queries a validity oracle (e.g., a human) to agnostically train a language model that minimizes hallucinations. Their method is statistically efficient, requiring a reasonable amount of data, but not computationally efficient. Other recent theoretical studies \citep{Kalavasis2025Limits,kleinberg2024language} formalize an inherent trade-off between \textit{consistency} (avoiding invalid outputs) and \textit{breadth} (generating diverse, linguistically rich content). These works demonstrate that for broad classes of languages, any model that generalizes beyond its training data will either hallucinate invalid outputs or suffer mode collapse, failing to produce the full range of valid responses.

Several post-training techniques—such as reinforcement learning from human feedback (RLHF) \citep{ouyang2022training}, reinforcement learning from AI feedback (RLAIF) \citep{bai2022constitutionalaiharmlessnessai}, and direct preference optimization (DPO) \citep{rafailov2023dpo}—have been shown to reduce hallucinations, including conspiracy theories and common misconceptions. \citet{gekhman-etal-2024-fine} show that simple fine-tuning on novel information can initially decrease hallucination rates, only for them to later increase. Further, it has been demonstrated that both natural language queries and internal model activations encode predictive signals about factual accuracy and model uncertainty \citep[e.g.,][]{Kadavath2022LanguageM}. As discussed in our introduction, inconsistencies in a model’s answers to semantically related queries can also be leveraged to detect or mitigate hallucinations \citep{manakul-etal-2023-selfcheckgpt,xue2025verify,agrawal_language_2023}.

Numerous other methods have proven effective in mitigating hallucinations; see, for example, the surveys by \citet{ji2023survey} and \citet{Tian2024Factuality}. In terms of evaluation, several comprehensive benchmarks and leaderboards have recently been introduced \citep[e.g.,][]{bang2025hallulens,hong2024hallucinationsleaderboardopen}. 
However, relatively little work has examined barriers to their adoption. The 2025 AI Index report \citep{aiindex2025}, for instance, notes that hallucination benchmarks “have struggled to gain traction within the AI community.”

Beyond binary expressions of certainty, more nuanced linguistic constructions have been proposed to communicate gradations of uncertainty \citep{mielke-etal-2022-reducing,lin2022teaching,damani2025beyond}. Additionally, the field of pragmatics—which investigates how meaning is shaped by context—has increasing relevance for understanding and improving how language models convey information \citep{pragmatics2025}.

\section{Pretraining Errors}\label{sec:pretrain}

Pretraining produces a base language model $\hat{p}$ that approximates the distribution text drawn from its training distribution $p$. This is the classic ``density estimation'' problem in unsupervised learning, where a \textit{density} is simply a probability distribution over data. In the case of language models, the distribution is over text or multimodal inputs if included. 

The key challenge in proving that base models err is that many language models do not err. The degenerate model which always outputs IDK also avoids errors (assuming IDK is not an error). Similarly, assuming error-free training data, the trivial base model which regurgitates text from a random training example also does not err. However, these two language models fail at density estimation, the basic goal of statistical language modeling as defined below.  Errors are also avoided by the optimal base model $\hat{p} = p$ which matches the training distribution, but this model would require prohibitively large training data. Nonetheless, we show that well-trained base models should still generate certain types of errors.

Our analysis shows that generating valid outputs (i.e., avoiding errors) is harder than classifying output validity. This reduction enables us to apply the lens of computational learning theory, where errors are expected and understood, to error mechanisms in generative models. A language model is initially defined as a probability distribution over text and later \textit{prompts} are incorporated (\Cref{sec:prompts}); both settings share the same intuition. Examples without prompts include birthday statements such as those of \Cref{fig:IIV}, while a prompted model might be queried for a specific individual's birthday.

\paragraph{Not merely autocomplete.}
Our analysis applies to general density estimation and not only ``next-word predictors'' even though many language models are trained using \textit{self-supervised learning} to predict each word based on the previous words. It is tempting to attribute hallucinations to poorly chosen prefixes (e.g., ``Adam Kalai was born on'') for which the language model cannot provide valid completions. However, from a purely statistical perspective, ignoring computation, the autocomplete view\footnote{Mathematically, any distribution $p$ induces a distribution of completions $p(w_{i}w_{i+1}\ldots \mid w_1 w_2 \ldots w_{i-1})$ for every prefix of words $w_1\ldots w_{i-1}$ in its support.} of language models is no more significant than the fact that any human speaker produces one word at a time. Our analysis suggests that errors arise from the very fact that the models are being fit to the underlying language distribution, though the specific architecture can introduce additional errors.

\subsection{The reduction without prompts}\label{sec:noprompts}

Without prompts, a base model $\hat{p}$ is a probability distribution over a set $\calX$. As discussed earlier, each \textit{example} $x \in \calX$ represents a ``plausible'' string, e.g., a document.\footnote{We assume that $\calX$ is finite for simplicity. See \Cref{sec:limit} for further discussion of errors and plausibility.} 
The examples $\calX=\calE\cup \calV$ are partitioned into errors $\calE$ and valid examples $\calV$, for nonempty disjoint sets $\calE,\calV$. The  error rate of base model $\hat{p}$ is denoted by,
\begin{equation}\label{eq:gerr}
\gerr:=\hat{p}(\calE) = \Pr_{x \sim \hat{p}}[x \in \calE].
\end{equation}
Training data are assumed to come from a noiseless \textit{training distribution} $p(\calX)$, i.e., where $p(\calE)=0$. As discussed, with noisy training data and partly correct statements, one may expect \textit{even higher error rates} than our lower bounds. 

We now formalize the IIV binary-classification problem, introduced in the introduction. IIV is specified by the target function $f:\calX \rightarrow \{-,+\}$ to be learned (membership in $\calV$) and the distribution $D$ over examples $\calX$ (a 50/50 mix of samples from $p$ and uniformly random errors):
$$ D(x) :=\begin{cases}
    p(x)/2 & \text{ if } x \in \calV,\\
    1/2|\calE| & \text{ if } x\in \calE,
\end{cases}
\text{ and }
f(x) :=\begin{cases}
    + & \text{ if } x \in \calV,\\
    - & \text{ if } x \in \calE.
\end{cases}$$

Our analysis lower bounds the error rate $\gerr= \hat{p}(\calE)$ in terms of IIV's aforementioned \textit{misclassification rate} 
$\cerrt$: 
\begin{equation}\label{eq:cerrt}  \cerrt := \Pr_{x \sim D}\left[\hatft(x) \ne f(x)\right], \text{ where }  \hatft(x) :=\begin{cases}
    + & \text{ if } \hat{p}(x) > 1/|\calE|,\\
    - & \text{ if } \hat{p}(x) \le 1/|\calE|.
\end{cases}\end{equation}
The base model is thus used as an IIV classifier, in our reduction, by thresholding the base model's probability at a certain threshold $1/|\calE|$. Note that such probabilities $\hat{p}(x)$ can generally be efficiently computed for base models (though efficient computation is not necessary for the lower-bounds to be meaningful).

\begin{corollary}\label{cor:main}
    For any training distribution $p$ such that $p(\calV)=1$ and any base model $\hat{p}$,
    $$
    \gerr \ge 2\cdot \cerrt - \frac{|\calV|}{|\calE|} - \deltat,
    $$
    for $\gerr,\cerrt$ from \Cref{eq:gerr,eq:cerrt}, and $\deltat := \left|\hat{p}(\calA)-p(\calA)\right|$ for $\calA :=  \left\{x \in \calX ~\middle|~ \hat{p}(x) > 1/|\calE|\right\}$.\end{corollary}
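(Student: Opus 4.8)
The plan is to evaluate the IIV misclassification rate $\cerrt$ \emph{exactly} in terms of the set $\calA=\{x\in\calX:\hat p(x)>1/|\calE|\}$ from the corollary statement, and then convert that expression into a lower bound on $\gerr=\hat p(\calE)$ using only the threshold $1/|\calE|$ that defines $\calA$ together with the closeness parameter $\deltat$.

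First I would split $\cerrt$ along the two halves of the mixture $D$. On the $+$ branch (mass $1/2$, distributed as $p$, hence supported on $\calV$ since $p(\calV)=1$), the classifier $\hat f$ errs exactly on those $x\in\calV$ with $\hat p(x)\le 1/|\calE|$, i.e.\ on $\calV\setminus\calA$, contributing $\tfrac12\,p(\calV\setminus\calA)$. On the $-$ branch (mass $1/2$, uniform on $\calE$), it errs exactly on those $x\in\calE$ with $\hat p(x)>1/|\calE|$, i.e.\ on $\calE\cap\calA$, contributing $\tfrac12\,|\calE\cap\calA|/|\calE|$. Hence
$$2\cerrt \;=\; p(\calV\setminus\calA)\;+\;\frac{|\calE\cap\calA|}{|\calE|}.$$
Since $p(\calE)=0$ and $p(\calX)=1$, we have $p(\calV\setminus\calA)=p(\calX\setminus\calA)$, and because $\calX\setminus\calA$ is the complement of $\calA$ we may rewrite $\deltat=\bigl|\hat p(\calX\setminus\calA)-p(\calX\setminus\calA)\bigr|$; thus $p(\calV\setminus\calA)\le \hat p(\calX\setminus\calA)+\deltat=\hat p(\calV\setminus\calA)+\hat p(\calE\setminus\calA)+\deltat$.

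Next I would invoke the defining threshold of $\calA$ in both directions. Every $x\in\calV\setminus\calA$ has $\hat p(x)\le 1/|\calE|$, so $\hat p(\calV\setminus\calA)\le |\calV\setminus\calA|/|\calE|\le |\calV|/|\calE|$; every $x\in\calE\cap\calA$ has $\hat p(x)>1/|\calE|$, so $|\calE\cap\calA|/|\calE|\le \hat p(\calE\cap\calA)$. Substituting both bounds into the previous display gives
$$2\cerrt \;\le\; \frac{|\calV|}{|\calE|}+\hat p(\calE\setminus\calA)+\deltat+\hat p(\calE\cap\calA) \;=\; \frac{|\calV|}{|\calE|}+\hat p(\calE)+\deltat,$$
and rearranging yields $\gerr\ge 2\cerrt-|\calV|/|\calE|-\deltat$.

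There is no serious obstacle: the argument is a few lines once the exact identity for $2\cerrt$ is in hand. The points needing care are (i) using $p(\calV)=1$ both to locate the support of the positively-labeled examples and to replace $p(\calV\setminus\calA)$ by the complement mass $p(\calX\setminus\calA)$ so that $\deltat$ applies, and (ii) keeping the two threshold estimates pointed in compatible directions---on the valid side one needs the upper bound ``little $\hat p$-mass outside $\calA$,'' while on the error side one needs the lower bound ``lots of $\hat p$-mass inside $\calA$.'' It is also worth remarking that the two slack terms are precisely the ones one expects: $|\calV|/|\calE|$ is negligible whenever plausible errors vastly outnumber valid strings (e.g., the many possible wrong birthdays), and $\deltat$ is small whenever $\hat p$ estimates the mass of the single event $\calA$ well.
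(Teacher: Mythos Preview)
Your proposal is correct and follows essentially the same argument as the paper's proof of \Cref{thm:main} (of which \Cref{cor:main} is the promptless special case): both use the exact identity $2\cerrt = p(\calV\setminus\calA) + |\calE\cap\calA|/|\calE|$, the threshold bound $\hat p(\calV\setminus\calA)\le |\calV|/|\calE|$, the threshold bound $|\calE\cap\calA|/|\calE|\le \hat p(\calE\cap\calA)$, and the calibration bound $|p(\calX\setminus\calA)-\hat p(\calX\setminus\calA)|\le\deltat$. The paper organizes the bookkeeping slightly differently---splitting $\gerr$ into above- and below-threshold pieces and lower-bounding each by the matching piece of $2\cerrt$---but the substantive inequalities are identical.
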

Since this relationship holds for \textit{any} base model $\hat{p}$, it immediately implies that all base models will err on inherently unlearnable IIV facts (such as the birthdays absent from the training data) where $\cerrt$ is necessarily large, and where  $\delta$ and $|\calV|/|\calE|$ are small (e.g., for each person there are 364 times more incorrect birthday claims in $\calE$ than correct ones in $\calV$, plus IDK). The corollary above follows immediately as a special case of \Cref{thm:main} which covers the more general case with  prompts. 
\Cref{thm:indep} later uses this general result to provide lower-bounds for an intuitive special case. \Cref{thm:agnostic,thm:mc} address small $|\calE|$, e.g., $|\calE|=1$ for True/False questions. The constant 2 in the above bound is relatively tight: for large $|\calE|$ and small $\delta$, $\cerrt$ could be near 1/2 for unlearnable concepts while $\gerr \le 1$. \Cref{cor:main} also implies that $\cerrt \lesssim 1/2$.

\paragraph{Hallucination errors.} To apply the error analysis to hallucinations, one may consider $\calE$ to be the set of plausible generations containing (one or more) plausible falsehoods. Note that a common alternate definition of hallucinations is as  \textit{generations that are not grounded in the training data} (or prompt). Fortunately, the lower-bound above also applies to this notion because we have assumed only valid training data, i.e., a generated factual error cannot be grounded in factually correct training data.

\paragraph{Calibration.}
We now argue why $|\deltat|$ is a measure of (mis)calibration that is small after pretraining. 
Note that \textit{without any knowledge of the language}, one can achieve $\deltat=0$ by simply taking the uniform distribution $\hat{p}(x)=1/|\calX|$, and thus $\deltat=0$ does not require $p=\hat{p}$. An auditor can trivially estimate $\deltat$ by comparing the fractions of responses satisfying $\hat{p}(x) > 1/|\calE|$ versus $\hat{p}(\hat{x}) > 1/|\calE|$ using sets of training samples $x \sim p$ and synthetic generations $\hat{x} \sim \hat{p}$. Inspired by \citet{dawid_well-calibrated_1982}, one may think of an analogy to a weather forecaster predicting the probability of rain each day. A minimal calibration requirement would be whether their average prediction matched the average fraction of rain. One could also require these two to match on days when the forecast was $>t$ for some threshold $t \in [0,1]$. \citet{dawid_well-calibrated_1982} introduced the more stringent requirement that for \textit{every} $t \in [0,1]$, among days on which the prediction is $t$ it rains about a $t$ fraction of the time.

Here is a particularly simple justification for why $\deltat$ is typically small for the standard pretraining cross-entropy objective,
\begin{equation}\label{eq:crossent}
\calL(\hat p) = \E_{x \sim p}[-\log \hat p(x)].
\end{equation}
Consider rescaling the probabilities of the positively-labeled examples by a factor \(s > 0\) and normalizing:
\[
\hat p_s(x) :\propto
\begin{cases}
s \cdot \hat p(x) & \text{if } \hat p(x) > 1/|\calE|, \\
\hat p(x) & \text{if } \hat p(x) \le 1/|\calE|.
\end{cases}
\]
Then, a simple calculation shows that $\deltat$ is the magnitude of the derivative of the loss with respect to the scaling factor $s$, evaluated at $s=1$:
$$\deltat = \left|~ \frac{d}{ds} \calL(\hat p_s) \Big|_{s=1} ~\right|.$$
If $\deltat \ne 0$, then rescaling by some $s \ne 1$ would reduce the loss, so the loss is not at a local minimum. For any class of language models 
powerful enough to approximate such simple rescaling, local optimization should yield small $\deltat$.  
Note that $\delta$, being defined at a single threshold $t=1/|\calE|$ is weaker than notions such as Expected Calibration Error (ECE) which integrate over thresholds $t$.

\paragraph{Hallucinations are inevitable \textit{only for base models}.} 
Many have argued that hallucinations are inevitable \citep{Jones2025Hallucinations,Leffer2024SAInevitable,Xu2024HallucinationInevitable}. However,
a non-hallucinating model could be easily created, using a question-answer database and a calculator, which answers a fixed set of questions such as ``What is the chemical symbol for gold?'' and well-formed mathematical calculations such as ``3 + 8'', and otherwise outputs IDK. Moreover, the error lower-bound of \Cref{cor:main} implies that language models which do not err must not be calibrated, i.e., $\delta$ must be large. 
As our derivations show, calibration---and, hence, errors---is a natural consequence of the standard cross-entropy objective. Indeed, empirical studies (\Cref{fig:gpt4}) show that \textit{base models} are often found to be calibrated, in contrast to post-trained models which may deviate from cross-entropy in favor of reinforcement learning. 






\begin{figure}[t]
    \centering
    \makebox[0pt]{
    \includegraphics[width=0.49\linewidth]{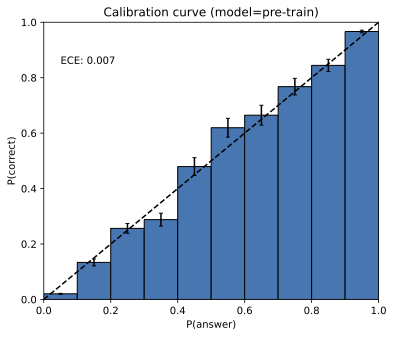}\hspace{0.02\linewidth}
    \includegraphics[width=0.49\linewidth]{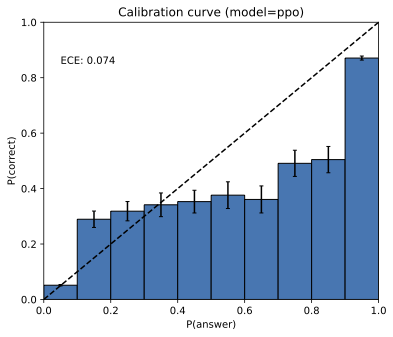}
    }
    \caption{GPT-4 calibration histograms before (left) and after (right) reinforcement learning \citep[][Figure 8, reprinted with permission]{openai_gpt-4_2023}. These plots are for multiple-choice queries where the plausible responses are simply A, B, C, or D. The pretrained model is well calibrated.}
    \label{fig:gpt4}
\end{figure}

\subsection{The reduction with prompts}\label{sec:prompts}
Henceforth, we generalize the setting of \Cref{sec:noprompts} to include prompts (contexts) $c \in \calC$ drawn from a \textit{prompt distribution} $\mu$. Each example $x=(c,r)$ now consists of a prompt $c$ and plausible response $r$. The analysis above corresponds to the special case in which $\mu$ assigns probability 1 to the empty prompt. For a given prompt $c \in \calC$, let $\calV_c:=\{r \mid (c,r) \in \calV\}$ be the valid responses and $\calE_c:=\{r \mid (c,r) \in \calE\}$ be the erroneous responses. The training distribution and base model are now conditional response distributions $p(r \mid c), \hat{p}(r \mid c)$. For notational convenience, we extend these to joint distributions on $\calX$ by $p(c,r) := \mu(c)p(r \mid c)$  and $\hat{p}(c,r) := \mu(c)\hat{p}(r \mid c)$, so that  still $\gerr:= \hat{p}(\calE) = \sum_{(c,r) \in \calE}\mu(c)\hat{p}(r \mid c)$ and $p(\calE)=0$. 

Training distribution examples therefore correspond to valid ``dialogues,'' as in the case of distillation \citep{vicuna2023,anand2023gpt4all}. Although assuming that the training data contain model dialogues drawn from the same prompt distribution is unrealistic, even higher error rates may be expected when the assumption fails. The IIV problem with prompts has the same target function $f(x):=+$ iff $x \in \mathcal{V}$, but the generalized distribution $D$ selects, with equal probability either $x \sim p$ or $x=(c,r)$ for $c \sim \mu$ and uniformly random $r \in \calE_c$. 
Finally, the classifier $\hat{f}(c, r)$ is now $+$ iff $\hat{p}(r \mid c) > 1/\min_c |\calE_c|$. 
\Cref{cor:main} is thus clearly a special case of,
\begin{theorem}\label{thm:main}
    For any training distribution $p$ such that $p(\calV)=1$ and any base model $\hat{p}$,
    $$
    \gerr \ge 2\cdot \cerrt - \frac{\max_c|\calV_c|}{\min_c|\calE_c|} - \deltat,
    $$
    where $\deltat := \left|\hat{p}(\calA)-p(\calA)\right|$ for $\calA :=  \left\{(c, r) \in \calX ~\middle|~ \hat{p}(r \mid c) > 1/\min_c |\calE_c|\right\}.$ 
\end{theorem}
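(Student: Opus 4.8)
The plan is to expand the IIV misclassification rate $\cerrt$ into its false-negative and false-positive contributions on the two halves of the mixture $D$, trade one of them for the calibration quantity $\deltat$, and bound what remains by $\gerr$ plus the cardinality ratio. Write $m:=\min_c|\calE_c|$ and recall that $\calA=\{(c,r)\in\calX:\hat p(r\mid c)>1/m\}$ is exactly the set the classifier $\hat f$ labels $+$. The distribution $D$ draws $x\sim p$ (so $x\in\calV$ and $f(x)=+$, using $p(\calV)=1$) with probability $1/2$, and otherwise draws $x=(c,r)$ with $c\sim\mu$ and $r$ uniform in $\calE_c$ (so $x\in\calE$ and $f(x)=-$); since $\cerrt$ is the average of the conditional error rates over these two halves, the first step is to note that $2\cerrt = \bigl(1-p(\calA)\bigr) + \mathrm{FP}$, where $1-p(\calA)=\Pr_{x\sim p}[x\notin\calA]$ is the false-negative rate on the $p$-half and $\mathrm{FP}:=\Pr_{c\sim\mu,\,r\sim\mathrm{Unif}(\calE_c)}[(c,r)\in\calA]$ is the false-positive rate on the error half. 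By the definition of $\deltat$ we have $p(\calA)\ge\hat p(\calA)-\deltat$, hence $2\cerrt-\deltat\le 1-\hat p(\calA)+\mathrm{FP}=\hat p(\calX\setminus\calA)+\mathrm{FP}$.

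Next I would split $\calX\setminus\calA$ using the partition $\calX=\calV\sqcup\calE$. On the valid part, every $(c,r)$ has $r\in\calV_c$ and $\hat p(r\mid c)\le 1/m$, and there are at most $|\calV_c|\le\max_c|\calV_c|$ such responses per prompt, so summing these probabilities and taking the expectation over $c\sim\mu$ gives $\hat p\bigl((\calX\setminus\calA)\cap\calV\bigr)\le\max_c|\calV_c|/m$. On the error part, $\hat p\bigl((\calX\setminus\calA)\cap\calE\bigr)=\gerr-\hat p(\calE\cap\calA)$ straight from $\gerr=\hat p(\calE)$. Substituting, $2\cerrt-\deltat\le\gerr+\max_c|\calV_c|/m-\hat p(\calE\cap\calA)+\mathrm{FP}$, so it remains to absorb the false-positive term, i.e.\ to show $\mathrm{FP}\le\hat p(\calE\cap\calA)$.

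The hard part will be this last inequality, which is also where the constant $2$ originates. Conditioned on a prompt $c$, each of the $k_c$ responses $r\in\calE_c$ with $\hat p(r\mid c)>1/m$ contributes more than $1/m\ge 1/|\calE_c|$ to $\hat p(\calE\cap\calA)$, so the conditional error mass $\hat p(\calE\cap\calA\mid c)$ is at least $k_c/|\calE_c|$, which is exactly the conditional false-positive probability; averaging over $c\sim\mu$ gives $\hat p(\calE\cap\calA)\ge\mathrm{FP}$. Intuitively, the false-positive term is not additional loss: it only ever charges responses that already carry at least a $1/|\calE_c|$ share of $\hat p$'s mass on $\calE$, and that share is already part of $\gerr$. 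Plugging this into the previous display, $-\hat p(\calE\cap\calA)+\mathrm{FP}\le 0$ and hence $\gerr\ge 2\cerrt-\deltat-\max_c|\calV_c|/\min_c|\calE_c|$. Everything else is routine bookkeeping: the two decompositions of $\hat p(\calX\setminus\calA)$ and $\hat p(\calE)$ over $\calV\sqcup\calE$, keeping the single global threshold $1/m$ consistent in both the $\calV$-bound and the cancellation (rather than slipping to per-prompt thresholds $1/|\calE_c|$), and noting the implicit nondegeneracy $\calE_c\neq\emptyset$ that is needed even to sample the error half of $D$.
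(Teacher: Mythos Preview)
Your argument is correct and is essentially the paper's proof, just reorganized: the paper partitions both $\gerr$ and $\cerrt$ into their above-threshold and below-threshold pieces and compares termwise, while you start from $2\cerrt$ and bound it above. The four key ingredients---the FN/FP split of $\cerrt$, trading $p(\calA)$ for $\hat p(\calA)$ via $\deltat$, the bound $\hat p(\calB\cap\calV)\le\max_c|\calV_c|/m$, and the cancellation $\mathrm{FP}\le\hat p(\calE\cap\calA)$ (equivalently the paper's $\hat p(\calA\setminus\calV)\ge 2D(\calA\setminus\calV)$)---are identical in both.
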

Generalizing the rescaling $\hat{p}_s(r \mid c)$ (normalizing per prompt, still with single parameter $s$) again justifies a small $\deltat=\bigl|\frac{d}{ds}\calL(\hat{p}_s)|_{s=1}\bigr|$, now for $\calL(\hat{p}) := \sum_{(c, r)\in \calX} -\mu(c) \log \hat{p}(r \mid c)$.

\subsection{Error factors for base models}\label{sec:taxonomy}
Decades of research have elucidated the statistical factors contributing to misclassifications (errors in binary classification). We can leverage this prior understanding to enumerate factors behind hallucinations and other generative errors, including: statistical complexity, as in birthdays (\Cref{sec:indep}); \textit{poor models}, as in letter counting (\Cref{sec:trigram}); and additional factors like GIGO, as in conspiracy theories (\Cref{sec:additional}).

\subsubsection{Arbitrary-fact hallucinations}\label{sec:indep}

When there is no succinct pattern that explains the target function, there is epistemic uncertainty meaning that necessary knowledge is absent from the training data. The Vapnik-Chervonenkis dimension \citep{VC} $\mathrm{VC}(\calF)$ characterizes the worst-case number of examples needed to learn a family $\calF$ of functions $f: \calX \rightarrow \{-,+\}$, with high probability. Families with high $\mathrm{VC}(\calF)$ dimension may require prohibitively many samples to learn. We consider a natural special case of high VC dimension: random arbitrary facts. In particular, this section considers valid responses (other than IDK) which are random and independent across prompts. 
\begin{definition}[Arbitrary Facts]
The following are fixed: an arbitrary prompt distribution $\mu(c)$, an $\IDK$ response and, for each prompt $c$: a response set $\calR_c$ and a probability of answering $\alpha_c \in [0,1]$. Independently for each $c$, a single correct answer $a_c \in \calR_c$ is chosen uniformly at random. Finally, $p(a_c \mid c)=\alpha_c$ and $p(\IDK \mid c) = 1-\alpha_c$ for each $c \in \calC$. Thus $\calE_c = \calR_c \setminus \{a_c\}$ and $\calV_c = \{a_c, \IDK\}$.
\end{definition}
It is assumed that there is a single way to write any given fact, which can be done as in the lead birthday example where the format was specified. However, we again note that one may expect \textit{even more hallucinations} with multiple ways to state each fact. In the case of fixed-format birthdays, $|\mathcal{E}_c|=364$ and notable people whose birthdays are discussed often would have high $\mu(c)$.
Notable birthdays like Einstein's appear multiple times, whereas others may only occur once, e.g., in an obituary. Large language models seldom err on frequently referenced facts, e.g., Einstein's birthday or dissertation title.

Our lower-bound for hallucinations is based on the fraction of prompts appearing just once in the training data, ignoring  IDKs. 
\begin{definition}[Singleton rate]\label{def:sing}
A prompt $c \in \calC$ is a \emph{singleton} if it appears exactly once in the $N$ training data $\bigl\langle(c^{(i)}, r^{(i)})\bigr\rangle_{i=1}^N$ without abstention, i.e., $|\{i : c^{(i)} = c \wedge r^{(i)} \ne \IDK\}|=1$. Let $\calS\subseteq \calC$ denote the set of singletons and $$\sing=\frac{|\calS|}{N}$$ denote the fraction of training singletons.
\end{definition}
The singleton rate builds on Alan Turing's elegant ``missing-mass'' estimator \citep{good_population_1953}, which gauges how much probability is still assigned to outcomes that have not yet appeared in a sample from a distribution. Concretely, Turing's estimate of the unseen-event probability is the fraction of samples appearing exactly once. Intuitively, singletons act as a proxy for how many more novel outcomes you might encounter in further sampling, so their empirical share becomes the estimate for the entire “missing’’ portion of the distribution. We now state our bounds for Arbitrary Facts.
\begin{theorem}[Arbitrary Facts]\label{thm:indep}
In the Arbitrary Facts model, any algorithm which takes $N$ training samples and outputs $\hat{p}$ satisfies, with probability $\ge 99\%$ over $\vec{a}=\langle a_c\rangle_{c \in \calC}$ and the $N$ training examples:
$$\gerr \ge \sing   - \frac{2}{\min_c |\calE_c|}- \frac{35 + 6 \ln N}{\sqrt{N}}- \deltat.$$ Moreover, there is an efficient algorithm outputting calibrated $\hat{p}$ ($\delta=0$) that with probability $\ge 99\%$,
$$\gerr \le\sing - \frac{\sing}{\max_c |\calE_c| +1} + \frac{13}{\sqrt{N}}.$$
\end{theorem}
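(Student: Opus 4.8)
The statement packages a lower bound (every learner hallucinates at rate $\gtrsim\sing$) with an almost matching upper bound (an explicit calibrated learner achieves rate $\lesssim\sing$), and both rest on the Good--Turing fact that the singleton rate estimates the \emph{missing mass} of the prompt distribution.

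\textbf{Lower bound.} The plan is to reduce to \Cref{thm:main}. Since $\calV_c=\{a_c,\IDK\}$ has size $2$, that theorem already gives $\gerr\ge 2\cerrt-\tfrac{2}{\min_c|\calE_c|}-\deltat$, so it is enough to prove $2\cerrt\ge\sing-O(\tfrac{\ln N}{\sqrt N})$ with probability at least $99\%$. Let $U$ be the random set of prompts $c$ that occur in the $N$ training examples only with $\IDK$ responses (equivalently, never occur with a non-$\IDK$ response). The key observation is that for $c\in U$ the answer $a_c$ is uniform on $\calR_c$ and \emph{independent of the learner's output $\hat p$}: the chance $(1-\alpha_c)$ of an $\IDK$ response to $c$ does not depend on $a_c$, and answers at other prompts are independent. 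Fix the training data and write $P_c:=\{r\in\calR_c:\hat p(r\mid c)>1/\min_{c}|\calE_c|\}$ for the responses that $\hat f$ confidently labels valid at $c$, with $t_c:=|P_c|/|\calR_c|$. Expanding the two halves of $\cerrt$ --- the chance a $p$-sample is labeled $-$, and the chance a uniform error is labeled $+$ --- and averaging over $a_c$, prompt $c\in U$ contributes $\tfrac12\mu(c)\bigl(\alpha_c(1-t_c)+t_c\bigr)\ge\tfrac12\mu(c)\alpha_c$; the cross term $t_c$ coming from the error half is precisely the penalty for over-confident guessing, and is what prevents the bound from degenerating. Summing over $c\in U$ gives $\E_{\vec a}[\cerrt]\ge\tfrac12\sum_{c\in U}\mu(c)\alpha_c$. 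Now $\sum_{c\in U}\mu(c)\alpha_c$ is (up to the non-$\IDK$ fraction) the missing mass of the sub-sample of non-$\IDK$ training examples, which the singleton rate $\sing$ estimates by Good--Turing; conditioning on the concentrated size of that sub-sample and invoking the distribution-free concentration of the Good--Turing estimator yields $\sum_{c\in U}\mu(c)\alpha_c\ge\sing-O(\tfrac1{\sqrt N})$ with high probability. Finally, the per-prompt contributions to $\cerrt$ are independent over $c\in U$ with fluctuation $O\!\bigl(\mu(c)(\alpha_c+\tfrac1{\min_c|\calE_c|})\bigr)$, which a Bernstein/bounded-differences inequality concentrates about its mean; combining the three estimates inside \Cref{thm:main} produces the stated bound, the constants $35$ and $6\ln N$ being the residue of the two concentration steps.

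\textbf{Upper bound.} Here I would construct an explicit efficient learner. Because $\mu$ and the parameters $\alpha_c,|\calR_c|$ are fixed in the model, the learner may use them, and from the data it reads off $a_c$ for every prompt that appears with a non-$\IDK$ response (in this model any such response is $a_c$). Set $\hat p(\cdot\mid c):=p(\cdot\mid c)$ --- mass $\alpha_c$ on $a_c$ and $1-\alpha_c$ on $\IDK$ --- for every such prompt; for every other prompt put mass $1-\alpha_c$ on $\IDK$ and spread the remaining $\alpha_c$ uniformly over $\calR_c$. Since $\alpha_c/|\calR_c|\le 1/|\calR_c|\le 1/(\min_c|\calE_c|+1)<1/\min_c|\calE_c|$, every spread-out response lands below the classifier threshold, so at an unseen prompt the only possible member of $\calA$ is $(c,\IDK)$, which carries mass $1-\alpha_c$ under both $\hat p$ and $p$; at a seen prompt $\hat p$ equals $p$. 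Hence $\hat p(\calA)=p(\calA)$ prompt by prompt, so $\deltat=0$ exactly. This learner errs only at prompts in $U$, and there at rate $\hat p(\calE_c\mid c)=\alpha_c\tfrac{|\calE_c|}{|\calR_c|}=\alpha_c\bigl(1-\tfrac1{|\calE_c|+1}\bigr)$, a quantity that does not depend on $\vec a$; thus $\gerr=\sum_{c\in U}\mu(c)\alpha_c\bigl(1-\tfrac1{|\calE_c|+1}\bigr)\le\bigl(1-\tfrac1{\max_c|\calE_c|+1}\bigr)\sum_{c\in U}\mu(c)\alpha_c$. Since $\gerr$ is now a deterministic function of the sample, only one concentration step is needed: the same Good--Turing bound in the direction $\sum_{c\in U}\mu(c)\alpha_c\le\sing+O(\tfrac1{\sqrt N})$ gives $\gerr\le\sing-\tfrac{\sing}{\max_c|\calE_c|+1}+\tfrac{13}{\sqrt N}$.

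\textbf{Main obstacle.} The conceptual skeleton --- reduce to $\cerrt$, observe that answers at novel prompts are information-theoretically hidden, and identify the singleton rate with missing mass --- is short. The real work, and what I expect to be the crux, is the high-probability bookkeeping: making ``missing mass $\approx$ singleton rate'' rigorous through the $\IDK$-thinning of the sample, and controlling $\cerrt$ --- whose per-prompt fluctuations scale like $\mu(c)$ rather than $1/N$ --- around its $\vec a$-mean uniformly over all $\mu,\vec\alpha$. Pushing the residual loss down to $O(\tfrac{\ln N}{\sqrt N})$ (rather than something coarser) is exactly where the explicit constants enter, and is the step requiring the most care.
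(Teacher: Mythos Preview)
Your skeleton matches the paper's proof almost exactly: the reduction to \Cref{thm:main} with $|\calV_c|=2$, the observation that $a_c$ is independent of $\hat p$ for unanswered prompts $c\in U$, the identification of $\sum_{c\in U}\mu(c)\alpha_c$ with a Good--Turing missing mass (the paper's $\MM$), and the explicit memorize-and-spread learner for the upper bound with the same calibration and error calculations. The paper also computes $\E[\gamma_c]=1/2$ per unanswered prompt, equivalent to your $\alpha_c(1-t_c)+t_c\ge\alpha_c$.

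The one concrete idea you are missing is precisely the step you flag as the obstacle. Saying ``Bernstein/bounded-differences concentrates $\sum_{c\in U}\mu'(c)\gamma_c$'' is not enough: the per-term range is $\mu'(c):=\mu(c)\alpha_c$, and without further input the relevant sum $\sum_{c\in U}(\mu'(c))^2$ (or $\max_{c\in U}\mu'(c)$) could be $\Theta(1)$, so the concentration bound would be vacuous. The paper's key device is to first show that with probability $\ge 1-\gamma/3$ every \emph{unanswered} prompt has $\mu'(c)\le \zeta:=\tfrac{1}{N}\ln\tfrac{3N}{\gamma}$, simply because a prompt with $\mu'(c)>\zeta$ is answered in $N$ samples except with probability $(1-\zeta)^N\le e^{-\zeta N}$, and there are at most $1/\zeta$ such prompts. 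Conditioning on this event, $\sum_{c\in U}(\mu'(c))^2\le\zeta$ and Hoeffding over the independent $a_c$'s gives the $O\!\bigl(\sqrt{\zeta\ln(1/\gamma)}\bigr)=O(\ln N/\sqrt N)$ deviation. This ``unseen $\Rightarrow$ light'' step is exactly what turns the $\mu(c)$-scale fluctuations you note into $1/N$-scale ones, and is the source of the $6\ln N$ in the final constant; once you add it, your argument goes through as written.
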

An earlier version of this paper presented a related theorem that omitted prompts and abstentions \citep{kv23}. The proof is in \Cref{ap:arbitrary}. Follow-up work by \citet{miao2025hallucination} provides an empirical study of hallucinations, singleton rate, and calibration.

\subsubsection{Poor models}\label{sec:trigram}

Misclassifications can also arise when the underlying model is poor because: (a) the model family cannot represent the concept well, such as linear separators approximating circular regions, or (b) the model family is sufficiently expressive but the model itself is not a good fit. Agnostic Learning \citep{kearns_toward_1994} addresses (a) by defining the minimal error rate of any classifier in a given family $\calG$ of classifiers $g:\calX \rightarrow\{-,+\}$:
$$\mathrm{opt}(\calG):= \min_{g \in \calG} \Pr_{x \sim D}[g(x) \ne f(x)] \in [0,1].$$
If $\mathrm{opt}(\calG)$ is large, then any classifier in $\calG$ will have high misclassification rate. In our case, given a language model $\hat{p}_\theta$ parameterized by $\theta\in \Theta$, consider the family of thresholded-language-model classifiers:
$$\calG := \bigl\{g_{\theta,t} ~\bigm|~ \theta \in \Theta, t\in [0,1]\bigr\}, \text{ where }g_{\theta,t}(c, r) := \begin{cases}
+ & \text{ if }\hat{p}_\theta(r \mid c) > t,\\
- & \text{ if }\hat{p}_\theta(r \mid c) \le t.
\end{cases}$$
It follows immediately from \Cref{thm:main} that
$$
    \gerr \ge 2\cdot \mathrm{opt}(\calG) - \frac{\max_c|\calV_c|}{\min_c|\calE_c|} - \deltat.
$$
When exactly one correct response exists per context  (i.e., standard multiple choice, without IDK), the calibration term can be removed and bounds can be achieved even for $C=2$ choices.
\begin{theorem}[Pure multiple-choice]\label{thm:agnostic}
Suppose $|\calV_c|=1$ for all $c\in \calC$ and let $C=\min_c |\calE_c|+1$ be the number of choices. Then, 
    $$
    \gerr \ge 2\left(1-\frac{1}{C}\right)\cdot \mathrm{opt}(\mathcal{G})  
    $$
\end{theorem}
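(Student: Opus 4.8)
The plan is to avoid plugging $|\calV_c|=1$ and $\min_c|\calE_c|=C-1$ directly into \Cref{thm:main} — that route retains the slack terms $\tfrac{1}{C-1}$ and $\deltat$ and becomes vacuous at $C=2$ — and instead argue directly by averaging the IIV misclassification rate over a uniformly random threshold. First I would fix the base model $\hat p=\hat p_\theta$ and, for each prompt $c$, write $q_c:=\hat p(a_c\mid c)$ for the mass it assigns to the unique valid response $a_c$ (so $\calV_c=\{a_c\}$) and $k_c:=|\calE_c|\ge C-1$. Since $\hat p(\cdot\mid c)$ is a probability distribution on the choices $\calX_c=\{a_c\}\cup\calE_c$, we have $\sum_{r\in\calE_c}\hat p(r\mid c)=1-q_c$, so $\gerr=\hat p(\calE)=\E_{c\sim\mu}\bigl[\sum_{r\in\calE_c}\hat p(r\mid c)\bigr]=\E_{c\sim\mu}[1-q_c]$.

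Next I would write out the misclassification rate of the threshold classifier $g_{\theta,t}\in\calG$ under $D$. Because $|\calV_c|=1$ forces $p(a_c\mid c)=1$ (no IDK), a draw from $D$ is, with probability $\tfrac12$, a positive example $(c,a_c)$ with $c\sim\mu$, and with probability $\tfrac12$, a negative example $(c,r)$ with $c\sim\mu$ and $r$ uniform in $\calE_c$. Hence $\Pr_{x\sim D}[g_{\theta,t}(x)\ne f(x)]=\tfrac12\,\E_{c\sim\mu}[\1[q_c\le t]]+\tfrac12\,\E_{c\sim\mu}\bigl[\tfrac{1}{k_c}\,|\{r\in\calE_c:\hat p(r\mid c)>t\}|\bigr]$. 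The crucial step is then to integrate over $t$ uniform on $[0,1]$: using $\E_t[\1[a\le t]]=1-a$ and $\E_t[\1[a>t]]=a$ for $a\in[0,1]$, the positive term averages to $\E_{c\sim\mu}[1-q_c]$ and, since $\sum_{r\in\calE_c}\hat p(r\mid c)=1-q_c$, the negative term averages to $\E_{c\sim\mu}\bigl[\tfrac{1-q_c}{k_c}\bigr]$. Combining and using $k_c\ge C-1$ (so $1+\tfrac{1}{k_c}\le\tfrac{C}{C-1}$) yields $\E_t\,\Pr_{x\sim D}[g_{\theta,t}\ne f]=\tfrac12\,\E_{c\sim\mu}\bigl[(1-q_c)(1+\tfrac{1}{k_c})\bigr]\le\tfrac{C}{2(C-1)}\,\E_{c\sim\mu}[1-q_c]=\tfrac{C}{2(C-1)}\,\gerr$.

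To conclude, I would note that every $g_{\theta,t}$ belongs to $\calG$, so $\opt(\calG)$ is at most the $t$-average just computed; rearranging $\opt(\calG)\le\tfrac{C}{2(C-1)}\gerr$ gives $\gerr\ge\tfrac{2(C-1)}{C}\opt(\calG)=2\bigl(1-\tfrac1C\bigr)\opt(\calG)$, as claimed. As a sanity check, since $\opt(\calG)\le\tfrac12$ this is never weaker than the bound obtained from \Cref{thm:main}, and unlike that bound it stays nontrivial at $C=2$.

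The only place any care is needed — and the obstruction to extending this sharpened bound beyond the pure multiple-choice setting — is the identity $\sum_{r\in\calE_c}\hat p(r\mid c)=1-q_c$, which gets used twice (to evaluate $\gerr$ and to average the negative-side misclassification). It holds here for free because $\hat p$ is by definition a distribution over the plausible strings $\calX$ and the plausible responses to $c$ are exactly the $C$ choices $\calX_c$, so $\sum_{r\in\calX_c}\hat p(r\mid c)=1$; in the general setting, probability mass on valid-but-unlisted or implausible strings would break this coupling and is precisely what forces the calibration term back in. The treatment of ties — thresholds $t$ exactly equal to some $\hat p(r\mid c)$ — is immaterial, since such $t$ form a measure-zero subset of $[0,1]$.
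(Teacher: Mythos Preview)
Your proof is correct and follows essentially the same approach as the paper's: both arguments integrate the misclassification rate of the threshold classifier over $t$ uniform in $[0,1]$, use $\Pr_t[\hat p(r\mid c)>t]=\hat p(r\mid c)$ to evaluate the false-positive and false-negative averages, and bound the result by $\tfrac{C}{2(C-1)}\gerr$ via $k_c\ge C-1$. The only cosmetic difference is that the paper first packages the averaging step as a standalone statement (\Cref{thm:mc}: existence of some $t$ with $\gerr\ge 2(1-\tfrac1C)\cerr(\hat f_t)$) and then invokes $\opt(\calG)\le\cerr(\hat f_t)$, whereas you bound $\opt(\calG)$ directly by the $t$-average; these are trivially equivalent.
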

To illustrate, consider the classic trigram language model where each word was predicted based only on the prior two words, i.e., a context window of just two words. Trigram models were dominant in the 1980s and 1990s. Trigram models, however, regularly output ungrammatical sentences. Consider the following prompts and responses:
\begin{align*}
c_1&=\text{She lost it and was completely out of}\ldots&c_2&=\text{He lost it and was completely out of}\ldots\\
r_1&=\text{her mind.}&r_2&=\text{his mind.}
\end{align*}
Here, $V_{c_1}:=E_{c_2}:=\{r_1\}$ and $V_{c_2}:=E_{c_1}:=\{r_2\}$.
\begin{corollary}\label{lem:trigram}
Let $\mu$ be uniform over $\{c_1,c_2\}$. Then any trigram model must have a generation error rate of at least 1/2.  
\end{corollary}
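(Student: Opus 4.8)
The plan is to exploit the defining limitation of a trigram model: the probability it assigns to a completion factors as a product of conditional word probabilities, each depending only on the previous two words, so the law of the completion depends on the prompt only through its final two tokens. Since $c_1$ and $c_2$ both end with the words ``out of'', any trigram model $\hat p_\theta$ must therefore satisfy $\hat p_\theta(r \mid c_1) = \hat p_\theta(r \mid c_2)$ for every response $r$ — the products defining $\hat p_\theta(r_1 \mid c_i)$ and $\hat p_\theta(r_2 \mid c_i)$ are identical regardless of whether the prompt began with ``She'' or ``He''. I would isolate this indistinguishability as the first step, since everything else is a short consequence of it.

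Given that, the quickest finish is direct. Writing $q := \hat p_\theta(r_1 \mid c_1) = \hat p_\theta(r_1 \mid c_2)$ and using that $r_1,r_2$ are the only plausible completions of either prompt (so $\hat p_\theta(r_2 \mid c_i) = 1-q$), the generation error rate is
\[
\gerr \;=\; \mu(c_1)\,\hat p_\theta(\calE_{c_1}\mid c_1) + \mu(c_2)\,\hat p_\theta(\calE_{c_2}\mid c_2) \;=\; \tfrac12\,\hat p_\theta(r_2\mid c_1) + \tfrac12\,\hat p_\theta(r_1\mid c_2) \;=\; \tfrac12(1-q)+\tfrac12 q \;=\; \tfrac12 ,
\]
so the error rate is in fact exactly $1/2$ and the claimed bound follows a fortiori.

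Alternatively — and this is the version that exhibits the corollary as an instance of the general framework — I would invoke \Cref{thm:agnostic}. Here $|\calV_{c_i}|=1$ and the number of choices is $C = \min_c|\calE_c|+1 = 2$, so it suffices to show $\opt(\calG)\ge 1/2$ for the family $\calG$ of thresholded trigram classifiers, whence $\gerr \ge 2(1-\tfrac12)\opt(\calG) \ge 1/2$. The IIV distribution $D$ places mass $1/4$ on each of $(c_1,r_1),(c_2,r_2)$ (labeled $+$) and $(c_1,r_2),(c_2,r_1)$ (labeled $-$). By the indistinguishability above, every $g_{\theta,t}\in\calG$ assigns the same label to $(c_1,r_1)$ and $(c_2,r_1)$, hence errs on exactly one member of this $+/-$ pair; likewise it errs on exactly one of $(c_1,r_2),(c_2,r_2)$. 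Summing the two $1/4$ masses gives $\opt(\calG)\ge 1/2$.

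The only real work is the first step — making precise that ``trigram'' means the completion law is a product of two-word-context conditionals, and then observing that $c_1$ and $c_2$ agree on their last two tokens. After that, both routes are immediate arithmetic; I would present the direct computation as the proof and mention the \Cref{thm:agnostic} route as the conceptual counterpart.
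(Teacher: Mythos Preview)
Your proposal is correct, and your second route via \Cref{thm:agnostic} is exactly the paper's proof: the paper simply asserts that $C=2$ and $\opt(\calG)=1/2$ because a trigram-based classifier cannot distinguish $c_1$ from $c_2$, then invokes the theorem. Your pairing argument---that $g_{\theta,t}$ must agree on $(c_1,r_j)$ and $(c_2,r_j)$ and hence misclassify exactly one of each $\pm$ pair---is a clean way to make the ``cannot distinguish'' claim precise, which the paper leaves implicit.

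Your direct computation is a genuinely different and more elementary route: it bypasses the IIV reduction entirely and, since the two-prompt example is symmetric, collapses to $\tfrac12(1-q)+\tfrac12 q = \tfrac12$ regardless of $q$. This actually gives \emph{equality} $\gerr=\tfrac12$, a hair stronger than the stated lower bound. The tradeoff is that the direct argument is specific to this toy instance, whereas the \Cref{thm:agnostic} route exhibits the corollary as the intended illustration of the general poor-model bound. Presenting the direct line as the proof and noting the conceptual route, as you propose, is a reasonable choice.
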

This follows from \Cref{thm:agnostic} because $C=2$ and $\opt(\calG)=1/2$ for trigram models. The proofs of \Cref{thm:agnostic} and \Cref{lem:trigram} are in \Cref{ap:trigram}. Although $n$-gram models can capture longer-range dependencies for larger $n$, data requirements scale exponentially in $n$.

We now revisit the letter-counting example from the introduction. To see that this is a poor model issue, note that the DeepSeek-R1 reasoning model reliably counts letters, e.g., producing a 377-chain-of-thought that includes:
\begin{quote}
Let me spell it out: D-E-E-P-S-E-E-K.\\
First letter: D — that’s one D. Second letter: E — not D. Third letter: E — not D\ldots \\
So, the number of Ds is 1.
\end{quote}
Assuming similar training data, this suggests that R1 is a better model for the task than the DeepSeek-V3 model. One representational challenge that reasoning overcomes is that modern language models represent prompts by \textit{tokens}, e.g., D/EEP/SEE/K, rather than individual characters \citep{DeepSeekR1_2025}. 

\subsection{Additional factors}\label{sec:additional}

Errors may occur due to a combination of multiple factors, including the ones discussed above and several others. Here, we highlight a few.

\begin{itemize}
    \item \textbf{Computational Hardness}. No algorithm run on a classical computer, even an AI with superhuman capabilities, can violate the laws of computational complexity theory. Indeed, AI systems have been found to err on computationally hard problems \citep{Xu2024HallucinationInevitable}. \Cref{obs:decrypt} of \Cref{ap:hard} illustrates how \Cref{thm:main} applies to intractable queries of the form \textit{``What is the decryption of $c$?''} 
    and IDK is a valid answer. 

    \item \textbf{Distribution shift}. A well-known challenge in binary classification is that training and test data distributions often diverge \citep{QuioneroCandela2009,MorenoTorres2012}. Analogously, errors in language models often stem from out-of-distribution (OOD) prompts that differ substantially from the training distribution. A question such as, ``What’s heavier, a pound of feathers or a pound of lead?'' may be unlikely in the training data and may induce erroneous answers in certain models. Similarly, distribution shift could be a factor in the letter-counting example above, though the fact that reasoning models correctly count letters suggests that the poor models may be a greater factor.

    \item \textbf{GIGO: Garbage in, Garbage out}.
Large training corpora often contain numerous factual errors, which may be replicated by base models. The statistical similarity of GIGO for both classification and pretraining is self evident, and hence we do not provide a formal treatment. However, it is important to recognize GIGO among statistical factors, as language models have been shown to replicate errors from training data \citep{lin-etal-2022-truthfulqa,levy-etal-2021-investigating,alber2025medical}.
\end{itemize}

GIGO also offers a natural segue 
to the topic of post-training, which decreases 
certain GIGO errors, such as common misconceptions and conspiracy theories \citep{ouyang2022training,openai_gpt-4_2023,Costello2024Durably}. The next section explains why some hallucinations persist---and may even be exacerbated---by current post-training pipelines.

\section{Post-training and hallucination}\label{sec:post}

Post-training should shift the model from one which is trained like an autocomplete model to one which does not output confident falsehoods (except when appropriate, e.g., when asked to produce fiction). However, we claim that further reduction of hallucinations is an uphill battle, since existing benchmarks and leaderboards reinforce certain types of hallucination. We therefore discuss how to stop this reinforcement. This is a socio-technical problem in the sense that, not only do the existing evaluations need to be modified, but these changes need to be adopted in the influential leaderboards. 

\subsection{How evaluations reinforce hallucination}
\Dichotomous{} evaluations of language models impose a  false right-wrong dichotomy, award no credit to answers that express uncertainty, omit dubious details, or request clarification. Such metrics, including accuracy and pass rate, remain the field’s prevailing norm, as argued below. Under \dichotomous{} grading, abstaining is strictly sub-optimal. IDK-type responses are maximally penalized while an overconfident ``best guess'' is optimal. The motivation combines two desirable factors: (a) the rate of accuracy among what is output by the language model, and (b) how comprehensive responses are. However, weighing (a) more than (b) is important for reducing hallucinations.

Formally, for any given question in the form of a prompt $c$, denote the set of plausible responses (valid or error) by $\calR_c:= \{r \mid (c, r) \in \calX\}$. Further, suppose there is a set of plausible abstention responses $\calA_c \subset \calR_c$ (e.g., IDK). A \textit{grader} $g_c:\calR_c \rightarrow \mathbb{R}$ is said to be \textit{\dichotomous{}} if $\{g_c(r) \mid r \in \calR_c\} = \{0,1\}$ and $g_c(r)=0$ for all $r \in \calA_c$. A \textit{problem} is defined by $(c,\calR_c,\calA_c,g_c)$ where the test-taker knows $c, \calR_c, \calA_c$. We assume that the test-taker knows that the rubric is \dichotomous{} but is not told the correct answers, where $g_c(r)=1$. The test-taker's beliefs about the correct answer can be viewed as a posterior distribution $\rho_c$ over \dichotomous{} $g_c$'s. For any such beliefs, the optimal response is not to abstain.
\begin{observation}\label{obs:misaligned}
    Let $c$ be a prompt. For any distribution $\rho_c$ over \dichotomous{} graders, the optimal response(s) are not abstentions, i.e., $$\calA_c \cap \argmax_{r \in \calR_c}\E_{g_c \sim \rho_c}[g_c(r)] = \emptyset.$$
\end{observation}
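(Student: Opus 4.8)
The plan is to show that any abstention response is weakly dominated by some non-abstention response, and strictly so when we account for the fact that a binary grader must assign value $1$ to at least one response. First I would fix the prompt $c$ and an arbitrary posterior $\rho_c$ over binary graders $g_c$. The key observation is that every binary grader in the support of $\rho_c$ assigns $g_c(r) = 0$ to every abstention $r \in \calA_c$ by the definition of a binary grader. Hence for any abstention $a \in \calA_c$, the expected score is $\E_{g_c \sim \rho_c}[g_c(a)] = 0$.

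Next I would argue that the maximum expected score over $\calR_c$ is strictly positive. Since $\{g_c(r) \mid r \in \calR_c\} = \{0,1\}$ for every binary grader, each grader $g_c$ in the support of $\rho_c$ has at least one response $r^\star(g_c) \in \calR_c$ with $g_c(r^\star(g_c)) = 1$, and this $r^\star(g_c)$ is not an abstention. Summing (integrating) the indicator that a given $r$ is scored $1$ over the support of $\rho_c$, there must exist at least one response $r_0 \in \calR_c \setminus \calA_c$ with $\E_{g_c \sim \rho_c}[g_c(r_0)] > 0$; otherwise every response would have zero expected score, contradicting that each grader scores some response $1$. This gives $\max_{r \in \calR_c} \E_{g_c \sim \rho_c}[g_c(r)] \ge \E_{g_c \sim \rho_c}[g_c(r_0)] > 0 = \E_{g_c \sim \rho_c}[g_c(a)]$ for every abstention $a$. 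Therefore no abstention attains the maximum, i.e., $\calA_c \cap \argmax_{r \in \calR_c}\E_{g_c \sim \rho_c}[g_c(r)] = \emptyset$.

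I expect this to be essentially a one-line argument once the definitions are unpacked, so there is no serious obstacle. The only subtlety worth stating carefully is the existence of a non-abstention response with strictly positive expected score: it follows from a counting/averaging step (each grader in the support scores at least one response $1$, so the sum of expected scores over $\calR_c \setminus \calA_c$ is at least $1$, forcing some individual term to be positive). If one wanted to handle potential measure-theoretic issues with a continuum of graders, I would note that the set of responses is implicitly finite in the paper's setting (or invoke countable additivity), but this is a formality. The statement and its proof are deliberately simple—the point is conceptual, namely that binary grading makes hedging strictly suboptimal regardless of the model's beliefs.
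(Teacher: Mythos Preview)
Your proposal is correct and follows essentially the same approach as the paper's proof: both observe that abstentions have expected score $0$, then use the summing/averaging argument (the paper writes it as $\sum_{r \in \calR_c} \Pr_{g_c \sim \rho_c}[g_c(r)=1] \ge 1$, which is exactly your ``sum of expected scores is at least $1$'') together with finiteness of $\calR_c$ to exhibit a non-abstention with strictly positive expected score. You have also correctly flagged the finiteness assumption, which the paper invokes explicitly.
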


Although the proof is trivial (see \Cref{sec:postanal}), \Cref{obs:misaligned} suggests that \textit{existing evaluations may need to be modified}. \Cref{tab:prevalence} summarizes the short meta-evaluation analysis in \Cref{sec:prevalence}, finding that the vast majority of popular evaluations have \dichotomous{} grading. Therefore, additional hallucination evaluations may not suffice when the primary evaluations penalize honestly reporting confidence and uncertainty. This does not diminish existing work on hallucination evaluations but rather points out that even the ideal hallucination evaluation and ideal post-training methodology, yielding honest reports of uncertainty, may still be drowned out because of inferior performance on the vast majority of the existing evaluations.

\begin{table}[ht]
\centering
\begin{threeparttable}
\caption{Summary of evaluation benchmarks analyzed in this work and their treatment of abstentions. “\Dichotomous{} grading’’ indicates that the primary metric is a strict correct/incorrect accuracy; “IDK credit’’ denotes whether abstentions can earn any credit.
\label{tab:prevalence}}
\begin{tabular}{@{}l l c c@{}}
\toprule
\textbf{Benchmark} & \textbf{Scoring method} & \textbf{\Dichotomous{} grading} & \textbf{IDK credit} \\ 
\midrule  
GPQA              & Multiple‑choice accuracy     & Yes & None \\
MMLU‑Pro          & Multiple‑choice accuracy     & Yes & None \\
IFEval            & Programmatic instruction verification    & Yes\tnote{a}  & None \\
Omni‑MATH         & Equivalence grading$^*$    & Yes & None \\
WildBench         & LM‑graded rubric$^*$              & No  & Partial\tnote{b} \\
BBH               & Multiple-choice / exact‑match         & Yes & None \\
MATH (L5 split)   & Equivalence grading$^*$               & Yes & None \\
MuSR              & Multiple‑choice accuracy               & Yes & None \\
SWE‑bench         & Patch passes unit tests            & Yes & None \\
HLE               & Multiple-choice / equivalence grading$^*$               & Yes & None \\
\bottomrule
\end{tabular}
\begin{tablenotes}
\footnotesize
\item[$*$] Grading is performed using language models, hence incorrect \textit{bluffs} may occasionally be scored as correct. 
\item[a] IFEval aggregates several \dichotomous{} rubric sub‑scores into a composite score.
\item[b] Grading rubric (1-10 scale) suggests that IDK may score lower than ``fair'' responses with hallucination, reinforcing hallucination.
\end{tablenotes}
\end{threeparttable}
\end{table}

\subsection{Explicit confidence targets}

Human tests are similarly mostly \dichotomous{}, and it has been recognized that they also reward overconfident bluffing. Of course, exams are only a small component of human learning, e.g., fabricating birthdays will quickly result in embarrassment. Nonetheless, some standardized national exams operate or have operated using penalties for incorrect answers (or equivalently partial credit for abstaining), including Indian JEE, NEET, and GATE exams; AMC tests from the Mathematical Association of America; and US standardized SAT, AP, and GRE tests in earlier years. Importantly, the grading system is clearly stated in the instructions, and test takers are often aware of the confidence threshold beyond which it makes sense to make their best guess. 

Similarly, we propose evaluations explicitly state \textit{confidence targets} in their instructions, within the prompt (or system message). For example, one could append a statement like the following to each question:
\begin{quote}
    Answer only if you are $>t$ confident, since mistakes are penalized $t/(1-t)$ points, while correct answers receive 1 point, and an answer of ``I don't know'' receives 0 points. 
\end{quote}
There are several natural values of $t$ including $t=0.5$ (penalty 1), $t=0.75$ (penalty 2), and $t=0.9$ (penalty 9). A threshold of $t=0$ corresponds to \dichotomous{} grading and could be described by, e.g., ``Make your best guess even if you are unsure, as if you were taking an exam.'' A simple calculation shows that the expected score of offering an answer beats IDK (score 0) iff its confidence (i.e., probability of being correct) is $> t$. 

Such penalties have been well-studied within hallucination research \citep{ji2023survey}. However, we suggest two subtle variations which have statistical ramifications. First, we propose making the confidence threshold explicit in the instructions, whereas the prior work has largely omitted mentioning the confidence targets or penalties in the instructions. (A notable exception is the work of \citet{wu2025answerrefuseguessinvestigating} who introduce ``risk-informing'' prompts with explicit penalties.)  
The ideal penalty might reflect likely real-world harms, but that is impractical as it is specific to the problem, the target application, and the user group. Without transparent specification within the instructions, it would be difficult to achieve consensus among language-model creators on the correct thresholds. Similarly, students might bicker that grading is unfair given instructions that there is an unspecified penalty for errors. Instead, specifying confidence thresholds explicitly in each problem's instructions supports objective grading even if the specific thresholds chosen are somewhat arbitrary or even random. A single model may be best across all thresholds, if the threshold is explicit. However, if the threshold is not stated, then there is an inherent tradeoff, and no single model will be best in general (other than one that is always correct).

Second, we suggest incorporating confidence targets into existing mainstream evaluations, such as the popular SWE-bench \citep{jimenez2024swebench} which involves \dichotomous{} grading of software patches, while the majority of prior work has introduced implicit error penalties in bespoke hallucination evaluations. Merely adding evaluations with implicit error penalties faces the aforementioned accuracy-error tradeoff. On the other hand, incorporating confidence targets into the established evaluations, already in use, reduces the penalty for appropriate expressions of uncertainty. It may thus amplify the effectiveness of hallucination-specific evaluations.

With explicit confidence targets, there is one behavior which is simultaneously optimal for all targets---outputting IDK among examples where its correctness probability is greater than the target. Let us refer to this as \textit{behavioral calibration}--rather than requiring the model to output a probabilistic confidence \citep{lin2022teaching}, it must formulate the most useful response in which it is at least $t$ confident.
Behavioral calibration can be audited by comparing accuracy and error rates across thresholds, and circumvents the problem that there may be exponentially many ways to phrase correct responses \citep{nature_halluc_24}. Existing models may or may not exhibit behavioral calibration, but it may prove useful as an objective evaluation.

\section{Discussion and limitations}\label{sec:limit}

It is difficult for the field to agree upon how to define, evaluate and reduce hallucinations due to their multifaceted nature. A statistical framework must prioritize certain aspects and omit others, for simplicity. Several notes are in order about the extent and limitations of the framework used herein.

\vspace{-0.1in}
\paragraph{Plausibility and nonsense.} A hallucination is a plausible falsehood, and by considering only plausible strings $\calX$, our analysis ignores the possibility of generating nonsensical strings (which state-of-the-art language models rarely generate). However, the statement and proof of \Cref{thm:main} hold with the modified definitions of nonsensical examples $\calN$ with partition $\calX=\calN \cup \calE \cup \calV$, $\gerr:=\hat{p}(\calN \cup \calE)$, $D(\calN)=0$, and the assumption that $p(\calV)=1$. 

\vspace{-0.1in}
\paragraph{Open-ended generations.} For simplicity, the examples presented in this paper are oriented towards a single factual question. However, hallucinations often arise for open-ended prompts, such as ``Write a biography about\ldots.'' This can be fit into our framework by defining a response containing one or more falsehoods to be an error. However, in such a case it would be natural to consider degrees of hallucination depending on how many errors there are.


\vspace{-0.1in}
\paragraph{Search (and reasoning) are not panaceas.} A number of studies have shown how language models augmented with search or Retrieval-Augmented Generation (RAG) reduce hallucinations \citep{lewis2020rag,shuster2021rahc,nakano2021webgpt,zhang2025ragreview}. However, \Cref{obs:misaligned} holds for arbitrary language models, including those with RAG. In particular, the \dichotomous{} grading system itself still rewards guessing whenever search fails to yield a confident answer. Moreover, search may not help with miscalculations such as in the letter-counting example, or other intrinsic hallucinations.

\vspace{-0.1in}
\paragraph{Latent context.} Some errors 
cannot be judged by the prompt and response alone. For example, suppose a user asks a question about phones and the language model provides a response about cellphones, but the question was intended to be about land lines. Such ambiguities do not fit our error definition which does not depend on context external to the prompt and response. It would be interesting to extend the model to allow for ``hidden context'' that are not part of the prompt given to the language model, but which could be used for judging errors, relating to \textit{aleatoric uncertainty}.

\vspace{-0.1in}
\paragraph{A false trichotomy.} Our formalism does not distinguish between errors of different magnitudes or degrees of uncertainty. Clearly, the correct/incorrect/IDK categories are also incomplete. Although the statistical ideal might be to score each evaluation just as we would like to score the language model in the downstream application, explicit confidence targets offer a practical, objective modification to mainstream evaluations, and a false trichotomy may at least offer an IDK option unlike a false dichotomy.

\vspace{-0.1in}
\paragraph{Beyond IDK.} There are numerous ways to signal uncertainty, such as hedging, omitting details, and asking questions. Ultimately language models may adhere to confidence notions such as linguistic calibration \citep{mielke-etal-2022-reducing,damani2025beyond}. However, the pragmatic phenomena of language \citep{austin1962_how,grice1975_logic} are nuanced. For example, while there are instances where it may be useful for language models to explicitly state probabilistic confidence estimates \citep{lin2022teaching}, this can also lead to unnatural utterances, such as, ``I'm 1/365 certain that Kalai's birthday is March 7th.'' The present paper focuses on the statistical factors regarding the top-level decision of what is said.

\section{Conclusions}

This paper demystifies hallucinations in modern language models, from their origin during pretraining to their persistence through post-training.  In pretraining, we show that generative errors parallel misclassifications in supervised learning, which are not mysterious, and naturally arise due to the minimization of cross-entropy loss.

Many language model shortcomings can be captured by a single evaluation. For example, overuse of the opener ``Certainly'' can be addressed by a single \textit{``Certainly'' eval} \citep{fridman2024amodei} because starting responses with ``Certainly'' does not significantly impact other evaluations. In contrast, we argue that the majority of mainstream evaluations reward hallucinatory behavior. Simple modifications of mainstream evaluations can realign incentives, rewarding appropriate expressions of uncertainty rather than penalizing them. This can remove barriers to the suppression of hallucinations, and open the door to future work on nuanced language models, e.g., with richer pragmatic competence \citep{pragmatics2025}.

\paragraph{Acknowledgments.} We would like to thank Alex Beutel, Tom Cunningham, Yann Dubois, Parikshit Gopalan, Johannes Heidecke, Zoe Hitzig, Saachi Jain, Manas Joglekar, Sanjay Kairam, Ehud Kalai, Amin Karbasi, Alan Luo, Anay Mehrotra, Eric Mitchell, Cameron Raymond, David G.\ Robinson, Mandip Shah, Joshua Vendrow, Grigoris Velegkas, Rose Wang, Zhigang Wang, Jason Wolfe, and Jason Wei for helpful discussions.

\bibliographystyle{ACM-Reference-Format}
\bibliography{references}


\begin{thebibliography}{94}


\ifx \showCODEN    \undefined \def \showCODEN     #1{\unskip}     \fi
\ifx \showDOI      \undefined \def \showDOI       #1{#1}\fi
\ifx \showISBNx    \undefined \def \showISBNx     #1{\unskip}     \fi
\ifx \showISBNxiii \undefined \def \showISBNxiii  #1{\unskip}     \fi
\ifx \showISSN     \undefined \def \showISSN      #1{\unskip}     \fi
\ifx \showLCCN     \undefined \def \showLCCN      #1{\unskip}     \fi
\ifx \shownote     \undefined \def \shownote      #1{#1}          \fi
\ifx \showarticletitle \undefined \def \showarticletitle #1{#1}   \fi
\ifx \showURL      \undefined \def \showURL       {\relax}        \fi
\providecommand\bibfield[2]{#2}
\providecommand\bibinfo[2]{#2}
\providecommand\natexlab[1]{#1}
\providecommand\showeprint[2][]{arXiv:#2}

\bibitem[Agrawal et~al\mbox{.}(2024)]%
        {agrawal_language_2023}
\bibfield{author}{\bibinfo{person}{Ayush Agrawal}, \bibinfo{person}{Mirac Suzgun}, \bibinfo{person}{Lester Mackey}, {and} \bibinfo{person}{Adam Kalai}.} \bibinfo{year}{2024}\natexlab{}.
\newblock \showarticletitle{Do Language Models Know When They’re Hallucinating References?}. In \bibinfo{booktitle}{\emph{Findings of the Association for Computational Linguistics: {EACL} 2024}}. \bibinfo{publisher}{Association for Computational Linguistics}, \bibinfo{address}{St.~Julian’s, Malta}, \bibinfo{pages}{912--928}.
\newblock
\urldef\tempurl%
\url{https://doi.org/10.18653/v1/2024.findings-eacl.62}
\showDOI{\tempurl}


\bibitem[Alber et~al\mbox{.}(2025)]%
        {alber2025medical}
\bibfield{author}{\bibinfo{person}{Daniel~Alexander Alber}, \bibinfo{person}{Zihao Yang}, \bibinfo{person}{Anton Alyakin}, \bibinfo{person}{Eunice Yang}, \bibinfo{person}{Sumedha Rai}, \bibinfo{person}{Aly~A. Valliani}, {et~al\mbox{.}}} \bibinfo{year}{2025}\natexlab{}.
\newblock \showarticletitle{Medical large language models are vulnerable to data-poisoning attacks}.
\newblock \bibinfo{journal}{\emph{Nature Medicine}} \bibinfo{volume}{31}, \bibinfo{number}{2} (\bibinfo{year}{2025}), \bibinfo{pages}{618--626}.
\newblock
\urldef\tempurl%
\url{https://doi.org/10.1038/s41591-024-03445-1}
\showDOI{\tempurl}


\bibitem[Amodei and Fridman(2024)]%
        {fridman2024amodei}
\bibfield{author}{\bibinfo{person}{Dario Amodei} {and} \bibinfo{person}{Lex Fridman}.} \bibinfo{year}{2024}\natexlab{}.
\newblock \bibinfo{booktitle}{\emph{Dario Amodei: Anthropic CEO on Claude, AGI \& the Future of AI \& Humanity | Lex Fridman Podcast \#452 (Transcript)}}.
\newblock Lex Fridman Podcast.
\newblock
\urldef\tempurl%
\url{https://lexfridman.com/dario-amodei-transcript/}
\showURL{%
\tempurl}


\bibitem[Anand et~al\mbox{.}(2023)]%
        {anand2023gpt4all}
\bibfield{author}{\bibinfo{person}{Yuvanesh Anand}, \bibinfo{person}{Zach Nussbaum}, \bibinfo{person}{Brandon Duderstadt}, \bibinfo{person}{Benjamin Schmidt}, {and} \bibinfo{person}{Andriy Mulyar}.} \bibinfo{year}{2023}\natexlab{}.
\newblock \bibinfo{title}{{GPT4All}: Training an Assistant-Style Chatbot with Large-Scale Data Distillation from {GPT-3.5-Turbo}}.
\newblock
\newblock
\urldef\tempurl%
\url{https://github.com/nomic-ai/gpt4all}
\showURL{%
\tempurl}


\bibitem[Austin(1962)]%
        {austin1962_how}
\bibfield{author}{\bibinfo{person}{J.~L. Austin}.} \bibinfo{year}{1962}\natexlab{}.
\newblock \bibinfo{booktitle}{\emph{How to Do Things with Words}}.
\newblock \bibinfo{publisher}{Oxford University Press}, \bibinfo{address}{Oxford}.
\newblock
\newblock
\shownote{Edited by J.\ O. Urmson and Marina Sbisà}.


\bibitem[Bai et~al\mbox{.}(2022)]%
        {bai2022constitutionalaiharmlessnessai}
\bibfield{author}{\bibinfo{person}{Yuntao Bai}, \bibinfo{person}{Saurav Kadavath}, \bibinfo{person}{Sandipan Kundu}, \bibinfo{person}{Amanda Askell}, \bibinfo{person}{Jackson Kernion}, \bibinfo{person}{Andy Jones}, \bibinfo{person}{Anna Chen}, \bibinfo{person}{Anna Goldie}, \bibinfo{person}{Azalia Mirhoseini}, \bibinfo{person}{Cameron McKinnon}, \bibinfo{person}{Carol Chen}, \bibinfo{person}{Catherine Olsson}, \bibinfo{person}{Christopher Olah}, \bibinfo{person}{Danny Hernandez}, \bibinfo{person}{Dawn Drain}, \bibinfo{person}{Deep Ganguli}, \bibinfo{person}{Dustin Li}, \bibinfo{person}{Eli Tran-Johnson}, \bibinfo{person}{Ethan Perez}, \bibinfo{person}{Jamie Kerr}, \bibinfo{person}{Jared Mueller}, \bibinfo{person}{Jeffrey Ladish}, \bibinfo{person}{Joshua Landau}, \bibinfo{person}{Kamal Ndousse}, \bibinfo{person}{Kamile Lukosuite}, \bibinfo{person}{Liane Lovitt}, \bibinfo{person}{Michael Sellitto}, \bibinfo{person}{Nelson Elhage}, \bibinfo{person}{Nicholas Schiefer}, \bibinfo{person}{Noemi Mercado},
  \bibinfo{person}{Nova DasSarma}, \bibinfo{person}{Robert Lasenby}, \bibinfo{person}{Robin Larson}, \bibinfo{person}{Sam Ringer}, \bibinfo{person}{Scott Johnston}, \bibinfo{person}{Shauna Kravec}, \bibinfo{person}{Sheer~El Showk}, \bibinfo{person}{Stanislav Fort}, \bibinfo{person}{Tamera Lanham}, \bibinfo{person}{Timothy Telleen-Lawton}, \bibinfo{person}{Tom Conerly}, \bibinfo{person}{Tom Henighan}, \bibinfo{person}{Tristan Hume}, \bibinfo{person}{Samuel~R. Bowman}, \bibinfo{person}{Zac Hatfield-Dodds}, \bibinfo{person}{Ben Mann}, \bibinfo{person}{Dario Amodei}, \bibinfo{person}{Nicholas Joseph}, \bibinfo{person}{Sam McCandlish}, \bibinfo{person}{Tom Brown}, {and} \bibinfo{person}{Jared Kaplan}.} \bibinfo{year}{2022}\natexlab{}.
\newblock \bibinfo{title}{Constitutional AI: Harmlessness from AI Feedback}.
\newblock
\newblock
\showeprint[arxiv]{2212.08073}~[cs.CL]
\urldef\tempurl%
\url{https://arxiv.org/abs/2212.08073}
\showURL{%
\tempurl}


\bibitem[Bang et~al\mbox{.}(2025)]%
        {bang2025hallulens}
\bibfield{author}{\bibinfo{person}{Yejin Bang}, \bibinfo{person}{Ziwei Ji}, \bibinfo{person}{Alan Schelten}, \bibinfo{person}{Anthony Hartshorn}, \bibinfo{person}{Tara Fowler}, \bibinfo{person}{Cheng Zhang}, \bibinfo{person}{Nicola Cancedda}, {and} \bibinfo{person}{Pascale Fung}.} \bibinfo{year}{2025}\natexlab{}.
\newblock \showarticletitle{HalluLens: LLM Hallucination Benchmark}. In \bibinfo{booktitle}{\emph{Proceedings of the 63rd Annual Meeting of the Association for Computational Linguistics (Volume 1: Long Papers)}}. \bibinfo{publisher}{Association for Computational Linguistics}, \bibinfo{address}{Vienna, Austria}, \bibinfo{pages}{24128--24156}.
\newblock
\urldef\tempurl%
\url{https://doi.org/10.18653/v1/2025.acl-long.1176}
\showDOI{\tempurl}


\bibitem[Bengio et~al\mbox{.}(2015)]%
        {bengio2015scheduled}
\bibfield{author}{\bibinfo{person}{Samy Bengio}, \bibinfo{person}{Oriol Vinyals}, \bibinfo{person}{Navdeep Jaitly}, {and} \bibinfo{person}{Noam Shazeer}.} \bibinfo{year}{2015}\natexlab{}.
\newblock \showarticletitle{Scheduled sampling for sequence prediction with recurrent neural networks}.
\newblock \bibinfo{journal}{\emph{Advances in neural information processing systems}}  \bibinfo{volume}{28} (\bibinfo{year}{2015}).
\newblock


\bibitem[Berglund et~al\mbox{.}(2024)]%
        {berglundreversal}
\bibfield{author}{\bibinfo{person}{Lukas Berglund}, \bibinfo{person}{Meg Tong}, \bibinfo{person}{Maximilian Kaufmann}, \bibinfo{person}{Mikita Balesni}, \bibinfo{person}{Asa~Cooper Stickland}, \bibinfo{person}{Tomasz Korbak}, {and} \bibinfo{person}{Owain Evans}.} \bibinfo{year}{2024}\natexlab{}.
\newblock \showarticletitle{The Reversal Curse: LLMs trained on “A is B” fail to learn “B is A”}. In \bibinfo{booktitle}{\emph{The Twelfth International Conference on Learning Representations}}.
\newblock


\bibitem[Beygelzimer et~al\mbox{.}(2016)]%
        {Beygelzimer2016Reductions}
\bibfield{author}{\bibinfo{person}{Alina Beygelzimer}, \bibinfo{person}{Hal~Daum{\'e} III}, \bibinfo{person}{John Langford}, {and} \bibinfo{person}{Paul Mineiro}.} \bibinfo{year}{2016}\natexlab{}.
\newblock \showarticletitle{Learning Reductions That Really Work}.
\newblock \bibinfo{journal}{\emph{Proc. IEEE}} \bibinfo{volume}{104}, \bibinfo{number}{1} (\bibinfo{year}{2016}), \bibinfo{pages}{136--147}.
\newblock
\urldef\tempurl%
\url{https://doi.org/10.1109/JPROC.2015.2494118}
\showDOI{\tempurl}


\bibitem[Chiang et~al\mbox{.}(2023)]%
        {vicuna2023}
\bibfield{author}{\bibinfo{person}{Wei-Lin Chiang}, \bibinfo{person}{Zhuohan Li}, \bibinfo{person}{Zi Lin}, \bibinfo{person}{Ying Sheng}, \bibinfo{person}{Zhanghao Wu}, \bibinfo{person}{Hao Zhang}, \bibinfo{person}{Lianmin Zheng}, \bibinfo{person}{Siyuan Zhuang}, \bibinfo{person}{Yonghao Zhuang}, \bibinfo{person}{Joseph~E. Gonzalez}, \bibinfo{person}{Ion Stoica}, {and} \bibinfo{person}{Eric~P. Xing}.} \bibinfo{year}{2023}\natexlab{}.
\newblock \bibinfo{title}{Vicuna: An Open-Source Chatbot Impressing GPT-4 with 90\%* ChatGPT Quality}.
\newblock
\newblock
\urldef\tempurl%
\url{https://lmsys.org/blog/2023-03-30-vicuna/}
\showURL{%
\tempurl}


\bibitem[Costello et~al\mbox{.}(2024)]%
        {Costello2024Durably}
\bibfield{author}{\bibinfo{person}{Thomas~H. Costello}, \bibinfo{person}{Gordon Pennycook}, {and} \bibinfo{person}{David~G. Rand}.} \bibinfo{year}{2024}\natexlab{}.
\newblock \showarticletitle{Durably reducing conspiracy beliefs through dialogues with AI}.
\newblock \bibinfo{journal}{\emph{Science}} \bibinfo{volume}{385}, \bibinfo{number}{6714} (\bibinfo{date}{Sept.} \bibinfo{year}{2024}), \bibinfo{pages}{eadq1814}.
\newblock
\urldef\tempurl%
\url{https://doi.org/10.1126/science.adq1814}
\showDOI{\tempurl}


\bibitem[Damani et~al\mbox{.}(2025)]%
        {damani2025beyond}
\bibfield{author}{\bibinfo{person}{Mehul Damani}, \bibinfo{person}{Isha Puri}, \bibinfo{person}{Stewart Slocum}, \bibinfo{person}{Idan Shenfeld}, \bibinfo{person}{Leshem Choshen}, \bibinfo{person}{Yoon Kim}, {and} \bibinfo{person}{Jacob Andreas}.} \bibinfo{year}{2025}\natexlab{}.
\newblock \bibinfo{title}{Beyond Binary Rewards: Training {LMs} to Reason About Their Uncertainty}.
\newblock
\newblock
\urldef\tempurl%
\url{https://doi.org/10.48550/arXiv.2507.16806}
\showDOI{\tempurl}
\showeprint[arxiv]{2507.16806}~[cs.LG]


\bibitem[Dawid(1982)]%
        {dawid_well-calibrated_1982}
\bibfield{author}{\bibinfo{person}{A.~P. Dawid}.} \bibinfo{year}{1982}\natexlab{}.
\newblock \showarticletitle{The {Well}-{Calibrated} {Bayesian}}.
\newblock \bibinfo{journal}{\emph{J. Amer. Statist. Assoc.}} \bibinfo{volume}{77}, \bibinfo{number}{379} (\bibinfo{date}{Sept.} \bibinfo{year}{1982}), \bibinfo{pages}{605--610}.
\newblock
\showISSN{0162-1459, 1537-274X}
\urldef\tempurl%
\url{https://doi.org/10.1080/01621459.1982.10477856}
\showDOI{\tempurl}


\bibitem[DeepSeek-AI et~al\mbox{.}(2025)]%
        {DeepSeekR1_2025}
\bibfield{author}{\bibinfo{person}{DeepSeek-AI}, \bibinfo{person}{Daya Guo}, \bibinfo{person}{Dejian Yang}, \bibinfo{person}{Haowei Zhang}, \bibinfo{person}{Junxiao Song}, \bibinfo{person}{Ruoyu Zhang}, \bibinfo{person}{Runxin Xu}, \bibinfo{person}{Qihao Zhu}, \bibinfo{person}{Shirong Ma}, \bibinfo{person}{Peiyi Wang}, \bibinfo{person}{Xiao Bi}, \bibinfo{person}{Xiaokang Zhang}, \bibinfo{person}{Xingkai Yu}, \bibinfo{person}{Yu Wu}, \bibinfo{person}{Z.F. Wu}, \bibinfo{person}{Zhibin Gou}, \bibinfo{person}{Zhihong Shao}, \bibinfo{person}{Zhuoshu Li}, \bibinfo{person}{Ziyi Gao}, \bibinfo{person}{Aixin Liu}, \bibinfo{person}{Bing Xue}, \bibinfo{person}{Bingxuan Wang}, {and} \bibinfo{person}{178 others}.} \bibinfo{year}{2025}\natexlab{}.
\newblock \bibinfo{title}{DeepSeek-R1: Incentivizing Reasoning Capability in LLMs via Reinforcement Learning}.
\newblock
\newblock
\urldef\tempurl%
\url{https://doi.org/10.48550/arXiv.2501.12948}
\showDOI{\tempurl}
\showeprint[arxiv]{2501.12948}~[cs.CL]


\bibitem[Domingos(2012)]%
        {Domingos2012UsefulThings}
\bibfield{author}{\bibinfo{person}{Pedro Domingos}.} \bibinfo{year}{2012}\natexlab{}.
\newblock \showarticletitle{A Few Useful Things to Know About Machine Learning}.
\newblock \bibinfo{journal}{\emph{Commun. ACM}} \bibinfo{volume}{55}, \bibinfo{number}{10} (\bibinfo{year}{2012}), \bibinfo{pages}{78--87}.
\newblock
\urldef\tempurl%
\url{https://doi.org/10.1145/2347736.2347755}
\showDOI{\tempurl}


\bibitem[Fan et~al\mbox{.}(2024)]%
        {fan2024nphardeval}
\bibfield{author}{\bibinfo{person}{Lizhou Fan}, \bibinfo{person}{Wenyue Hua}, \bibinfo{person}{Lingyao Li}, \bibinfo{person}{Haoyang Ling}, {and} \bibinfo{person}{Yongfeng Zhang}.} \bibinfo{year}{2024}\natexlab{}.
\newblock \showarticletitle{{NPH}ard{E}val: Dynamic Benchmark on Reasoning Ability of Large Language Models via Complexity Classes}. In \bibinfo{booktitle}{\emph{Proceedings of the 62nd Annual Meeting of the Association for Computational Linguistics (ACL 2024)}}. \bibinfo{publisher}{Association for Computational Linguistics}, \bibinfo{address}{Bangkok, Thailand}, \bibinfo{pages}{4092--4114}.
\newblock
\urldef\tempurl%
\url{https://doi.org/10.18653/v1/2024.acl-long.225}
\showDOI{\tempurl}


\bibitem[Farquhar et~al\mbox{.}(2024)]%
        {nature_halluc_24}
\bibfield{author}{\bibinfo{person}{Sebastian Farquhar}, \bibinfo{person}{Jannik Kossen}, \bibinfo{person}{Lorenz Kuhn}, {and} \bibinfo{person}{Yarin Gal}.} \bibinfo{year}{2024}\natexlab{}.
\newblock \showarticletitle{Detecting hallucinations in large language models using semantic entropy}.
\newblock \bibinfo{journal}{\emph{Nature}}  \bibinfo{volume}{630} (\bibinfo{date}{jun} \bibinfo{year}{2024}), \bibinfo{pages}{625--630}.
\newblock
\urldef\tempurl%
\url{https://doi.org/10.1038/s41586-024-07421-0}
\showDOI{\tempurl}


\bibitem[Gao et~al\mbox{.}(2024a)]%
        {gao2024omnimath}
\bibfield{author}{\bibinfo{person}{Bofei Gao}, \bibinfo{person}{Feifan Song}, \bibinfo{person}{Zhe Yang}, \bibinfo{person}{Zefan Cai}, \bibinfo{person}{Yibo Miao}, \bibinfo{person}{Qingxiu Dong}, \bibinfo{person}{Lei Li}, \bibinfo{person}{Chenghao Ma}, \bibinfo{person}{Liang Chen}, \bibinfo{person}{Runxin Xu}, \bibinfo{person}{Zhengyang Tang}, \bibinfo{person}{Benyou Wang}, \bibinfo{person}{Daoguang Zan}, \bibinfo{person}{Shanghaoran Quan}, \bibinfo{person}{Ge Zhang}, \bibinfo{person}{Lei Sha}, \bibinfo{person}{Yichang Zhang}, \bibinfo{person}{Xuancheng Ren}, \bibinfo{person}{Tianyu Liu}, {and} \bibinfo{person}{Baobao Chang}.} \bibinfo{year}{2024}\natexlab{a}.
\newblock \bibinfo{title}{Omni-MATH: A Universal Olympiad Level Mathematic Benchmark for Large Language Models}.
\newblock
\newblock
\urldef\tempurl%
\url{https://doi.org/10.48550/arXiv.2410.07985}
\showDOI{\tempurl}
\showeprint[arxiv]{2410.07985}~[cs.CL]


\bibitem[Gao et~al\mbox{.}(2024b)]%
        {eval-harness}
\bibfield{author}{\bibinfo{person}{Leo Gao}, \bibinfo{person}{Jonathan Tow}, \bibinfo{person}{Baber Abbasi}, \bibinfo{person}{Stella Biderman}, \bibinfo{person}{Sid Black}, \bibinfo{person}{Anthony DiPofi}, \bibinfo{person}{Charles Foster}, \bibinfo{person}{Laurence Golding}, \bibinfo{person}{Jeffrey Hsu}, \bibinfo{person}{Alain Le~Noac'h}, \bibinfo{person}{Haonan Li}, \bibinfo{person}{Kyle McDonell}, \bibinfo{person}{Niklas Muennighoff}, \bibinfo{person}{Chris Ociepa}, \bibinfo{person}{Jason Phang}, \bibinfo{person}{Laria Reynolds}, \bibinfo{person}{Hailey Schoelkopf}, \bibinfo{person}{Aviya Skowron}, \bibinfo{person}{Lintang Sutawika}, \bibinfo{person}{Eric Tang}, \bibinfo{person}{Anish Thite}, \bibinfo{person}{Ben Wang}, \bibinfo{person}{Kevin Wang}, {and} \bibinfo{person}{Andy Zou}.} \bibinfo{year}{2024}\natexlab{b}.
\newblock \bibinfo{title}{The Language Model Evaluation Harness}.
\newblock
\newblock
\urldef\tempurl%
\url{https://doi.org/10.5281/zenodo.12608602}
\showDOI{\tempurl}


\bibitem[Gekhman et~al\mbox{.}(2024)]%
        {gekhman-etal-2024-fine}
\bibfield{author}{\bibinfo{person}{Zorik Gekhman}, \bibinfo{person}{Gal Yona}, \bibinfo{person}{Roee Aharoni}, \bibinfo{person}{Matan Eyal}, \bibinfo{person}{Amir Feder}, \bibinfo{person}{Roi Reichart}, {and} \bibinfo{person}{Jonathan Herzig}.} \bibinfo{year}{2024}\natexlab{}.
\newblock \showarticletitle{Does Fine-Tuning {LLM}s on New Knowledge Encourage Hallucinations?}. In \bibinfo{booktitle}{\emph{Proceedings of the 2024 Conference on Empirical Methods in Natural Language Processing}}, \bibfield{editor}{\bibinfo{person}{Yaser Al-Onaizan}, \bibinfo{person}{Mohit Bansal}, {and} \bibinfo{person}{Yun-Nung Chen}} (Eds.). \bibinfo{publisher}{Association for Computational Linguistics}, \bibinfo{address}{Miami, Florida, USA}, \bibinfo{pages}{7765--7784}.
\newblock
\urldef\tempurl%
\url{https://doi.org/10.18653/v1/2024.emnlp-main.444}
\showDOI{\tempurl}


\bibitem[Goldreich(2001)]%
        {Goldreich2001}
\bibfield{author}{\bibinfo{person}{Oded Goldreich}.} \bibinfo{year}{2001}\natexlab{}.
\newblock \bibinfo{booktitle}{\emph{Foundations of Cryptography: Volume 1, Basic Tools}}.
\newblock \bibinfo{publisher}{Cambridge University Press}, \bibinfo{address}{Cambridge, United Kingdom}.
\newblock


\bibitem[Good(1953)]%
        {good_population_1953}
\bibfield{author}{\bibinfo{person}{I.~J. Good}.} \bibinfo{year}{1953}\natexlab{}.
\newblock \showarticletitle{The {Population} {Frequences} of {Species} and the {Estimation} of {Population} {Parameters}}.
\newblock \bibinfo{journal}{\emph{Biometrika}} \bibinfo{volume}{40}, \bibinfo{number}{3-4} (\bibinfo{date}{Dec.} \bibinfo{year}{1953}), \bibinfo{pages}{237--264}.
\newblock
\showISSN{0006-3444}
\urldef\tempurl%
\url{https://doi.org/10.1093/biomet/40.3-4.237}
\showDOI{\tempurl}


\bibitem[{Google DeepMind}(2025)]%
        {googledeepmind_gemini25pro_2025}
\bibfield{author}{\bibinfo{person}{{Google DeepMind}}.} \bibinfo{year}{2025}\natexlab{}.
\newblock \bibinfo{title}{Gemini 2.5 Pro Model Card}.
\newblock \bibinfo{howpublished}{\url{https://storage.googleapis.com/model-cards/documents/gemini-2.5-pro.pdf}}.
\newblock
\newblock
\shownote{Accessed: 27 Jun 2025.}.


\bibitem[Grice(1975)]%
        {grice1975_logic}
\bibfield{author}{\bibinfo{person}{H.~P. Grice}.} \bibinfo{year}{1975}\natexlab{}.
\newblock \showarticletitle{Logic and Conversation}.
\newblock In \bibinfo{booktitle}{\emph{Syntax and Semantics, Vol.\ 3: Speech Acts}}, \bibfield{editor}{\bibinfo{person}{Peter Cole} {and} \bibinfo{person}{Jerry~L. Morgan}} (Eds.). \bibinfo{publisher}{Academic Press}, \bibinfo{address}{New York}, \bibinfo{pages}{41--58}.
\newblock


\bibitem[Hanneke et~al\mbox{.}(2018)]%
        {active18nonsense}
\bibfield{author}{\bibinfo{person}{Steve Hanneke}, \bibinfo{person}{Adam~Tauman Kalai}, \bibinfo{person}{Gautam Kamath}, {and} \bibinfo{person}{Christos Tzamos}.} \bibinfo{year}{2018}\natexlab{}.
\newblock \showarticletitle{Actively Avoiding Nonsense in Generative Models}. In \bibinfo{booktitle}{\emph{Proceedings of the 31st Conference on Learning Theory}} \emph{(\bibinfo{series}{Proceedings of Machine Learning Research}, Vol.~\bibinfo{volume}{75})}, \bibfield{editor}{\bibinfo{person}{Sébastien Bubeck}, \bibinfo{person}{Vianney Perchet}, {and} \bibinfo{person}{Philippe Rigollet}} (Eds.). \bibinfo{publisher}{PMLR}, \bibinfo{address}{Stockholm, Sweden}, \bibinfo{pages}{209--227}.
\newblock
\urldef\tempurl%
\url{https://proceedings.mlr.press/v75/hanneke18a.html}
\showURL{%
\tempurl}


\bibitem[Hendrycks et~al\mbox{.}(2021)]%
        {hendrycks2021measuring}
\bibfield{author}{\bibinfo{person}{Dan Hendrycks}, \bibinfo{person}{Collin Burns}, \bibinfo{person}{Saurav Kadavath}, \bibinfo{person}{Akul Arora}, \bibinfo{person}{Steven Basart}, \bibinfo{person}{Eric Tang}, \bibinfo{person}{Dawn Song}, {and} \bibinfo{person}{Jacob Steinhardt}.} \bibinfo{year}{2021}\natexlab{}.
\newblock \bibinfo{title}{Measuring Mathematical Problem Solving with the {MATH} Dataset}.
\newblock
\newblock
\showeprint[arxiv]{2103.03874}~[cs.LG]
\urldef\tempurl%
\url{https://arxiv.org/abs/2103.03874}
\showURL{%
\tempurl}


\bibitem[Hong et~al\mbox{.}(2024)]%
        {hong2024hallucinationsleaderboardopen}
\bibfield{author}{\bibinfo{person}{Giwon Hong}, \bibinfo{person}{Aryo~Pradipta Gema}, \bibinfo{person}{Rohit Saxena}, \bibinfo{person}{Xiaotang Du}, \bibinfo{person}{Ping Nie}, \bibinfo{person}{Yu Zhao}, \bibinfo{person}{Laura Perez-Beltrachini}, \bibinfo{person}{Max Ryabinin}, \bibinfo{person}{Xuanli He}, \bibinfo{person}{Clémentine Fourrier}, {and} \bibinfo{person}{Pasquale Minervini}.} \bibinfo{year}{2024}\natexlab{}.
\newblock \bibinfo{title}{The Hallucinations Leaderboard -- An Open Effort to Measure Hallucinations in Large Language Models}.
\newblock
\newblock
\showeprint[arxiv]{2404.05904}~[cs.CL]
\urldef\tempurl%
\url{https://arxiv.org/abs/2404.05904}
\showURL{%
\tempurl}


\bibitem[{Hugging Face}(2024)]%
        {openllm_v2}
\bibfield{author}{\bibinfo{person}{{Hugging Face}}.} \bibinfo{year}{2024}\natexlab{}.
\newblock \bibinfo{title}{Open LLM Leaderboard v2 Collection}.
\newblock \bibinfo{howpublished}{\url{https://huggingface.co/spaces/open-llm-leaderboard/blog}}.
\newblock
\newblock
\shownote{Accessed: 26 June 2025}.


\bibitem[Jeong(2024)]%
        {jeong2024hijacking}
\bibfield{author}{\bibinfo{person}{Joonhyun Jeong}.} \bibinfo{year}{2024}\natexlab{}.
\newblock \showarticletitle{Hijacking Context in Large Multi-modal Models}. In \bibinfo{booktitle}{\emph{ICLR 2024 Workshop on Reliable and Responsible Foundation Models}}.
\newblock


\bibitem[Ji et~al\mbox{.}(2023)]%
        {ji2023survey}
\bibfield{author}{\bibinfo{person}{Ziwei Ji}, \bibinfo{person}{Nayeon Lee}, \bibinfo{person}{Rita Frieske}, \bibinfo{person}{Tiezheng Yu}, \bibinfo{person}{Dan Su}, \bibinfo{person}{Yan Xu}, \bibinfo{person}{Etsuko Ishii}, \bibinfo{person}{Yejin Bang}, \bibinfo{person}{Delong Chen}, \bibinfo{person}{Wenliang Dai}, \bibinfo{person}{Ho~Shu Chan}, \bibinfo{person}{Andrea Madotto}, {and} \bibinfo{person}{Pascale Fung}.} \bibinfo{year}{2023}\natexlab{}.
\newblock \showarticletitle{Survey of Hallucination in Natural Language Generation}.
\newblock \bibinfo{journal}{\emph{Comput. Surveys}} \bibinfo{volume}{55}, \bibinfo{number}{12}, Article \bibinfo{articleno}{248} (\bibinfo{year}{2023}), \bibinfo{numpages}{248:1--248:38}~pages.
\newblock
\urldef\tempurl%
\url{https://doi.org/10.1145/3571730}
\showDOI{\tempurl}


\bibitem[Jimenez et~al\mbox{.}(2024)]%
        {jimenez2024swebench}
\bibfield{author}{\bibinfo{person}{Carlos~E. Jimenez}, \bibinfo{person}{John Yang}, \bibinfo{person}{Alexander Wettig}, \bibinfo{person}{Shunyu Yao}, \bibinfo{person}{Kexin Pei}, \bibinfo{person}{Ofir Press}, {and} \bibinfo{person}{Karthik~R. Narasimhan}.} \bibinfo{year}{2024}\natexlab{}.
\newblock \showarticletitle{SWE-bench: Can Language Models Resolve Real-world GitHub Issues?}. In \bibinfo{booktitle}{\emph{Proceedings of the 12th International Conference on Learning Representations (ICLR)}}.
\newblock
\urldef\tempurl%
\url{https://proceedings.iclr.cc/paper/2024/hash/edac78c3e300629acfe6cbe9ca88fb84}
\showURL{%
\tempurl}


\bibitem[Jones(2025)]%
        {Jones2025Hallucinations}
\bibfield{author}{\bibinfo{person}{Nicola Jones}.} \bibinfo{year}{2025}\natexlab{}.
\newblock \showarticletitle{AI hallucinations can't be stopped — but these techniques can limit their damage}.
\newblock \bibinfo{journal}{\emph{Nature}} \bibinfo{volume}{637}, \bibinfo{number}{8047} (\bibinfo{date}{Jan.} \bibinfo{year}{2025}), \bibinfo{pages}{778--780}.
\newblock
\urldef\tempurl%
\url{https://doi.org/10.1038/d41586-025-00068-5}
\showDOI{\tempurl}


\bibitem[Kadavath et~al\mbox{.}(2022)]%
        {Kadavath2022LanguageM}
\bibfield{author}{\bibinfo{person}{Saurav Kadavath}, \bibinfo{person}{Tom Conerly}, \bibinfo{person}{Amanda Askell}, \bibinfo{person}{Tom Henighan}, \bibinfo{person}{Dawn Drain}, \bibinfo{person}{Ethan Perez}, \bibinfo{person}{Nicholas Schiefer}, \bibinfo{person}{Zac Hatfield-Dodds}, \bibinfo{person}{Nova Dassarma}, \bibinfo{person}{Eli Tran-Johnson}, \bibinfo{person}{Scott Johnston}, \bibinfo{person}{Sheer El-Showk}, \bibinfo{person}{Andy Jones}, \bibinfo{person}{Nelson Elhage}, \bibinfo{person}{Tristan Hume}, \bibinfo{person}{Anna Chen}, \bibinfo{person}{Yuntao Bai}, \bibinfo{person}{Sam Bowman}, \bibinfo{person}{Stanislav Fort}, \bibinfo{person}{Deep Ganguli}, \bibinfo{person}{Danny Hernandez}, \bibinfo{person}{Josh Jacobson}, \bibinfo{person}{Jackson Kernion}, \bibinfo{person}{Shauna Kravec}, \bibinfo{person}{Liane Lovitt}, \bibinfo{person}{Kamal Ndousse}, \bibinfo{person}{Catherine Olsson}, \bibinfo{person}{Sam Ringer}, \bibinfo{person}{Dario Amodei}, \bibinfo{person}{Tom~B. Brown}, \bibinfo{person}{Jack
  Clark}, \bibinfo{person}{Nicholas Joseph}, \bibinfo{person}{Benjamin Mann}, \bibinfo{person}{Sam McCandlish}, \bibinfo{person}{Chris Olah}, {and} \bibinfo{person}{Jared Kaplan}.} \bibinfo{year}{2022}\natexlab{}.
\newblock \showarticletitle{Language Models (Mostly) Know What They Know}.
\newblock \bibinfo{journal}{\emph{ArXiv}}  \bibinfo{volume}{abs/2207.05221} (\bibinfo{year}{2022}).
\newblock
\urldef\tempurl%
\url{https://arxiv.org/abs/2207.05221}
\showURL{%
\tempurl}


\bibitem[Kalai(2001)]%
        {kalai_probabilistic_2001}
\bibfield{author}{\bibinfo{person}{Adam Kalai}.} \bibinfo{year}{2001}\natexlab{}.
\newblock \emph{\bibinfo{title}{Probabilistic and on-line methods in machine learning}}.
\newblock {PhD} {Thesis}. \bibinfo{school}{Carnegie Mellon University}.
\newblock


\bibitem[Kalai and Vempala(2024)]%
        {kv23}
\bibfield{author}{\bibinfo{person}{Adam~Tauman Kalai} {and} \bibinfo{person}{Santosh~S. Vempala}.} \bibinfo{year}{2024}\natexlab{}.
\newblock \showarticletitle{Calibrated Language Models Must Hallucinate}. In \bibinfo{booktitle}{\emph{Proceedings of the 56th Annual ACM Symposium on Theory of Computing}} (Vancouver, BC, Canada) \emph{(\bibinfo{series}{STOC 2024})}. \bibinfo{publisher}{Association for Computing Machinery}, \bibinfo{address}{New York, NY, USA}, \bibinfo{pages}{160–171}.
\newblock
\showISBNx{9798400703836}
\urldef\tempurl%
\url{https://doi.org/10.1145/3618260.3649777}
\showDOI{\tempurl}


\bibitem[Kalavasis et~al\mbox{.}(2025)]%
        {Kalavasis2025Limits}
\bibfield{author}{\bibinfo{person}{Alkis Kalavasis}, \bibinfo{person}{Anay Mehrotra}, {and} \bibinfo{person}{Grigoris Velegkas}.} \bibinfo{year}{2025}\natexlab{}.
\newblock \showarticletitle{On the Limits of Language Generation: Trade-Offs between Hallucination and Mode-Collapse}. In \bibinfo{booktitle}{\emph{Proceedings of the 57th Annual ACM Symposium on Theory of Computing (STOC ’25)}}, \bibfield{editor}{\bibinfo{person}{Michal Kouck\'{y}} {and} \bibinfo{person}{Nikhil Bansal}} (Eds.). \bibinfo{publisher}{Association for Computing Machinery}, \bibinfo{address}{Prague, Czechia}, \bibinfo{pages}{1732--1743}.
\newblock
\urldef\tempurl%
\url{https://doi.org/10.1145/3717823.3718108}
\showDOI{\tempurl}


\bibitem[Kearns et~al\mbox{.}(1994)]%
        {kearns_toward_1994}
\bibfield{author}{\bibinfo{person}{Michael~J. Kearns}, \bibinfo{person}{Robert~E. Schapire}, {and} \bibinfo{person}{Linda~M. Sellie}.} \bibinfo{year}{1994}\natexlab{}.
\newblock \showarticletitle{Toward efficient agnostic learning}.
\newblock \bibinfo{journal}{\emph{Machine Learning}} \bibinfo{volume}{17}, \bibinfo{number}{2-3} (\bibinfo{date}{Nov.} \bibinfo{year}{1994}), \bibinfo{pages}{115--141}.
\newblock
\showISSN{0885-6125, 1573-0565}
\urldef\tempurl%
\url{https://doi.org/10.1007/BF00993468}
\showDOI{\tempurl}


\bibitem[Kearns and Vazirani(1994)]%
        {kearns1994introduction}
\bibfield{author}{\bibinfo{person}{Michael~J. Kearns} {and} \bibinfo{person}{Umesh~V. Vazirani}.} \bibinfo{year}{1994}\natexlab{}.
\newblock \bibinfo{booktitle}{\emph{An Introduction to Computational Learning Theory}}.
\newblock \bibinfo{publisher}{MIT Press}, \bibinfo{address}{Cambridge, MA, USA}.
\newblock
\showISBNx{9780262111935}


\bibitem[Kleinberg and Mullainathan(2024)]%
        {kleinberg2024language}
\bibfield{author}{\bibinfo{person}{Jon Kleinberg} {and} \bibinfo{person}{Sendhil Mullainathan}.} \bibinfo{year}{2024}\natexlab{}.
\newblock \showarticletitle{Language Generation in the Limit}. In \bibinfo{booktitle}{\emph{Advances in Neural Information Processing Systems 37 (NeurIPS 2024)}}. \bibinfo{publisher}{Curran Associates, Inc.}, \bibinfo{pages}{66058--66079}.
\newblock
\urldef\tempurl%
\url{https://proceedings.neurips.cc/paper_files/paper/2024/hash/7988e9b3876ad689e921ce05d711442f-Abstract-Conference.html}
\showURL{%
\tempurl}


\bibitem[Lee et~al\mbox{.}(2022)]%
        {lee2022factuality}
\bibfield{author}{\bibinfo{person}{Nayeon Lee}, \bibinfo{person}{Wei Ping}, \bibinfo{person}{Peng Xu}, \bibinfo{person}{Mostofa Patwary}, \bibinfo{person}{Pascale Fung}, \bibinfo{person}{Mohammad Shoeybi}, {and} \bibinfo{person}{Bryan Catanzaro}.} \bibinfo{year}{2022}\natexlab{}.
\newblock \bibinfo{title}{Factuality Enhanced Language Models for Open-Ended Text Generation}.
\newblock
\newblock
\showeprint[arxiv]{2206.04624}~[cs.CL]
\urldef\tempurl%
\url{https://arxiv.org/abs/2206.04624}
\showURL{%
\tempurl}


\bibitem[Leffer(2024)]%
        {Leffer2024SAInevitable}
\bibfield{author}{\bibinfo{person}{Lauren Leffer}.} \bibinfo{year}{2024}\natexlab{}.
\newblock \bibinfo{booktitle}{\emph{AI Chatbots Will Never Stop Hallucinating}}.
\newblock Scientific American.
\newblock
\urldef\tempurl%
\url{https://www.scientificamerican.com/article/chatbot-hallucinations-inevitable/}
\showURL{%
\tempurl}


\bibitem[Levy et~al\mbox{.}(2021)]%
        {levy-etal-2021-investigating}
\bibfield{author}{\bibinfo{person}{Sharon Levy}, \bibinfo{person}{Michael Saxon}, {and} \bibinfo{person}{William~Yang Wang}.} \bibinfo{year}{2021}\natexlab{}.
\newblock \showarticletitle{Investigating Memorization of Conspiracy Theories in Text Generation}. In \bibinfo{booktitle}{\emph{Findings of the Association for Computational Linguistics: ACL-IJCNLP 2021}}. \bibinfo{publisher}{Association for Computational Linguistics}, \bibinfo{address}{Online}, \bibinfo{pages}{4718--4729}.
\newblock
\urldef\tempurl%
\url{https://doi.org/10.18653/v1/2021.findings-acl.416}
\showDOI{\tempurl}


\bibitem[Lewis et~al\mbox{.}(2020)]%
        {lewis2020rag}
\bibfield{author}{\bibinfo{person}{Patrick Lewis}, \bibinfo{person}{Ethan Perez}, \bibinfo{person}{Aleksandra Piktus}, \bibinfo{person}{Fabio Petroni}, \bibinfo{person}{Vladimir Karpukhin}, \bibinfo{person}{Naman Goyal}, \bibinfo{person}{Heinrich K\"{u}ttler}, \bibinfo{person}{Mike Lewis}, \bibinfo{person}{Wen-tau Yih}, \bibinfo{person}{Tim Rockt\"{a}schel}, \bibinfo{person}{Sebastian Riedel}, {and} \bibinfo{person}{Douwe Kiela}.} \bibinfo{year}{2020}\natexlab{}.
\newblock \showarticletitle{Retrieval-Augmented Generation for Knowledge-Intensive NLP Tasks}. In \bibinfo{booktitle}{\emph{Advances in Neural Information Processing Systems}}, \bibfield{editor}{\bibinfo{person}{H.~Larochelle}, \bibinfo{person}{M.~Ranzato}, \bibinfo{person}{R.~Hadsell}, \bibinfo{person}{M.F. Balcan}, {and} \bibinfo{person}{H.~Lin}} (Eds.), Vol.~\bibinfo{volume}{33}. \bibinfo{publisher}{Curran Associates, Inc.}, \bibinfo{pages}{9459--9474}.
\newblock
\urldef\tempurl%
\url{https://proceedings.neurips.cc/paper_files/paper/2020/file/6b493230205f780e1bc26945df7481e5-Paper.pdf}
\showURL{%
\tempurl}


\bibitem[Li et~al\mbox{.}(2022)]%
        {li2022pre}
\bibfield{author}{\bibinfo{person}{Shaobo Li}, \bibinfo{person}{Xiaoguang Li}, \bibinfo{person}{Lifeng Shang}, \bibinfo{person}{Zhenhua Dong}, \bibinfo{person}{Chengjie Sun}, \bibinfo{person}{Bingquan Liu}, \bibinfo{person}{Zhenzhou Ji}, \bibinfo{person}{Xin Jiang}, {and} \bibinfo{person}{Qun Liu}.} \bibinfo{year}{2022}\natexlab{}.
\newblock \bibinfo{title}{How Pre-Trained Language Models Capture Factual Knowledge? A Causal-Inspired Analysis}.
\newblock
\newblock
\showeprint[arxiv]{2203.16747}~[cs.CL]
\urldef\tempurl%
\url{https://arxiv.org/abs/2203.16747}
\showURL{%
\tempurl}


\bibitem[Liang et~al\mbox{.}(2023)]%
        {liang2023holistic}
\bibfield{author}{\bibinfo{person}{Percy Liang}, \bibinfo{person}{Rishi Bommasani}, \bibinfo{person}{Tony Lee}, \bibinfo{person}{Dimitris Tsipras}, \bibinfo{person}{Dilara Soylu}, \bibinfo{person}{Michihiro Yasunaga}, \bibinfo{person}{Yian Zhang}, \bibinfo{person}{Deepak Narayanan}, \bibinfo{person}{Yuhuai Wu}, \bibinfo{person}{Ananya Kumar}, \bibinfo{person}{Benjamin Newman}, \bibinfo{person}{Binhang Yuan}, \bibinfo{person}{Bobby Yan}, \bibinfo{person}{Ce Zhang}, \bibinfo{person}{Christian Cosgrove}, \bibinfo{person}{Christopher~D Manning}, \bibinfo{person}{Christopher Re}, \bibinfo{person}{Diana Acosta-Navas}, \bibinfo{person}{Drew~A. Hudson}, \bibinfo{person}{Eric Zelikman}, \bibinfo{person}{Esin Durmus}, \bibinfo{person}{Faisal Ladhak}, \bibinfo{person}{Frieda Rong}, \bibinfo{person}{Hongyu Ren}, \bibinfo{person}{Huaxiu Yao}, \bibinfo{person}{Jue WANG}, \bibinfo{person}{Keshav Santhanam}, \bibinfo{person}{Laurel Orr}, \bibinfo{person}{Lucia Zheng}, \bibinfo{person}{Mert Yuksekgonul},
  \bibinfo{person}{Mirac Suzgun}, \bibinfo{person}{Nathan Kim}, \bibinfo{person}{Neel Guha}, \bibinfo{person}{Niladri~S. Chatterji}, \bibinfo{person}{Omar Khattab}, \bibinfo{person}{Peter Henderson}, \bibinfo{person}{Qian Huang}, \bibinfo{person}{Ryan~Andrew Chi}, \bibinfo{person}{Sang~Michael Xie}, \bibinfo{person}{Shibani Santurkar}, \bibinfo{person}{Surya Ganguli}, \bibinfo{person}{Tatsunori Hashimoto}, \bibinfo{person}{Thomas Icard}, \bibinfo{person}{Tianyi Zhang}, \bibinfo{person}{Vishrav Chaudhary}, \bibinfo{person}{William Wang}, \bibinfo{person}{Xuechen Li}, \bibinfo{person}{Yifan Mai}, \bibinfo{person}{Yuhui Zhang}, {and} \bibinfo{person}{Yuta Koreeda}.} \bibinfo{year}{2023}\natexlab{}.
\newblock \showarticletitle{Holistic Evaluation of Language Models}.
\newblock \bibinfo{journal}{\emph{Transactions on Machine Learning Research}} (\bibinfo{year}{2023}).
\newblock
\showISSN{2835-8856}
\urldef\tempurl%
\url{https://openreview.net/forum?id=iO4LZibEqW}
\showURL{%
\tempurl}


\bibitem[Lin et~al\mbox{.}(2025)]%
        {lin2024wildbench}
\bibfield{author}{\bibinfo{person}{Bill~Yuchen Lin}, \bibinfo{person}{Yuntian Deng}, \bibinfo{person}{Khyathi Chandu}, \bibinfo{person}{Abhilasha Ravichander}, \bibinfo{person}{Valentina Pyatkin}, \bibinfo{person}{Nouha Dziri}, \bibinfo{person}{Ronan~Le Bras}, {and} \bibinfo{person}{Yejin Choi}.} \bibinfo{year}{2025}\natexlab{}.
\newblock \showarticletitle{WildBench: Benchmarking {LLMs} with Challenging Tasks from Real Users in the Wild}. In \bibinfo{booktitle}{\emph{Proceedings of the 13th International Conference on Learning Representations (ICLR)}}.
\newblock
\urldef\tempurl%
\url{https://openreview.net/forum?id=MKEHCx25xp}
\showURL{%
\tempurl}


\bibitem[Lin et~al\mbox{.}(2022a)]%
        {lin2022teaching}
\bibfield{author}{\bibinfo{person}{Stephanie Lin}, \bibinfo{person}{Jacob Hilton}, {and} \bibinfo{person}{Owain Evans}.} \bibinfo{year}{2022}\natexlab{a}.
\newblock \showarticletitle{Teaching Models to Express Their Uncertainty in Words}.
\newblock \bibinfo{journal}{\emph{Transactions on Machine Learning Research}}  \bibinfo{volume}{2022} (\bibinfo{year}{2022}).
\newblock
\showISSN{2835-8856}
\urldef\tempurl%
\url{https://openreview.net/forum?id=8s8K2UZGTZ}
\showURL{%
\tempurl}


\bibitem[Lin et~al\mbox{.}(2022b)]%
        {lin-etal-2022-truthfulqa}
\bibfield{author}{\bibinfo{person}{Stephanie Lin}, \bibinfo{person}{Jacob Hilton}, {and} \bibinfo{person}{Owain Evans}.} \bibinfo{year}{2022}\natexlab{b}.
\newblock \showarticletitle{TruthfulQA: Measuring How Models Mimic Human Falsehoods}. In \bibinfo{booktitle}{\emph{Proceedings of the 60th Annual Meeting of the Association for Computational Linguistics (Volume 1: Long Papers)}}. \bibinfo{publisher}{Association for Computational Linguistics}, \bibinfo{address}{Dublin, Ireland}, \bibinfo{pages}{3214--3252}.
\newblock
\urldef\tempurl%
\url{https://doi.org/10.18653/v1/2022.acl-long.229}
\showDOI{\tempurl}


\bibitem[Ma et~al\mbox{.}(2025)]%
        {pragmatics2025}
\bibfield{author}{\bibinfo{person}{Bolei Ma}, \bibinfo{person}{Yuting Li}, \bibinfo{person}{Wei Zhou}, \bibinfo{person}{Ziwei Gong}, \bibinfo{person}{Yang~Janet Liu}, \bibinfo{person}{Katja Jasinskaja}, \bibinfo{person}{Annemarie Friedrich}, \bibinfo{person}{Julia Hirschberg}, \bibinfo{person}{Frauke Kreuter}, {and} \bibinfo{person}{Barbara Plank}.} \bibinfo{year}{2025}\natexlab{}.
\newblock \showarticletitle{Pragmatics in the Era of Large Language Models: A Survey on Datasets, Evaluation, Opportunities and Challenges}. In \bibinfo{booktitle}{\emph{Proceedings of the 63rd Annual Meeting of the Association for Computational Linguistics (Volume 1: Long Papers)}}, \bibfield{editor}{\bibinfo{person}{Wanxiang Che}, \bibinfo{person}{Joyce Nabende}, \bibinfo{person}{Ekaterina Shutova}, {and} \bibinfo{person}{Mohammad~Taher Pilehvar}} (Eds.). \bibinfo{publisher}{Association for Computational Linguistics}, \bibinfo{address}{Vienna, Austria}, \bibinfo{pages}{8679--8696}.
\newblock
\showISBNx{979-8-89176-251-0}
\urldef\tempurl%
\url{https://doi.org/10.18653/v1/2025.acl-long.425}
\showDOI{\tempurl}


\bibitem[Manakul et~al\mbox{.}(2023)]%
        {manakul-etal-2023-selfcheckgpt}
\bibfield{author}{\bibinfo{person}{Potsawee Manakul}, \bibinfo{person}{Adian Liusie}, {and} \bibinfo{person}{Mark Gales}.} \bibinfo{year}{2023}\natexlab{}.
\newblock \showarticletitle{{S}elf{C}heck{GPT}: Zero-Resource Black-Box Hallucination Detection for Generative Large Language Models}. In \bibinfo{booktitle}{\emph{Proceedings of the 2023 Conference on Empirical Methods in Natural Language Processing}}, \bibfield{editor}{\bibinfo{person}{Houda Bouamor}, \bibinfo{person}{Juan Pino}, {and} \bibinfo{person}{Kalika Bali}} (Eds.). \bibinfo{publisher}{Association for Computational Linguistics}, \bibinfo{address}{Singapore}, \bibinfo{pages}{9004--9017}.
\newblock
\urldef\tempurl%
\url{https://doi.org/10.18653/v1/2023.emnlp-main.557}
\showDOI{\tempurl}


\bibitem[Maslej et~al\mbox{.}(2025)]%
        {aiindex2025}
\bibfield{author}{\bibinfo{person}{Nestor Maslej}, \bibinfo{person}{Loredana Fattorini}, \bibinfo{person}{Raymond Perrault}, \bibinfo{person}{Yolanda Gil}, \bibinfo{person}{Vanessa Parli}, \bibinfo{person}{Njenga Kariuki}, \bibinfo{person}{Emily Capstick}, \bibinfo{person}{Anka Reuel}, \bibinfo{person}{Erik Brynjolfsson}, \bibinfo{person}{John Etchemendy}, \bibinfo{person}{Katrina Ligett}, \bibinfo{person}{Terah Lyons}, \bibinfo{person}{James Manyika}, \bibinfo{person}{Juan~Carlos Niebles}, \bibinfo{person}{Yoav Shoham}, \bibinfo{person}{Russell Wald}, \bibinfo{person}{Tobi Walsh}, \bibinfo{person}{Armin Hamrah}, \bibinfo{person}{Lapo Santarlasci}, \bibinfo{person}{Julia Betts~Lotufo}, \bibinfo{person}{Alexandra Rome}, \bibinfo{person}{Andrew Shi}, {and} \bibinfo{person}{Sukrut Oak}.} \bibinfo{year}{2025}\natexlab{}.
\newblock \bibinfo{booktitle}{\emph{Artificial Intelligence Index Report 2025}}.
\newblock \bibinfo{type}{Annual Report}. \bibinfo{institution}{AI Index Steering Committee, Institute for Human-Centered AI, Stanford University}, \bibinfo{address}{Stanford, CA}.
\newblock
\urldef\tempurl%
\url{https://hai.stanford.edu/ai-index/2025-ai-index-report}
\showURL{%
\tempurl}
\newblock
\shownote{Accessed: 27 Jun 2025}.


\bibitem[Maynez et~al\mbox{.}(2020)]%
        {Maynez2020}
\bibfield{author}{\bibinfo{person}{Joshua Maynez}, \bibinfo{person}{Shashi Narayan}, \bibinfo{person}{Bernd Bohnet}, {and} \bibinfo{person}{Ryan McDonald}.} \bibinfo{year}{2020}\natexlab{}.
\newblock \showarticletitle{On Faithfulness and Factuality in Abstractive Summarization}. In \bibinfo{booktitle}{\emph{Proceedings of the 58th Annual Meeting of the Association for Computational Linguistics (ACL)}}. \bibinfo{publisher}{Association for Computational Linguistics}, \bibinfo{address}{Online}, \bibinfo{pages}{1906--1919}.
\newblock
\urldef\tempurl%
\url{https://aclanthology.org/2020.acl-main.173}
\showURL{%
\tempurl}


\bibitem[McAllester and Ortiz(2003)]%
        {mcallester_concentration_2003}
\bibfield{author}{\bibinfo{person}{David McAllester} {and} \bibinfo{person}{Luis Ortiz}.} \bibinfo{year}{2003}\natexlab{}.
\newblock \showarticletitle{Concentration inequalities for the missing mass and for histogram rule error}.
\newblock \bibinfo{journal}{\emph{Journal of Machine Learning Research}} \bibinfo{volume}{4}, \bibinfo{number}{Oct} (\bibinfo{year}{2003}), \bibinfo{pages}{895--911}.
\newblock


\bibitem[McAllester and Schapire(2000)]%
        {ms00}
\bibfield{author}{\bibinfo{person}{David~A. McAllester} {and} \bibinfo{person}{Robert~E. Schapire}.} \bibinfo{year}{2000}\natexlab{}.
\newblock \showarticletitle{On the Convergence Rate of Good\textendash{}Turing Estimators}. In \bibinfo{booktitle}{\emph{Proceedings of the Thirteenth Annual Conference on Computational Learning Theory (COLT~2000)}}. \bibinfo{publisher}{Morgan Kaufmann}, \bibinfo{address}{Palo Alto, California, USA}, \bibinfo{pages}{1--6}.
\newblock
\showISBNx{1-55860-703-X}
\urldef\tempurl%
\url{https://www.learningtheory.org/colt2000/papers/McAllesterSchapire.pdf}
\showURL{%
\tempurl}


\bibitem[McDiarmid(1989)]%
        {McDiarmid1989}
\bibfield{author}{\bibinfo{person}{Colin McDiarmid}.} \bibinfo{year}{1989}\natexlab{}.
\newblock \showarticletitle{On the Method of Bounded Differences}.
\newblock In \bibinfo{booktitle}{\emph{Surveys in Combinatorics, 1989: Invited Papers at the Twelfth British Combinatorial Conference}}, \bibfield{editor}{\bibinfo{person}{J.~Siemons}} (Ed.). \bibinfo{series}{London Mathematical Society Lecture Note Series}, Vol.~\bibinfo{volume}{141}. \bibinfo{publisher}{Cambridge University Press}, \bibinfo{address}{Cambridge, UK}, \bibinfo{pages}{148--188}.
\newblock
\showISBNx{978-0-521-37823-9}
\urldef\tempurl%
\url{https://doi.org/10.1017/CBO9781107359949.008}
\showDOI{\tempurl}


\bibitem[Miao and Kearns(2025)]%
        {miao2025hallucination}
\bibfield{author}{\bibinfo{person}{Miranda~Muqing Miao} {and} \bibinfo{person}{Michael Kearns}.} \bibinfo{year}{2025}\natexlab{}.
\newblock \bibinfo{title}{Hallucination, Monofacts, and Miscalibration: An Empirical Investigation}.
\newblock
\newblock
\showeprint[arxiv]{2502.08666}~[cs.CL]
\urldef\tempurl%
\url{https://arxiv.org/abs/2502.08666}
\showURL{%
\tempurl}


\bibitem[Mielke et~al\mbox{.}(2022)]%
        {mielke-etal-2022-reducing}
\bibfield{author}{\bibinfo{person}{Sabrina~J. Mielke}, \bibinfo{person}{Arthur Szlam}, \bibinfo{person}{Emily Dinan}, {and} \bibinfo{person}{Y-Lan Boureau}.} \bibinfo{year}{2022}\natexlab{}.
\newblock \showarticletitle{Reducing Conversational Agents' Overconfidence Through Linguistic Calibration}.
\newblock \bibinfo{journal}{\emph{Transactions of the Association for Computational Linguistics}}  \bibinfo{volume}{10} (\bibinfo{year}{2022}), \bibinfo{pages}{857--872}.
\newblock
\urldef\tempurl%
\url{https://doi.org/10.1162/tacl_a_00494}
\showDOI{\tempurl}


\bibitem[Moreno-Torres et~al\mbox{.}(2012)]%
        {MorenoTorres2012}
\bibfield{author}{\bibinfo{person}{Jos{\'e}~G. Moreno-Torres}, \bibinfo{person}{Troy Raeder}, \bibinfo{person}{Roc{\'i}o Alaiz-Rodr{\'i}guez}, \bibinfo{person}{Nitesh~V. Chawla}, {and} \bibinfo{person}{Francisco Herrera}.} \bibinfo{year}{2012}\natexlab{}.
\newblock \showarticletitle{A unifying view on dataset shift in classification}.
\newblock \bibinfo{journal}{\emph{Pattern Recognition}} \bibinfo{volume}{45}, \bibinfo{number}{1} (\bibinfo{year}{2012}), \bibinfo{pages}{521--530}.
\newblock


\bibitem[Myrzakhan et~al\mbox{.}(2024)]%
        {openllm_v1}
\bibfield{author}{\bibinfo{person}{Aidar Myrzakhan}, \bibinfo{person}{Sondos~Mahmoud Bsharat}, {and} \bibinfo{person}{Zhiqiang Shen}.} \bibinfo{year}{2024}\natexlab{}.
\newblock \showarticletitle{Open-llm-leaderboard: From multi-choice to open-style questions for llms evaluation, benchmark, and arena}.
\newblock \bibinfo{howpublished}{\url{https://huggingface.co/spaces/open-llm-leaderboard/open_llm_leaderboard}}.
\newblock \bibinfo{journal}{\emph{arXiv preprint arXiv:2406.07545}} (\bibinfo{year}{2024}).
\newblock


\bibitem[Nakano et~al\mbox{.}(2021)]%
        {nakano2021webgpt}
\bibfield{author}{\bibinfo{person}{Reiichiro Nakano}, \bibinfo{person}{Jacob Hilton}, \bibinfo{person}{Suchir Balaji}, \bibinfo{person}{Jeff Wu}, \bibinfo{person}{Long Ouyang}, \bibinfo{person}{Christina Kim}, \bibinfo{person}{Christopher Hesse}, \bibinfo{person}{Shantanu Jain}, \bibinfo{person}{Vineet Kosaraju}, \bibinfo{person}{William Saunders}, \bibinfo{person}{Xu Jiang}, \bibinfo{person}{Karl Cobbe}, \bibinfo{person}{Tyna Eloundou}, \bibinfo{person}{Gretchen Krueger}, \bibinfo{person}{Kevin Button}, \bibinfo{person}{Matthew Knight}, \bibinfo{person}{Benjamin Chess}, {and} \bibinfo{person}{John Schulman}.} \bibinfo{year}{2021}\natexlab{}.
\newblock \showarticletitle{WebGPT: Browser-Assisted Question-Answering with Human Feedback}.
\newblock \bibinfo{journal}{\emph{CoRR}}  \bibinfo{volume}{abs/2112.09332} (\bibinfo{year}{2021}).
\newblock
\urldef\tempurl%
\url{https://arxiv.org/abs/2112.09332}
\showURL{%
\tempurl}


\bibitem[{OpenAI}(2023a)]%
        {openai_gpt-4_2023}
\bibfield{author}{\bibinfo{person}{{OpenAI}}.} \bibinfo{year}{2023}\natexlab{a}.
\newblock \bibinfo{title}{{GPT}-4 {Technical} {Report}}.
\newblock
\newblock
\urldef\tempurl%
\url{http://arxiv.org/abs/2303.08774}
\showURL{%
\tempurl}
\newblock
\shownote{arXiv:2303.08774 [cs]}.


\bibitem[{OpenAI}(2023b)]%
        {openai_improving_2023}
\bibfield{author}{\bibinfo{person}{{OpenAI}}.} \bibinfo{year}{2023}\natexlab{b}.
\newblock \bibinfo{title}{Improving Mathematical Reasoning with Process Supervision}.
\newblock \bibinfo{howpublished}{\url{https://openai.com/index/improving-mathematical-reasoning-with-process-supervision/}}.
\newblock
\newblock
\shownote{Research blog post published 31 May 2023. Accessed: 27 Jun 2025.}.


\bibitem[{OpenAI}(2024)]%
        {openai_learning_2024}
\bibfield{author}{\bibinfo{person}{{OpenAI}}.} \bibinfo{year}{2024}\natexlab{}.
\newblock \bibinfo{title}{Learning to Reason with LLMs}.
\newblock \bibinfo{howpublished}{\url{https://openai.com/index/learning-to-reason-with-llms/}}.
\newblock
\newblock
\shownote{Research blog post published 12 September 2024. Accessed: 27 Jun 2025.}.


\bibitem[{OpenAI}(2025a)]%
        {openai2025gpt5}
\bibfield{author}{\bibinfo{person}{{OpenAI}}.} \bibinfo{year}{2025}\natexlab{a}.
\newblock \bibinfo{booktitle}{\emph{GPT-5 System Card}}.
\newblock \bibinfo{type}{{T}echnical {R}eport}.
\newblock
\urldef\tempurl%
\url{https://cdn.openai.com/gpt-5-system-card.pdf}
\showURL{%
\tempurl}
\newblock
\shownote{Accessed: 2025-09-02.}.


\bibitem[{OpenAI}(2025b)]%
        {openai_deepresearch_2025}
\bibfield{author}{\bibinfo{person}{{OpenAI}}.} \bibinfo{year}{2025}\natexlab{b}.
\newblock \bibinfo{title}{Introducing Deep Research}.
\newblock \bibinfo{howpublished}{\url{https://openai.com/index/introducing-deep-research/}}.
\newblock
\newblock
\shownote{Blog post published 2 February 2025. Accessed: 27 Jun 2025.}.


\bibitem[{OpenAI}(2025c)]%
        {openai_gpt41_2025}
\bibfield{author}{\bibinfo{person}{{OpenAI}}.} \bibinfo{year}{2025}\natexlab{c}.
\newblock \bibinfo{title}{Introducing GPT-4.1 in the API}.
\newblock \bibinfo{howpublished}{\url{https://openai.com/index/gpt-4-1/}}.
\newblock
\newblock
\shownote{Blog post published 14 April 2025. Accessed: 27 Jun 2025.}.


\bibitem[{OpenAI}(2025d)]%
        {o3o4mini2025}
\bibfield{author}{\bibinfo{person}{{OpenAI}}.} \bibinfo{year}{2025}\natexlab{d}.
\newblock \bibinfo{title}{OpenAI o3 and o4‑mini System Card}.
\newblock \bibinfo{howpublished}{\url{https://openai.com/index/o3-o4-mini-system-card/}}.
\newblock
\newblock
\shownote{Accessed: 8 May 2025}.


\bibitem[Ouyang et~al\mbox{.}(2022)]%
        {ouyang2022training}
\bibfield{author}{\bibinfo{person}{Long Ouyang}, \bibinfo{person}{Jeffrey Wu}, \bibinfo{person}{Xu Jiang}, \bibinfo{person}{Diogo Almeida}, \bibinfo{person}{Carroll Wainwright}, \bibinfo{person}{Pamela Mishkin}, \bibinfo{person}{Chong Zhang}, \bibinfo{person}{Sandhini Agarwal}, \bibinfo{person}{Katarina Slama}, \bibinfo{person}{Alex Ray}, \bibinfo{person}{John Schulman}, \bibinfo{person}{Jacob Hilton}, \bibinfo{person}{Fraser Kelton}, \bibinfo{person}{Luke Miller}, \bibinfo{person}{Maddie Simens}, \bibinfo{person}{Amanda Askell}, \bibinfo{person}{Peter Welinder}, \bibinfo{person}{Paul~F. Christiano}, \bibinfo{person}{Jan Leike}, {and} \bibinfo{person}{Ryan Lowe}.} \bibinfo{year}{2022}\natexlab{}.
\newblock \showarticletitle{Training Language Models to Follow Instructions with Human Feedback}. In \bibinfo{booktitle}{\emph{Advances in Neural Information Processing Systems}}, Vol.~\bibinfo{volume}{35}. \bibinfo{pages}{27730--27744}.
\newblock
\urldef\tempurl%
\url{https://doi.org/10.5555/3600270.3602281}
\showDOI{\tempurl}


\bibitem[Parrish et~al\mbox{.}(2022)]%
        {parrish-etal-2022-bbq}
\bibfield{author}{\bibinfo{person}{Alicia Parrish}, \bibinfo{person}{Angelica Chen}, \bibinfo{person}{Nikita Nangia}, \bibinfo{person}{Vishakh Padmakumar}, \bibinfo{person}{Jason Phang}, \bibinfo{person}{Jana Thompson}, \bibinfo{person}{Phu~Mon Htut}, {and} \bibinfo{person}{Samuel Bowman}.} \bibinfo{year}{2022}\natexlab{}.
\newblock \showarticletitle{{BBQ}: A hand-built bias benchmark for question answering}. In \bibinfo{booktitle}{\emph{Findings of the Association for Computational Linguistics: ACL 2022}}, \bibfield{editor}{\bibinfo{person}{Smaranda Muresan}, \bibinfo{person}{Preslav Nakov}, {and} \bibinfo{person}{Aline Villavicencio}} (Eds.). \bibinfo{publisher}{Association for Computational Linguistics}, \bibinfo{address}{Dublin, Ireland}, \bibinfo{pages}{2086--2105}.
\newblock
\urldef\tempurl%
\url{https://doi.org/10.18653/v1/2022.findings-acl.165}
\showDOI{\tempurl}


\bibitem[Phan et~al\mbox{.}(2025)]%
        {phan2025humanitysexam}
\bibfield{author}{\bibinfo{person}{Long Phan}, \bibinfo{person}{Alice Gatti}, \bibinfo{person}{Ziwen Han}, \bibinfo{person}{Nathaniel Li}, \bibinfo{person}{Josephina Hu}, \bibinfo{person}{Hugh Zhang}, \bibinfo{person}{Chen Bo~Calvin Zhang}, \bibinfo{person}{Mohamed Shaaban}, \bibinfo{person}{John Ling}, \bibinfo{person}{Sean Shi}, \bibinfo{person}{Michael Choi}, \bibinfo{person}{Anish Agrawal}, \bibinfo{person}{Arnav Chopra}, \bibinfo{person}{Adam Khoja}, \bibinfo{person}{Ryan Kim}, \bibinfo{person}{Richard Ren}, \bibinfo{person}{Jason Hausenloy}, \bibinfo{person}{Oliver Zhang}, \bibinfo{person}{Mantas Mazeika}, \bibinfo{person}{Dmitry Dodonov}, \bibinfo{person}{Tung Nguyen}, \bibinfo{person}{Jaeho Lee}, {and} \bibinfo{person}{1000{+} others}.} \bibinfo{year}{2025}\natexlab{}.
\newblock \bibinfo{title}{Humanity's Last Exam}.
\newblock
\newblock
\urldef\tempurl%
\url{https://doi.org/10.48550/arXiv.2501.14249}
\showDOI{\tempurl}
\showeprint[arxiv]{2501.14249}~[cs.LG]


\bibitem[Qui{\~n}onero-Candela et~al\mbox{.}(2009)]%
        {QuioneroCandela2009}
\bibfield{editor}{\bibinfo{person}{Joaquin Qui{\~n}onero-Candela}, \bibinfo{person}{Masashi Sugiyama}, \bibinfo{person}{Anton Schwaighofer}, {and} \bibinfo{person}{Neil~D. Lawrence}} (Eds.). \bibinfo{year}{2009}\natexlab{}.
\newblock \bibinfo{booktitle}{\emph{Dataset Shift in Machine Learning}}.
\newblock \bibinfo{publisher}{MIT Press}, \bibinfo{address}{Cambridge, MA}.
\newblock


\bibitem[Rafailov et~al\mbox{.}(2023)]%
        {rafailov2023dpo}
\bibfield{author}{\bibinfo{person}{Rafael Rafailov}, \bibinfo{person}{Archit Sharma}, \bibinfo{person}{Eric Mitchell}, \bibinfo{person}{Stefano Ermon}, \bibinfo{person}{Christopher~D. Manning}, {and} \bibinfo{person}{Chelsea Finn}.} \bibinfo{year}{2023}\natexlab{}.
\newblock \showarticletitle{Direct Preference Optimization: Your Language Model Is Secretly a Reward Model}.
\newblock \bibinfo{journal}{\emph{Proceedings of the 37th International Conference on Neural Information Processing Systems ({NeurIPS})}} (\bibinfo{year}{2023}).
\newblock
\urldef\tempurl%
\url{https://dl.acm.org/doi/10.5555/3666122.3668460}
\showURL{%
\tempurl}


\bibitem[Rein et~al\mbox{.}(2024)]%
        {rein2023gpqa}
\bibfield{author}{\bibinfo{person}{David Rein}, \bibinfo{person}{Betty~Li Hou}, \bibinfo{person}{Asa~Cooper Stickland}, \bibinfo{person}{Jackson Petty}, \bibinfo{person}{Richard~Yuanzhe Pang}, \bibinfo{person}{Julien Dirani}, \bibinfo{person}{Julian Michael}, {and} \bibinfo{person}{Samuel~R. Bowman}.} \bibinfo{year}{2024}\natexlab{}.
\newblock \showarticletitle{GPQA: A Graduate-Level Google-Proof Q\&A Benchmark}. In \bibinfo{booktitle}{\emph{Proceedings of the 1st Conference on Language Modeling (COLM 2024)}}.
\newblock
\urldef\tempurl%
\url{https://openreview.net/forum?id=Ti67584b98}
\showURL{%
\tempurl}


\bibitem[Russell and Norvig(2020)]%
        {RussellNorvig2020}
\bibfield{author}{\bibinfo{person}{Stuart~J.\ Russell} {and} \bibinfo{person}{Peter Norvig}.} \bibinfo{year}{2020}\natexlab{}.
\newblock \bibinfo{booktitle}{\emph{Artificial Intelligence: A Modern Approach} (\bibinfo{edition}{4} ed.)}.
\newblock \bibinfo{publisher}{Pearson}, \bibinfo{address}{Boston, MA, USA}.
\newblock
\showISBNx{9780134610993}
\urldef\tempurl%
\url{http://aima.cs.berkeley.edu/}
\showURL{%
\tempurl}


\bibitem[Shuster et~al\mbox{.}(2021)]%
        {shuster2021rahc}
\bibfield{author}{\bibinfo{person}{Kurt Shuster}, \bibinfo{person}{Spencer Poff}, \bibinfo{person}{Moya Chen}, \bibinfo{person}{Douwe Kiela}, {and} \bibinfo{person}{Jason Weston}.} \bibinfo{year}{2021}\natexlab{}.
\newblock \showarticletitle{Retrieval Augmentation Reduces Hallucination in Conversation}. In \bibinfo{booktitle}{\emph{Findings of the Association for Computational Linguistics: EMNLP 2021}}. \bibinfo{publisher}{Association for Computational Linguistics}, \bibinfo{address}{Punta Cana, Dominican Republic}, \bibinfo{pages}{3784--3803}.
\newblock
\urldef\tempurl%
\url{https://doi.org/10.18653/v1/2021.findings-emnlp.320}
\showDOI{\tempurl}


\bibitem[Sprague et~al\mbox{.}(2024)]%
        {sprague2024musr}
\bibfield{author}{\bibinfo{person}{Zayne Sprague}, \bibinfo{person}{Xi Ye}, \bibinfo{person}{Kaj Bostrom}, \bibinfo{person}{Swarat Chaudhuri}, {and} \bibinfo{person}{Greg Durrett}.} \bibinfo{year}{2024}\natexlab{}.
\newblock \showarticletitle{{MuSR}: Testing the Limits of Chain-of-thought with Multistep Soft Reasoning}. In \bibinfo{booktitle}{\emph{Proceedings of the Twelfth International Conference on Learning Representations (ICLR 2024)}}. \bibinfo{publisher}{OpenReview}, \bibinfo{address}{Vienna, Austria}.
\newblock
\urldef\tempurl%
\url{https://openreview.net/forum?id=jenyYQzue1}
\showURL{%
\tempurl}


\bibitem[Srivastava et~al\mbox{.}(2023)]%
        {bigbench}
\bibfield{author}{\bibinfo{person}{Aarohi Srivastava}, \bibinfo{person}{Abhinav Rastogi}, \bibinfo{person}{Abhishek Rao}, \bibinfo{person}{Abu Awal~Md Shoeb}, \bibinfo{person}{Abubakar Abid}, \bibinfo{person}{Adam Fisch}, \bibinfo{person}{Adam~R. Brown}, \bibinfo{person}{Adam Santoro}, \bibinfo{person}{Aditya Gupta}, \bibinfo{person}{Adrià Garriga-Alonso}, {et~al\mbox{.}}} \bibinfo{year}{2023}\natexlab{}.
\newblock \bibinfo{title}{Beyond the Imitation Game: Quantifying and Extrapolating the Capabilities of Language Models}.
\newblock
\newblock
\urldef\tempurl%
\url{https://openreview.net/forum?id=uyTL5Bvosj}
\showURL{%
\tempurl}


\bibitem[Sun et~al\mbox{.}(2023)]%
        {sun2023head}
\bibfield{author}{\bibinfo{person}{Kai Sun}, \bibinfo{person}{Yifan~Ethan Xu}, \bibinfo{person}{Hanwen Zha}, \bibinfo{person}{Yue Liu}, {and} \bibinfo{person}{Xin~Luna Dong}.} \bibinfo{year}{2023}\natexlab{}.
\newblock \bibinfo{title}{Head-to-Tail: How Knowledgeable Are Large Language Models (LLM)? AKA Will LLMs Replace Knowledge Graphs?}
\newblock
\newblock
\showeprint[arxiv]{2308.10168}~[cs.CL]
\urldef\tempurl%
\url{https://arxiv.org/abs/2308.10168}
\showURL{%
\tempurl}


\bibitem[Sun et~al\mbox{.}(2025)]%
        {sun2025why}
\bibfield{author}{\bibinfo{person}{Yiyou Sun}, \bibinfo{person}{Yu Gai}, \bibinfo{person}{Lijie Chen}, \bibinfo{person}{Abhilasha Ravichander}, \bibinfo{person}{Yejin Choi}, {and} \bibinfo{person}{Dawn Song}.} \bibinfo{year}{2025}\natexlab{}.
\newblock \bibinfo{title}{Why and How {LLMs} Hallucinate: Connecting the Dots with Subsequence Associations}.
\newblock
\newblock
\urldef\tempurl%
\url{https://doi.org/10.48550/arXiv.2504.12691}
\showDOI{\tempurl}
\showeprint[arxiv]{2504.12691}~[cs.CL]


\bibitem[Suzgun et~al\mbox{.}(2023)]%
        {bbh}
\bibfield{author}{\bibinfo{person}{Mirac Suzgun}, \bibinfo{person}{Nathan Scales}, \bibinfo{person}{Nathanael Sch{\"a}rli}, \bibinfo{person}{Sebastian Gehrmann}, \bibinfo{person}{Yi Tay}, \bibinfo{person}{Hyung~Won Chung}, \bibinfo{person}{Aakanksha Chowdhery}, \bibinfo{person}{Quoc~V. Le}, \bibinfo{person}{Ed~H. Chi}, \bibinfo{person}{Denny Zhou}, {and} \bibinfo{person}{Jason Wei}.} \bibinfo{year}{2023}\natexlab{}.
\newblock \showarticletitle{Challenging {BIG}-Bench Tasks and Whether Chain-of-Thought Can Solve Them}. In \bibinfo{booktitle}{\emph{Findings of the Association for Computational Linguistics: ACL 2023}}. \bibinfo{publisher}{Association for Computational Linguistics}, \bibinfo{address}{Toronto, Canada}, \bibinfo{pages}{13003--13051}.
\newblock
\urldef\tempurl%
\url{https://doi.org/10.18653/v1/2023.findings-acl.824}
\showDOI{\tempurl}


\bibitem[Tang et~al\mbox{.}(2025)]%
        {tang2025grapharena}
\bibfield{author}{\bibinfo{person}{Jianheng Tang}, \bibinfo{person}{Qifan Zhang}, \bibinfo{person}{Yuhan Li}, \bibinfo{person}{Nuo Chen}, {and} \bibinfo{person}{Jia Li}.} \bibinfo{year}{2025}\natexlab{}.
\newblock \bibinfo{title}{GraphArena: Evaluating and Exploring Large Language Models on Graph Computation}.
\newblock
\newblock
\urldef\tempurl%
\url{https://openreview.net/forum?id=Y1r9yCMzeA}
\showURL{%
\tempurl}


\bibitem[Tian et~al\mbox{.}(2024)]%
        {Tian2024Factuality}
\bibfield{author}{\bibinfo{person}{Katherine Tian}, \bibinfo{person}{Eric Mitchell}, \bibinfo{person}{Huaxiu Yao}, \bibinfo{person}{Christopher~D. Manning}, {and} \bibinfo{person}{Chelsea Finn}.} \bibinfo{year}{2024}\natexlab{}.
\newblock \showarticletitle{Fine{\textendash}Tuning Language Models for Factuality}. In \bibinfo{booktitle}{\emph{Proceedings of the Twelfth International Conference on Learning Representations (ICLR 2024)}}. \bibinfo{address}{Vienna, Austria}.
\newblock
\urldef\tempurl%
\url{https://openreview.net/forum?id=WPZ2yPag4K}
\showURL{%
\tempurl}


\bibitem[Vapnik and Chervonenkis(1971)]%
        {VC}
\bibfield{author}{\bibinfo{person}{Vladimir~N Vapnik} {and} \bibinfo{person}{A~Ya Chervonenkis}.} \bibinfo{year}{1971}\natexlab{}.
\newblock \showarticletitle{On the uniform convergence of relative frequencies of events to their probabilities}.
\newblock \bibinfo{journal}{\emph{Theory of Probability \& Its Applications}} \bibinfo{volume}{16}, \bibinfo{number}{2} (\bibinfo{year}{1971}), \bibinfo{pages}{264--280}.
\newblock


\bibitem[Wang et~al\mbox{.}(2024)]%
        {wang2024mmlupro}
\bibfield{author}{\bibinfo{person}{Yubo Wang}, \bibinfo{person}{Xueguang Ma}, \bibinfo{person}{Ge Zhang}, \bibinfo{person}{Yuansheng Ni}, \bibinfo{person}{Abhranil Chandra}, \bibinfo{person}{Shiguang Guo}, \bibinfo{person}{Weiming Ren}, \bibinfo{person}{Aaran Arulraj}, \bibinfo{person}{Xuan He}, \bibinfo{person}{Ziyan Jiang}, \bibinfo{person}{Tianle Li}, \bibinfo{person}{Max Ku}, \bibinfo{person}{Kai Wang}, \bibinfo{person}{Alex Zhuang}, \bibinfo{person}{Rongqi Fan}, \bibinfo{person}{Xiang Yue}, {and} \bibinfo{person}{Wenhu Chen}.} \bibinfo{year}{2024}\natexlab{}.
\newblock \showarticletitle{MMLU-Pro: A More Robust and Challenging Multi-Task Language Understanding Benchmark}. In \bibinfo{booktitle}{\emph{Advances in Neural Information Processing Systems 37 (NeurIPS 2024), Datasets and Benchmarks Track}}.
\newblock
\showeprint[arxiv]{2406.01574}~[cs.CL]
\urldef\tempurl%
\url{https://papers.nips.cc/paper_files/paper/2024/hash/ad236edc564f3e3156e1b2feafb99a24-Abstract-Datasets_and_Benchmarks_Track.html}
\showURL{%
\tempurl}


\bibitem[Wei et~al\mbox{.}(2023)]%
        {wei2023simple}
\bibfield{author}{\bibinfo{person}{Jerry Wei}, \bibinfo{person}{Da Huang}, \bibinfo{person}{Yifeng Lu}, \bibinfo{person}{Denny Zhou}, {and} \bibinfo{person}{Quoc~V Le}.} \bibinfo{year}{2023}\natexlab{}.
\newblock \bibinfo{title}{Simple Synthetic Data Reduces Sycophancy in Large Language Models}.
\newblock
\newblock
\showeprint[arxiv]{2308.03958}~[cs.CL]
\urldef\tempurl%
\url{https://arxiv.org/abs/2308.03958}
\showURL{%
\tempurl}


\bibitem[Wu et~al\mbox{.}(2025)]%
        {wu2025answerrefuseguessinvestigating}
\bibfield{author}{\bibinfo{person}{Cheng-Kuang Wu}, \bibinfo{person}{Zhi~Rui Tam}, \bibinfo{person}{Chieh-Yen Lin}, \bibinfo{person}{Yun-Nung Chen}, {and} \bibinfo{person}{Hung yi Lee}.} \bibinfo{year}{2025}\natexlab{}.
\newblock \bibinfo{title}{Answer, Refuse, or Guess? Investigating Risk-Aware Decision Making in Language Models}.
\newblock
\newblock
\showeprint[arxiv]{2503.01332}~[cs.CL]
\urldef\tempurl%
\url{https://arxiv.org/abs/2503.01332}
\showURL{%
\tempurl}


\bibitem[Xu et~al\mbox{.}(2025)]%
        {xu2025helmcapabilities}
\bibfield{author}{\bibinfo{person}{Jialiang Xu}, \bibinfo{person}{Yifan Mai}, {and} \bibinfo{person}{Percy Liang}.} \bibinfo{year}{2025}\natexlab{}.
\newblock \bibinfo{title}{{HELM Capabilities: Evaluating LMs Capability by Capability}}.
\newblock \bibinfo{howpublished}{\url{https://crfm.stanford.edu/2025/03/20/helm-capabilities.html}}.
\newblock
\newblock
\shownote{Stanford CRFM Blog}.


\bibitem[Xu et~al\mbox{.}(2024)]%
        {Xu2024HallucinationInevitable}
\bibfield{author}{\bibinfo{person}{Ziwei Xu}, \bibinfo{person}{Sanjay Jain}, {and} \bibinfo{person}{Mohan Kankanhalli}.} \bibinfo{year}{2024}\natexlab{}.
\newblock \bibinfo{title}{Hallucination is Inevitable: An Innate Limitation of Large Language Models}.
\newblock
\newblock
\showeprint[arxiv]{2401.11817}~[cs.CL]
\urldef\tempurl%
\url{https://arxiv.org/abs/2401.11817}
\showURL{%
\tempurl}


\bibitem[Xue et~al\mbox{.}(2025)]%
        {xue2025verify}
\bibfield{author}{\bibinfo{person}{Yihao Xue}, \bibinfo{person}{Kristjan Greenewald}, \bibinfo{person}{Youssef Mroueh}, {and} \bibinfo{person}{Baharan Mirzasoleiman}.} \bibinfo{year}{2025}\natexlab{}.
\newblock \bibinfo{title}{Verify when Uncertain: Beyond Self-Consistency in Black Box Hallucination Detection}.
\newblock
\newblock
\showeprint[arxiv]{2502.15845}~[cs.CL]
\urldef\tempurl%
\url{https://arxiv.org/abs/2502.15845}
\showURL{%
\tempurl}


\bibitem[Yin et~al\mbox{.}(2023)]%
        {yin2023large}
\bibfield{author}{\bibinfo{person}{Zhangyue Yin}, \bibinfo{person}{Qiushi Sun}, \bibinfo{person}{Qipeng Guo}, \bibinfo{person}{Jiawen Wu}, \bibinfo{person}{Xipeng Qiu}, {and} \bibinfo{person}{Xuanjing Huang}.} \bibinfo{year}{2023}\natexlab{}.
\newblock \bibinfo{title}{Do Large Language Models Know What They Don't Know?}
\newblock
\newblock
\showeprint[arxiv]{2305.18153}~[cs.CL]
\urldef\tempurl%
\url{https://arxiv.org/abs/2305.18153}
\showURL{%
\tempurl}


\bibitem[Zhang et~al\mbox{.}(2023)]%
        {zhang2023language}
\bibfield{author}{\bibinfo{person}{Muru Zhang}, \bibinfo{person}{Ofir Press}, \bibinfo{person}{William Merrill}, \bibinfo{person}{Alisa Liu}, {and} \bibinfo{person}{Noah~A Smith}.} \bibinfo{year}{2023}\natexlab{}.
\newblock \bibinfo{title}{How Language Model Hallucinations Can Snowball}.
\newblock
\newblock
\showeprint[arxiv]{2305.13534}~[cs.CL]
\urldef\tempurl%
\url{https://arxiv.org/abs/2305.13534}
\showURL{%
\tempurl}


\bibitem[Zhang and Zhang(2025)]%
        {zhang2025ragreview}
\bibfield{author}{\bibinfo{person}{Wan Zhang} {and} \bibinfo{person}{Jing Zhang}.} \bibinfo{year}{2025}\natexlab{}.
\newblock \showarticletitle{Hallucination Mitigation for Retrieval-Augmented Large Language Models: A Review}.
\newblock \bibinfo{journal}{\emph{Mathematics}} \bibinfo{volume}{13}, \bibinfo{number}{5} (\bibinfo{year}{2025}), \bibinfo{pages}{856}.
\newblock
\urldef\tempurl%
\url{https://doi.org/10.3390/math13050856}
\showDOI{\tempurl}


\bibitem[Zhou et~al\mbox{.}(2023)]%
        {zhou2023instruction}
\bibfield{author}{\bibinfo{person}{Jeffrey Zhou}, \bibinfo{person}{Tianjian Lu}, \bibinfo{person}{Swaroop Mishra}, \bibinfo{person}{Siddhartha Brahma}, \bibinfo{person}{Sujoy Basu}, \bibinfo{person}{Yi Luan}, \bibinfo{person}{Denny Zhou}, {and} \bibinfo{person}{Le Hou}.} \bibinfo{year}{2023}\natexlab{}.
\newblock \bibinfo{title}{Instruction-Following Evaluation for Large Language Models}.
\newblock
\newblock
\urldef\tempurl%
\url{https://doi.org/10.48550/arXiv.2311.07911}
\showDOI{\tempurl}
\showeprint[arxiv]{2311.07911}~[cs.CL]


\end{thebibliography}
\newpage
\appendix

\section{Proof of the main theorem}\label{ap:mainproof}

We now prove the main theorem.
\begin{proof}[Proof of \Cref{thm:main}]
Let $K:=\min_{c \in \calC} |\calE_c|$ and  $k:=\max_{c \in \calC} |\calV_c|$. Also, recall that 
$\deltat=|\hat{p}(\calA)-p(\calA)|$ which can equivalently be written as $\deltat=|p(\calB)-\hat{p}(\calB)|$,
where $\calA,\calB$ denote responses that are \textit{above} and \textit{below} threshold:
\begin{align}
\calA &:=\{(c, r) \in \calX \mid \hat{p}(r \mid c) > 1/K\} \label{eq:alpha}\\
\calB &:=\{(c, r) \in \calX \mid \hat{p}(r \mid c) \le 1/K\}.\label{eq:beta} 
\end{align}
Partition the hallucination and misclassification rates into above and below threshold rates:
\begin{align*}
    \gerr &= \hat{p}(\calA \setminus \calV) + \hat{p}(\calB \setminus \calV)\\
    \cerr &= D(\calA \setminus \calV) + D(\calB \cap \calV).
\end{align*}

Above the threshold, misclassifications $D(\calA \setminus \calV)$ are the sum of $D(c,r)$ only over $(c,r) \in \calA$ such that $r \in \calE_c$---each contributing $D(c,r)=\mu(c)/2|\calE_c| \le \mu(c)/2K$. But each such misclassification also contributes $\mu(c)\hat{p}(r \mid c) \ge \mu(c)/K$ to hallucinations above the threshold $\hat{p}(\calA \setminus \calV)$. Hence, 
\begin{equation*}
\hat{p}(\calA \setminus \calV)\ge 2D(\calA \setminus \calV)
\end{equation*}
Thus, it remains only to show that below the threshold:
\begin{equation}
    \hat{p}(\calB \setminus \calV) \ge 2 D(\calB \cap \calV) - \frac{k}{K} - \deltat.\label{eq:beelow}
\end{equation}
By definition, $2D(\calB \cap \calV) = p(\calB \cap \calV) = p(\calB)$. Also, there are $|\calV_c| \le k$ valid responses for each $c$, each one in $\calB$ having  $\hat{p}(r \mid c) \le 1/K$, so $\hat{p}(\calB\cap \calV) \le \sum_c \hat{p}(c) k/K = k/K.$ Hence,
\begin{align*}
    2D(\calB \cap \calV)-\hat{p}(\calB\setminus \calV) &= p(\calB)-\hat{p}(\calB\setminus \calV) \\
    &= p(\calB) - (\hat{p}(\calB)-\hat{p}(\calB \cap \calV)) \\
    &\le \deltat + \hat{p}(\calB \cap \calV) \le \deltat + \frac{k}{K}.
\end{align*}
This is equivalent to \Cref{eq:beelow}, as needed.
\end{proof}

\section{Arbitrary-facts analysis}\label{ap:arbitrary}

We begin by reviewing the Good-Turing ($\GT$) estimator of missing mass \citep{good_population_1953} and its guarantees \citep{mcallester_concentration_2003}.
In that setting, $N$ iid samples $s\sim \nu^N$ are drawn from distribution $\nu$ over set $\calS$---abstentions are not a consideration. The missing mass is the probability that a new example drawn from $\nu$ would not be in the training sample $s$, and the estimate $\GT$ is the fraction of training samples that occur exactly once. We first state the prior guarantees and then adapt them to our setting with abstentions. A guarantee of \citet{mcallester_concentration_2003} can be stated as:
\begin{corollary}\citep{mcallester_concentration_2003}\label{cor:their}
    Let $s\sim \nu^N$ be $N$ iid samples from distribution $\nu$ over set $\calS$. Let $M:=\Pr_{x \sim \nu}[x \notin s]$ and $\GT$ be the fraction of samples that occur exactly once. For any $\gamma \in (0,1]$:
    $$
        \Pr_{s \sim \nu^N}\left[~\left|M-\GT\right| \le \frac{1}{N} + 2.42\sqrt{\frac{\ln(4/\gamma)}{N}}~\right] \ge 1-\gamma.
    $$
\end{corollary}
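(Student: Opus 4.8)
The plan is to bound $|M-\GT|$ by splitting it, via the triangle inequality, into a deterministic \emph{bias} term and two stochastic \emph{fluctuation} terms,
$$\bigl|M-\GT\bigr|\;\le\;\bigl|\E[M]-\E[\GT]\bigr|\;+\;\bigl|M-\E[M]\bigr|\;+\;\bigl|\GT-\E[\GT]\bigr|,$$
where all expectations are over $s\sim\nu^N$; it then suffices to show the bias is at most $1/N$ and that, with probability at least $1-\gamma$, the two fluctuation terms together are at most $2.42\sqrt{\ln(4/\gamma)/N}$. For the bias I would use closed forms: writing $\nu_w$ for the mass of $w\in\calS$, element $w$ is absent from $s$ with probability $(1-\nu_w)^N$ and occurs exactly once with probability $N\nu_w(1-\nu_w)^{N-1}$, so $\E[M]=\sum_w\nu_w(1-\nu_w)^N$ and $\E[\GT]=\sum_w\nu_w(1-\nu_w)^{N-1}$. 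Subtracting, $\E[\GT]-\E[M]=\sum_w\nu_w^2(1-\nu_w)^{N-1}\ge 0$, and since $\sum_w\nu_w=1$ this is at most $\max_w\nu_w(1-\nu_w)^{N-1}\le\max_{p\in[0,1]}p(1-p)^{N-1}\le 1/N$ (the unconstrained maximum occurs at $p=1/N$). So $0\le\E[\GT]-\E[M]\le 1/N$, which is where the $1/N$ term comes from.

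For the fluctuation of $\GT$, I would invoke McDiarmid's inequality after checking its bounded-differences constant: resampling one coordinate of $s$ moves the count of only the element resampled out and the element resampled in, each by one, so the number of elements occurring exactly once changes by at most $2$, giving $\GT$ a bounded-differences constant $2/N$. McDiarmid then yields $\Pr\bigl[|\GT-\E[\GT]|\ge\epsilon\bigr]\le 2e^{-N\epsilon^2/2}$, and choosing $\epsilon_{\GT}=\sqrt{2\ln(4/\gamma)/N}$ makes the right-hand side at most $\gamma/2$.

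The crux is the fluctuation of $M$, for which the analogous route is too weak: resampling a coordinate can change $M=\sum_w\nu_w\mathbf{1}[w\notin s]$ by as much as $\max_w\nu_w$, which may be close to $1$, so a black-box bounded-differences bound is useless when $\nu$ has a heavy atom. Instead I would exploit that the indicators $(\mathbf{1}[w\notin s])_w$ are negatively associated, so the moment generating function of $M-\E[M]$ is dominated by the product of the per-element ones; together with $\sum_w\nu_w=1$, which keeps the resulting variance proxy $O(1/N)$ \emph{uniformly in} $\nu$, this gives a distribution-free sub-Gaussian tail $\Pr\bigl[|M-\E[M]|\ge\epsilon\bigr]\le 2e^{-N\epsilon^2}$ — precisely the missing-mass concentration of \citet{mcallester_concentration_2003}. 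Choosing $\epsilon_M=\sqrt{\ln(4/\gamma)/N}$ makes this tail at most $\gamma/2$.

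Finally I would assemble: by a union bound, with probability at least $1-\gamma$ both fluctuation bounds hold, so $|M-\GT|\le 1/N+\epsilon_M+\epsilon_{\GT}=1/N+(1+\sqrt{2})\sqrt{\ln(4/\gamma)/N}\le 1/N+2.42\sqrt{\ln(4/\gamma)/N}$, using $1+\sqrt{2}<2.42$. The main obstacle is the missing-mass concentration step: obtaining a constant that does not degrade when some $\nu_w$ is large forces the negative-association / exponential-moment argument (as in \citet{mcallester_concentration_2003}) rather than an off-the-shelf bounded-differences inequality; the bias computation and the $\GT$ tail are routine.
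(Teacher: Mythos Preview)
Your proposal is correct and matches the paper's proof essentially step for step: the same triangle-inequality decomposition into bias plus two fluctuations, the same $1/N$ bias bound (the paper cites \citet{ms00} for it while you compute it directly), McDiarmid with bounded-differences constant $2/N$ for $\GT$, the \citet{mcallester_concentration_2003} sub-Gaussian tail $2e^{-N\epsilon^2}$ for $M$, and the union bound with $1+\sqrt{2}\le 2.42$. Your discussion of why the $M$ step needs negative association rather than plain bounded differences is a nice elaboration, but the route and constants are identical.
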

\begin{proof}
    Let $\overline\GT:=\E[\GT]$ and $\overline{M}:=\E[M]$.
    The corollary follows by combining concentration bounds on $M$ and $\GT$. First Theorem 1 of \citet{ms00} shows:
    $$
        \overline\GT -\overline{M} \in [0, 1/N]
    $$
    Then, Theorems 10 and 16 \citep{mcallester_concentration_2003} imply that with probability $\le \exp(-N\eps^2)$, $M$ will deviate from $\overline{M}$ by more than $\eps$ in either direction, together, by the union bound giving for $\eps:=\sqrt{\frac{\ln(4/\gamma)}{N}}$,
    $$
        \Pr_{s \sim \nu^N}\left[~|M-\overline{M}|\ge \sqrt{\frac{\ln(4/\gamma)}{N}}\right] \le \frac{\gamma}{4} + \frac{\gamma}{4} = \frac{\gamma}{2}.
    $$
    Following \citet{ms00} (Lemma 13), McDiarmid's inequality \citep{McDiarmid1989} directly implies the convergence of $\GT$, since changing any one example can change $\GT$ by at most $2/N$. Hence,
    $$
        \Pr_{s \sim \nu^N}\left[~|\GT-\overline{\GT}|\ge \sqrt{\frac{2\ln(4/\gamma)}{N}}\right] \le 2 \exp\left(-\frac{2\cdot \frac{2\ln(4/\gamma)}{N}}{4/N}\right) = \frac{\gamma}{2}.
    $$
    Combining these three displayed equations, gives, by the union bound,
    $$
        \Pr_{s \sim \nu^N}\left[~|\GT-M|\ge \frac{1}{N} + (1+ \sqrt{2})\sqrt{\frac{\ln(4/\gamma)}{N}}\right] \le \frac{\gamma}{2} +\frac{\gamma}{2} = \gamma.
    $$
    Finally, the corollary follows from $1+ \sqrt{2} \le 2.42$.
\end{proof}

We now extend this to the case of an abstention response $\IDK$ which is not counted in $\sing$. Specifically, we say a query $c$ is \textit{answered} in the training data if there is a training example $(c^{(i)},r^{(i)})$ with $c^{(i)}=c$ and $r^{(i)} \ne \IDK$, and \textit{unanswered} otherwise. Let $$\calU:=\calC \setminus \{c^{(i)} \mid i\le N, r^{(i)} \ne \IDK\}$$ denote the set of unanswered queries. Of course, by memorizing $a_c$ for answered queries, one can achieve perfect accuracy classifying the answered queries. We extend Turing's Missing Mass (MM) estimate to abstentions as follows:
$$\MM := \Pr_{(c,r) \sim p}[c \in \calU \wedge r \ne \IDK].$$
We similarly use \Cref{cor:their} to show that $\sing$ is a good estimator of $\MM$:
\begin{lemma}\label{lem:gt} For all $N$, $\gamma \in (0,1]$:
    $$
        \Pr\left[~\left|\MM-\sing\right| \le 4.42\sqrt{\frac{\ln(5/\gamma)}{N}}~\right] \ge 1-\gamma.
    $$
\end{lemma}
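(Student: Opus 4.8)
The plan is to reduce to the abstention-free Good--Turing guarantee of \Cref{cor:their} by collapsing every $\IDK$ outcome into one fresh auxiliary symbol. Introduce $\star\notin\calC$ and define a distribution $\nu$ on $\calC\cup\{\star\}$ by $\nu(c):=\mu(c)\alpha_c$ for $c\in\calC$ and $\nu(\star):=\Pr_{(c,r)\sim p}[r=\IDK]=\sum_{c}\mu(c)(1-\alpha_c)$, which is a valid distribution since these masses sum to $1$. Map each training example $(c^{(i)},r^{(i)})$ to $s_i:=c^{(i)}$ when $r^{(i)}\ne\IDK$ and $s_i:=\star$ otherwise; then the coordinates $s_i$ are i.i.d.\ draws from $\nu$. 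The first step is to check that this map is faithful to the relevant counts: since $s_i=c$ holds exactly when $c^{(i)}=c\wedge r^{(i)}\ne\IDK$ (the $\IDK$ samples drawn for a prompt $c$ all land on $\star$, never on $c$), the singletons of \Cref{def:sing} are precisely the elements of $\calC$ occurring exactly once in $s$, and a prompt lies in $\calU$ iff it does not occur in $s$ at all.

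Next I would relate our quantities to the two Good--Turing quantities of $\nu$. Let $\GT$ be the fraction of symbols of $\calC\cup\{\star\}$ occurring exactly once in $s$, and let $M:=\Pr_{x\sim\nu}[x\notin s]$. The identities $\GT=\sing+\tfrac1N\1[\star\text{ occurs exactly once in }s]$ and $M=\MM+\nu(\star)\1[\star\notin s]$ hold by the faithfulness observation, so $\GT-\sing\in[0,1/N]$ and $M-\MM\in[0,\nu(\star)]$. Since \Cref{cor:their} controls $|M-\GT|$, by the triangle inequality it remains only to control the correction term $\nu(\star)\1[\star\notin s]$.

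This correction is the main obstacle, since $\nu(\star)$ on its own can be arbitrarily close to $1$; the key point is that the \emph{product} is small with high probability. For any $t>0$, either $\nu(\star)\le t$, whence $\nu(\star)\1[\star\notin s]\le t$ deterministically, or $\nu(\star)>t$, whence $\Pr[\star\notin s]=(1-\nu(\star))^N<e^{-tN}$; in both cases $\Pr[\nu(\star)\1[\star\notin s]>t]\le e^{-tN}$, which is at most $\gamma/5$ for $t=\ln(5/\gamma)/N$. I would then invoke \Cref{cor:their} with confidence parameter $4\gamma/5$ (so that $\ln\!\big(4/(4\gamma/5)\big)=\ln(5/\gamma)$) and take a union bound: with probability at least $1-\gamma$,
$$|\MM-\sing|\le|\MM-M|+|M-\GT|+|\GT-\sing|\le\frac{\ln(5/\gamma)}{N}+\frac1N+2.42\sqrt{\frac{\ln(5/\gamma)}{N}}+\frac1N.$$
To conclude, if $4.42\sqrt{\ln(5/\gamma)/N}\ge1$ the claim is trivial because $\MM,\sing\in[0,1]$; otherwise $N>4.42^2\ln(5/\gamma)$, and using also $\ln(5/\gamma)\ge\ln 5>1$ one checks that $\tfrac{\ln(5/\gamma)}{N}+\tfrac2N$ is at most roughly $0.51\sqrt{\ln(5/\gamma)/N}$, so the right-hand side stays below $4.42\sqrt{\ln(5/\gamma)/N}$ with room to spare. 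Beyond taming the $\star$-correction, the only care needed is this final bookkeeping and confirming that the singleton count is genuinely unaffected when a prompt is sampled several times, some of those times with $\IDK$.
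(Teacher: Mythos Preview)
Your proposal is correct and follows essentially the same route as the paper: collapse all $\IDK$ outcomes to one auxiliary symbol, relate $(\GT,M)$ for the collapsed sample to $(\sing,\MM)$, control the correction term $\nu(\star)\1[\star\notin s]$ via the small/large dichotomy on $\nu(\star)$, invoke \Cref{cor:their} at confidence $4\gamma/5$, and finish with arithmetic bookkeeping in the nontrivial regime. The only cosmetic difference is that the paper bounds $|(M-\GT)-(\MM-\sing)|$ in one shot (exploiting that both $M-\MM$ and $\GT-\sing$ are nonnegative and bounded by $\tfrac1N\ln(5/\gamma)$), saving one additive $1/N$ relative to your three-term triangle inequality; as your closing check shows, this extra $1/N$ is harmless for the constant $4.42$.
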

\begin{proof}
The only difference between our $\MM$-$\sing$ and the standard $M$-$\GT$ is that we ignore abstentions. To adapt the previous bounds, consider the sample $s$ which is derived by replacing all $x=(c,\IDK)$ with simply $x=\IDK$ for any $c$, but otherwise leaving $x$ unchanged. This collapses all IDK responses into identical examples. Thus $\GT$ may count at most one extra singleton compared to $\sing$, 
$$\GT - \sing \in \left\{0,\frac{1}{N}\right\}.$$
The above substitution induces a distribution $\phi$ where $\phi(\IDK)=\sum_c \mu(c)p(\IDK \mid c)$ is the probability of abstaining. Similarly, we have $M - \MM \in \{0,\phi(\IDK)\}$ with $M - \MM = \phi(\IDK)$ if $\IDK \notin s$, which happens with probability $(1-\phi(\IDK))^N$. But we also have  $(1-\phi(\IDK))^N \le \gamma/5$ if $\phi(\IDK) \ge \frac{1}{N}\ln \frac{5}{\gamma}$. Hence, regardless of the value of $\phi(\IDK)$,
$$\Pr\left[M - \MM \in \left[0,\frac{1}{N}\ln \frac{5}{\gamma}\right]~ \right] \ge 1-\frac{\gamma}{5}.$$
Combining the above two displayed equations gives,\footnote{This follows from the fact that  that both \(A:=M-\MM\) and \(B:=\GT-\sing\) are non-negative.  
If \(0\le A\le \tfrac1N\ln\!\frac{5}{\gamma}\) and  
\(0\le B\le \tfrac1N\), because \(\tfrac1N\le \tfrac1N\ln\!\frac{5}{\gamma}\),  
the larger of the two upper bounds is \(\tfrac1N\ln\!\frac{5}{\gamma}\), so  
\(\lvert A-B\rvert\le \tfrac1N\ln\!\frac{5}{\gamma}\).
}
\begin{equation}\label{eq:w3}
\Pr\left[ \bigl|(M - \GT) - (\MM - \sing)\bigr|  \le \frac{1}{N}\ln \frac{5}{\gamma} \right] \ge 1-\frac{\gamma}{5}.
\end{equation}

\Cref{cor:their} at $\frac{4}{5}\gamma$ gives,
    $$
        \Pr\left[~\left|M-\GT\right| \le \frac{1}{N} + 2.42\sqrt{\frac{\ln(5/\gamma)}{N}}~\right] \ge 1-\frac{4}{5}\gamma.
        $$
Combining with \Cref{eq:w3} gives, by the union bound and triangle inequality,
$$
        \Pr\left[~\left|\MM-\sing\right| \le \frac{1}{N}\ln \frac{5}{\gamma} + \frac{1}{N} + 2.42\sqrt{\frac{\ln(5/\gamma)}{N}}~\right] \ge 1-\gamma.
        $$
Finally, the lemma follows from the fact that for $z:=\frac{2}{N}\ln \frac{5}{\gamma} \ge \frac{1}{N}\ln \frac{5}{\gamma} + \frac{1}{N}$, we have $z \le \sqrt{z}$ as long as $z \le 1$ (otherwise the Lemma holds trivially because the bound is $>2$).
\end{proof}


\begin{lemma}\label{lem:hp}
For any $N \ge 1$, $\gamma \in (0,1]$, and any algorithm outputting $\hat{p}$,
$$\Pr\left[2\,\cerrt \ge \sing-\frac{6\ln(3N/\gamma)}{\sqrt{N}}\right]  \ge 1-\gamma.$$  
\end{lemma}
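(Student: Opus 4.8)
Write $K:=\min_c|\calE_c|$, let $T=\langle(c^{(i)},r^{(i)})\rangle_{i=1}^N$ denote the training sample and $\hat p=\hat p(T)$ the model it produces, and let $A_c\subseteq\calR_c\cup\{\IDK\}$ be the responses that the thresholded classifier accepts, i.e.\ $\hat p(r\mid c)>1/K$; since $\hat p(\cdot\mid c)$ is a distribution, $|A_c|\le K-1<|\calR_c|$. Let $\calU=\calU(T)$ be the unanswered prompts (as in \Cref{lem:gt}), so $\MM=\sum_{c\in\calU}\mu(c)\alpha_c$. The plan is to prove $2\cerrt\ge\MM-O(\ln(3N/\gamma)/\sqrt N)$ with high probability and then invoke \Cref{lem:gt} to replace $\MM$ by $\sing$.

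\textbf{Step 1: $\E[2\cerrt\mid T]\ge\MM$.} Condition on $T$. Because a prompt $c\in\calU$ contributes only $\IDK$ to the sample, its true answer $a_c$ is, conditionally on $T$, uniform on $\calR_c$, and the $(a_c)_{c\in\calU}$ are mutually independent; moreover $\hat p$, and hence every $A_c$, is fixed by $T$. Splitting the $50/50$ mixture $D$ and discarding the nonnegative contribution of answered prompts, $2\cerrt\ge Z:=\sum_{c\in\calU}\mu(c)\,h_c(a_c)$, where $h_c(a_c)=\alpha_c\1[a_c\notin A_c]+(1-\alpha_c)\1[\IDK\notin A_c]+|A_c\cap\calE_c|/|\calE_c|$ is the per-prompt misclassification contribution. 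A short calculation, using $|\calR_c|=|\calE_c|+1$ to cancel the ``error'' term against the chance of a lucky guess, gives $\E[h_c(a_c)\mid T]=\alpha_c+(1-\alpha_c)\bigl(|A_c\cap\calR_c|/|\calR_c|+\1[\IDK\notin A_c]\bigr)\ge\alpha_c$, hence $\E[Z\mid T]\ge\MM$. Intuitively: either the model rarely places mass above threshold (small $A_c$), so it misclassifies the unknown correct answer as ``$-$'', or it guesses a lot, so it misclassifies many errors as ``$+$''.

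\textbf{Step 2: concentration (the main obstacle).} Conditionally on $T$, each $h_c(a_c)$ is a two-valued function of $\1[a_c\in A_c\cap\calR_c]$, so $Z$ is a sum of independent $[0,2)$-valued variables with $\E[Z\mid T]\ge\MM$, and one wants $Z\ge\MM-O(\ln(3N/\gamma)/\sqrt N)$ w.h.p. The difficulty is that $\mu$ may be highly skewed, so $Z$ need not concentrate around its mean—a single heavy prompt in $\calU$ dominates, and a naive Hoeffding/McDiarmid bound degrades to $N^{-1/4}$. The fix is to peel heavy prompts: call $c$ \emph{heavy} if $\mu(c)\alpha_c\ge\ln(3N/\gamma)/N$; there are at most $N/\ln(3N/\gamma)\le N$ of them (as $\sum_c\mu(c)\alpha_c\le1$), and each is unanswered with probability $(1-\mu(c)\alpha_c)^N\le e^{-\ln(3N/\gamma)}=\gamma/(3N)$, so by a union bound, with probability $\ge1-\gamma/3$ every unanswered prompt is light (this is the source of the ``$3N$'' in the logarithm). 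On that event $\sum_{c\in\calU}\mu(c)^2\alpha_c^2\le\frac{\ln(3N/\gamma)}{N}\sum_{c\in\calU}\mu(c)\alpha_c\le\frac{\ln(3N/\gamma)}{N}$, which bounds the Bernstein variance of the random part of $Z$; Bernstein's inequality, applied conditionally on $T$, then yields $Z\ge\E[Z\mid T]-O(\ln(3N/\gamma)/\sqrt N)$ with conditional probability $\ge1-\gamma/3$. One must be careful to keep the \emph{deterministic} part of each $h_c$—the misclassified-error mass $(|A_c\cap\calR_c|-1)^{+}/|\calE_c|$ together with the $\IDK$ term—rather than discarding it: these pieces dominate precisely on prompts where the thresholded classifier's positive region is small (which is what happens when $K$ is small), and keeping them makes the residual random part have small range $\mu(c)\alpha_c$, avoiding any spurious $1/K$ loss.

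\textbf{Step 3: Good--Turing and assembly.} By \Cref{lem:gt} with failure probability $\gamma/3$, with probability $\ge1-\gamma/3$ we have $\MM\ge\sing-4.42\sqrt{\ln(15/\gamma)/N}\ge\sing-4.42\,\ln(3N/\gamma)/\sqrt N$ (for $N$ large enough that the bound is non-vacuous; otherwise the lemma holds trivially, since its right-hand side is negative). A union bound over the three failure events of probability $\gamma/3$ each gives, with probability $\ge1-\gamma$,
$$2\cerrt\;\ge\;Z\;\ge\;\E[Z\mid T]-O\!\Bigl(\tfrac{\ln(3N/\gamma)}{\sqrt N}\Bigr)\;\ge\;\MM-O\!\Bigl(\tfrac{\ln(3N/\gamma)}{\sqrt N}\Bigr)\;\ge\;\sing-\tfrac{6\ln(3N/\gamma)}{\sqrt N},$$
once the accumulated constants (the Bernstein constant from Step 2 plus the $4.42$ from \Cref{lem:gt}) are checked to total at most $6$. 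The crux is Step 2: because the learning algorithm is arbitrary, $\hat p$ and all the $A_c$ are pinned down only after conditioning on $T$, so the sole usable randomness is the true answers of unanswered prompts, and reconciling this with a possibly very unequal $\mu$ (and with small $K$) is where the heavy/light peeling and the retention of the deterministic per-prompt contributions do the work.
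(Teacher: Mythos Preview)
Your overall plan—condition on $T$, exploit the remaining randomness of $(a_c)_{c\in\calU}$, peel heavy prompts via $\mu(c)\alpha_c\le\zeta=\ln(3N/\gamma)/N$, concentrate, and then trade $\MM$ for $\sing$ via \Cref{lem:gt}—is exactly the paper's route. The gap is in Step~2. After you subtract off the deterministic pieces you name, the residual random part of $h_c$ is, whenever $|A_c\cap\calR_c|\ge 1$,
\[
\Bigl(\alpha_c+\tfrac{1}{|\calE_c|}\Bigr)\,\1[a_c\notin A_c],
\]
because $|A_c\cap\calE_c|=|A_c\cap\calR_c|-\1[a_c\in A_c]$ still depends on $a_c$. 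Hence the per-prompt range in $Z$ is $\mu(c)\bigl(\alpha_c+1/|\calE_c|\bigr)$, not $\mu(c)\alpha_c$ as you claim. The extra piece $\mu(c)/|\calE_c|$ is \emph{not} controlled by your light condition $\mu(c)\alpha_c\le\zeta$: take a prompt $c_0$ with $\mu(c_0)=1/2$, $\alpha_{c_0}\approx 0$, and $|\calE_{c_0}|=K$ small. Then $c_0$ is light and (since $\alpha_{c_0}\approx 0$) almost surely in $\calU$, yet contributes a range of order $1/(2K)$ to $Z$. Your Bernstein variance proxy $\sum_{c\in\calU}\mu(c)^2\alpha_c^2\le\zeta$ therefore does not bound the actual variance of $Z$, and the concentration step cannot deliver an $O(\ln(3N/\gamma)/\sqrt N)$ deviation in this regime.

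The paper sidesteps this by first \emph{weakening} the lower bound: it drops the $\IDK$ term and scales the error-mass term down by $\alpha_c$, arriving at $2\,\cerrt\ge 2\sum_{c\in\calU}\mu'(c)\gamma_c$ with $\mu'(c)=\mu(c)\alpha_c$ and
\[
\gamma_c=\tfrac12\Bigl(\1[a_c\notin A_c]+\tfrac{|A_c\cap\calE_c|}{|\calE_c|}\Bigr)\in[0,1].
\]
A symmetry over the uniform $a_c$ gives $\E[\gamma_c\mid T]=1/2$ exactly, so this weakening is lossless in the mean ($\E[2\sum\mu'(c)\gamma_c\mid T]=\MM$), and now each summand's range is $\mu'(c)\le\zeta$ on the light event, whence $\sum(\mu'(c))^2\le\zeta$ and a plain Hoeffding bound suffices. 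Your $Z$ dominates $2\sum\mu'(c)\gamma_c$ pointwise, so the simplest repair of your argument is to insert this weakening at the start of Step~2.
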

\begin{proof}
By \Cref{lem:gt},
    $$
        \Pr\left[~\left|\MM-\sing\right| \le 4.42\sqrt{\frac{\ln(10/\gamma)}{N}}~\right] \ge 1-\frac{\gamma}{2}.
    $$
Note that $\sqrt{\ln(10/\gamma)} \le \ln(3N/\gamma)$ for $N \ge 2$ (and the lemma holds trivially for $N=1$). Also, $\sqrt{2} + 4.42 \le 6$. Hence, it suffices to show that,
$$\Pr\left[2 \,\cerrt \ge \MM - \sqrt{\frac{2}{N}}\ln\frac{3N}{\gamma}\right] \ge 1-\frac{\gamma}{2}.$$
Let $\zeta := \ln(3N/\gamma)/N$
and the probability of each query appearing with an answer (not $\IDK$) according to $p$ to be:
$$\mu'(c):=\mu(c)\alpha_c,$$
so $\mu'(c)=p(c, a_c)$ once $a_c$ is selected. Also note that $\MM=\sum_{c \in \calU}\mu'(c)$. The lemma will thus follow from the following two inequalities:
\begin{align}
    \Pr\left[\forall c \in \calU~\mu'(c) \le \zeta\right] &\ge 1- \frac{\gamma}{3}\label{eq:unseen}\\
    \Pr\left[2 \cerrt \ge \MM-\sqrt{\frac{2}{N}}\ln\frac{3N}{\gamma}~\middle|~ \forall c \in \calU~\mu'(c) \le \zeta\right] &  \ge 1-\frac{\gamma}{6}.\label{eq:last}      
\end{align}
The $\mu'(c)\le \zeta$ condition will enable us to use Hoeffding bounds.
For \Cref{eq:unseen}, note that there are $\le 1/\zeta$ queries $c$ with $\mu'(c) \ge \zeta$. For each of these queries, the probability $c \in \calU$ is at most $(1-\zeta)^N$. Hence, by the union bound,
$$
\Pr\left[\exists c \in \calU:~\mu'(c) > \zeta\right] \le \frac{1}{\zeta} (1-\zeta)^N 
\le \frac{1}{\zeta} e^{-\zeta N}
= \frac{N}{\ln(3N/\gamma)} \frac\gamma{3N}
\le \frac{\gamma}{3},$$
which is equivalent to \Cref{eq:unseen}. We now move on to establish \Cref{eq:last}. 

Let the indicator $\I[\phi]$ to denote 1 if predicate $\phi$ holds and 0 otherwise.
The error $\cerrt$ is at least its error summed over $c \in \calU , r \in \calR_c$, of course, which by definition of $D$ is,
\begin{align*}
\cerrt&\ge \frac{1}{2}\sum_{c\in \calU} \mu(c)\alpha_c\I[\hatft(c,a_c)=-] + \frac{1}{2}\sum_{c\in \calU}\mu(c)\sum_{r \in \calR_c \setminus \{a_c\}} \frac{\I[\hatft(c,r)=+]}{|\calR_c|-1} \\
&\ge \frac{1}{2}\sum_{c\in \calU} \mu'(c)\I[\hatft(c,a_c)=-] + \frac{1}{2}\sum_{c\in \calU}\mu'(c)\sum_{r \in \calR_c \setminus \{a_c\}} \frac{\I[\hatft(c,r)=+]}{|\calR_c|-1} \\
&= \sum_{c \in \calU}\mu'(c) \gamma_c \text{ for }\gamma_c := \frac{1}{2} \left( \I[\hatft(c,a_c)=-] + \sum_{r \in \calR_c \setminus \{a_c\}} \frac{\I[\hatft(c,r)=+]}{|\calR_c|-1}\right)
\end{align*}
Thus $\cerrt \ge \sum_{c \in \calU} \mu'(c) \gamma_c$ with $\gamma_c$ define above, and it is not difficult to see that $\gamma_c \in [0,1]$.
(The $\mu'(c) \le \zeta$ condition will enable us to apply Hoeffding bounds to $\sum \mu'(c)\gamma_c$.)
Thus instead of \Cref{eq:last} it suffices to show,
\begin{equation}\label{eq:toshow17}
    \Pr\left[2\sum_{c \in \calU} \mu'(c) \gamma_c  
    \ge \MM-\sqrt{\frac{2}{N}}\ln\frac{3N}{\gamma}~\middle|~ \forall c \in \calU~\mu'(c) \le \zeta\right]  \ge 1-\frac{\gamma}{6}.
\end{equation}
Now for the key trick: because the algorithm's output is independent of $a_c$ for unseen $c \in \calU$, one can equivalently imagine the $a_c$'s being selected for unseen $c \in \calU$ only \textit{after} running the algorithm on the training data to select $\hat{p}$ which determines $\hat{f}$. Thus, let us suppose that $c_v$ will later be chosen for $c \in \calU$ but that the training data and thus $\hatft$ are \textit{already fixed}. 

Then, we observe that $\E[\gamma_c] = 1/2$ because each $r\in \calR_c$ contributes $1/2|\calR_c|$ to this expectation regardless of whether it is $\hatft(c,r)=\pm$. 
This gives $\E[\sum_c \mu'(c)\gamma_c] = \MM/2$ since $\MM=\sum_c \mu'(c)$. Finally, we can apply the Hoeffding bound to $\sum_c \mu'(c) \gamma_c$ since $\mu'(c)\gamma_c$ are independent random variables each in $[0,\mu'(c)]$. The bound depend on, 
$$\sum_{c\in \calU} (\mu'(c))^2 \le \max_{c \in \calU} \mu'(c) \sum_{c\in \calU} \mu'(c) \le \max_{c \in \calU} \mu'(c) \le \zeta \text{ if } \forall c \in \calU ~ \mu'(c) \le \zeta.$$
The Hoeffding bound thus gives,
$$\Pr\left[\sum \mu'(c)\gamma_c \le \frac{\MM}{2} -\sqrt{\frac{\zeta \ln(6/\gamma)}{2}} ~\middle|~ \forall c \in \calU ~ \mu'(c) \le \zeta\right] \le \frac{\gamma}{6},$$
which implies \Cref{eq:toshow17} since $\sqrt{2\zeta \ln(6/\gamma)} =  \sqrt{2\ln(3N/\gamma)\ln(6/\gamma)/N}\le \ln(3N/\gamma)\sqrt{2/N}$ (using $\ln (6/\gamma) \le \ln(3N/\gamma)$ for $N\ge 2$ and again the lemma holds trivially for $N=1$).
\end{proof}

We now prove \Cref{thm:indep}.
\begin{proof}[Proof of \Cref{thm:indep}]
The following more general lower bound, for any $\gamma \in (0,1]$, follows directly from \Cref{thm:main}, with $\max_c |\calV_c|=2$, and \Cref{lem:hp}. Specifically, with probability $\ge 1-\gamma$:
$$\gerr \ge \sing   - \frac{2}{\min_c |\calE_c|}- \frac{6\ln(3N/\gamma)}{\sqrt{N}}- \deltat.$$
For $\ge 99\%$ probability at $\gamma=0.01$, we use the simplification that $6 \ln(3N/\gamma) \le 35 + 6 \ln N$. Now let $L:=\max_c |\calE_c|.$

For the upper bound, we now show that there is an efficient algorithm outputting calibrated $\hat{p}$ (so $\delta=0$), and with probability $\ge 1-\gamma$,
$$\gerr \le\sing - \frac{\sing}{L +1} + 5\sqrt{\frac{\ln(5/\gamma)}{N}}.$$
The 99\% probability bound in the theorem follows from $5\sqrt{\ln(500)} \le 13$.

The calibrated language model learning algorithm memorizes $a_c$ for $(c,a_c)$ seen in the training data  and agrees perfectly with $p$ on those $c \notin \calU$ seen in the training data. For the unseen $c \in \calU$, it abstains with the correct probability $1-\alpha_c$ but otherwise is uniformly random over $\calR_c$:
$$\hat{p}(c,r) := \begin{cases}
1-\alpha_c & \text{ if }r=\IDK\\
\alpha_c & \text{ if }c \notin \calU, r=a_c\\
\alpha_c/|\calR_c| & \text{ if }c \in \calU, r\in \calR_c\\
0 & \text{ otherwise.}
\end{cases}.$$
It is easy to see that, for this $\hat{p}$, 
$$\gerr=\sum_{c \in \calU}\mu(c)\frac{\alpha_c}{|\calR_c|}(|\calR_c|-1) \le\sum_{c \in \calU}\mu(c)\alpha_c\frac{L}{L+1}=\MM\frac{L}{L+1}.$$
Finally, by \Cref{lem:gt} 
$$\Pr\left[~|\MM-\sing| \le 5\sqrt{\frac{\ln(5/\gamma)}{N}}\right] \ge 1-\gamma.$$
These imply,
$$\Pr\left[~\gerr \le\frac{L}{L+1} \sing + 5\sqrt{\frac{\ln(5/\gamma)}{N}}\right] \ge 1-\gamma.,$$
as needed. It only remains to show that $\delta_z=0$ for all $z \in [0,1]$.  By definition of $\delta_z$,
\begin{align*}
    \delta_z &= \left|\Pr_{(c,r) \sim \hat{p}}\left[\hat{p}(r \mid c)>z\right] - \Pr_{(c,r) \sim p}\left[\hat{p}(r \mid c)>z\right]\right|\\
    &=\left|\sum_c \mu(c)\sum_{r:\hat{p}(r \mid c) > z}\bigl(\hat{p}(r \mid c)-p(r \mid c)\bigr)\right|
\end{align*}
By definition $\hat{p}(r\mid c)=p(r\mid c)$ everywhere except for $c \in \calU, r\in \calR_c$. But for each $c \in \calU$, $\hat{p}(c, r)$ is constant over $r \in \calR_c$, so $\hat{p}(c, r)>z$ for either all $r \in \calR_c$ or none of them. Hence the inner sum above is 0 in any case because $\sum_{r \in \calR_c} \hat{p}(r \mid c) - p(r \mid c)=0$ and $\hat{p}(\IDK \mid c)=p(\IDK \mid c)$.
\end{proof}

\section{Poor-model analysis}\label{ap:trigram} 

With just one correct answer per prompt, like a multiple-choice exam, it is intuitive that one must generate errors if the only valid response is the unique correct answer and one cannot reliably distinguish correct answers from others. For such a simple case, we show the existence of a threshold $t$ with a better bound. In particular, let 
$$  \cerr(\hat{f}_t) := \Pr_{x \sim D}\left[\hat{f}_t(x) \ne f(x)\right], \text{ where }  \hat{f}_t(c,r) :=\begin{cases}
    + & \text{ if } \hat{p}(r \mid c) > t,\\
    - & \text{ if } \hat{p}(r \mid c) \le t.
\end{cases}
$$
Hence $\hat{f}=\hat{f}_t$ for $t=1/\min |\calE_c|$ and $\hat{f}$ defined in the paper body. We now state and prove a stronger theorem than \Cref{thm:agnostic}. \Cref{thm:agnostic} follows immediately from the definition of $\mathrm{opt}(\mathcal{G})$ and the following theorem.
\begin{theorem}\label{thm:mc}
Suppose $|\calV_c|=1$ for all $c \in \calC$ and let $C=\min_c |\calE_c|+1$ be the number of choices. Then, for  all $p, \hat{p}$, there is some threshold $t \in [0,1]$ such that:
$$\gerr \ge 2\left(1-\frac{1}{C}\right)\cerr(\hat{f}_t).$$
\end{theorem}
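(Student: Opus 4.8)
The plan is to turn the existential quantifier ``$\exists t$'' into an averaging argument over a uniformly random threshold $t\in[0,1]$. Fix $p$ (with $p(\calV)=1$, so the unique valid response $v_c$ of each prompt has $p(v_c\mid c)=1$) and $\hat p$. For a prompt $c$ write $q_c:=\hat p(v_c\mid c)$, set $m_c:=|\calE_c|\ge C-1$, and let $e_c:=\sum_{r\in\calE_c}\hat p(r\mid c)=1-q_c$, using that the conditional $\hat p(\cdot\mid c)$ is a distribution over the $m_c+1$ choices $\calR_c=\calE_c\cup\{v_c\}$. Then $\gerr=\hat p(\calE)=\sum_c\mu(c)\,e_c$.

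First I would express the misclassification rate of $\hat f_t$ as a per-prompt sum. Under $D$, the $p$-branch contributes a false negative on prompt $c$ exactly when $\hat p(v_c\mid c)\le t$, i.e.\ $q_c\le t$, and the $\calE_c$-branch contributes a false positive for each $r\in\calE_c$ with $\hat p(r\mid c)>t$; hence, writing $n_c(t):=|\{r\in\calE_c:\hat p(r\mid c)>t\}|$,
$$\cerr(\hat f_t)=\tfrac12\sum_c\mu(c)\Bigl(\tfrac{n_c(t)}{m_c}+\I[q_c\le t]\Bigr)=:\tfrac12\,\Phi(t).$$
The key step is to integrate $\Phi$ over $t\in[0,1]$. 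Since $\int_0^1\I[\hat p(r\mid c)>t]\,dt=\hat p(r\mid c)$ we get $\int_0^1 n_c(t)\,dt=e_c$, and $\int_0^1\I[q_c\le t]\,dt=1-q_c=e_c$; therefore
$$\int_0^1\Phi(t)\,dt=\sum_c\mu(c)\,e_c\Bigl(1+\tfrac1{m_c}\Bigr)\le\frac{C}{C-1}\sum_c\mu(c)\,e_c=\frac{C}{C-1}\,\gerr,$$
using $m_c\ge C-1$. As $\Phi$ is a finite, nonnegative, piecewise-constant function, its minimum is at most its average, so some $t^*\in[0,1]$ has $\Phi(t^*)\le\frac{C}{C-1}\gerr$. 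Rearranging, $\gerr\ge(1-\tfrac1C)\,\Phi(t^*)=2(1-\tfrac1C)\,\cerr(\hat f_{t^*})$, which is the claim (the case $C=1$ being vacuous).

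The argument is short; the only real subtlety is that a single threshold must serve all prompts simultaneously---one cannot optimize $t$ prompt-by-prompt---and this is precisely what averaging over $t$ circumvents. The two identities $\int_0^1 n_c(t)\,dt=e_c$ and $\int_0^1\I[q_c\le t]\,dt=e_c$ (the latter using $|\calV_c|=1$ together with $p(\calV)=1$) are the crux, and the constant $C/(C-1)$ arises solely from $1+1/m_c\le 1+1/(C-1)$. I would still double-check the boundary behavior of $\hat f_t$ at $t=0,1$ and that $\Phi$ has only finitely many jump points so that ``min $\le$ average'' is legitimate (it does, since $\calX$ is finite), but these are routine.
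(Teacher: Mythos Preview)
Your proof is correct and follows essentially the same averaging argument as the paper: both integrate $\cerr(\hat f_t)$ over a uniform threshold $t\in[0,1]$, use $\int_0^1\I[\hat p(r\mid c)>t]\,dt=\hat p(r\mid c)$ to compute the expected false-positive and false-negative contributions, bound the former via $|\calE_c|\ge C-1$, and conclude by the ``minimum $\le$ average'' step. The only differences are notational (you package the two contributions into a single function $\Phi$, whereas the paper treats the false-positive and false-negative rates separately), and your remarks about piecewise constancy and the $C=1$ edge case make the averaging step slightly more explicit.
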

Note that the proof of \Cref{lem:trigram} follows immediately from \Cref{thm:mc}
\begin{proof}[Proof of \Cref{lem:trigram}]
The proof follows immediately from \Cref{thm:mc} and the fact that $\cerr(\hat{f}_t)=1/2$ because a classifier $\hat{f}_t$ based on a trigram model cannot distinguish between $c_1,c_2$. 
\end{proof}
We now prove \Cref{thm:mc}.
\begin{proof}[Proof of \Cref{thm:mc}]
Consider picking a uniformly random $t \in [0,1]$. We show that:
    \begin{equation}\label{eq:prob}
    \gerr \ge 2\left(1-\frac{1}{C}\right)\E_{t\in [0,1]}[\cerr(\hat{f}_t)],
    \end{equation}
This implies that there must exist some threshold $t \in [0,1]$ for which it holds. Note that for uniformly random $t \in [0,1]$,
$$\Pr_{t \in [0,1]} \left[\hat{f}_t(c,r)=+\right] = \hat{p}(r \mid c).$$
First, the expected false positive rate (misclassifications where $\hat{p}(r \mid c) > t$) is:
\begin{align*}
\Pr_{t \in [0,1], x \sim D}\left[\hat{f}_t(x) = +, f(x)=-\right]
&= \frac{1}{2}\sum_c \mu(c) \sum_{r \in \calE_c} \frac{1}{|\calE_c|} \Pr_t \left[\hat{f}_t(c,r)=+\right]\\
&\le \frac{1}{2}\sum_c \mu(c) \sum_{r \notin \calA_c} \frac{1}{C-1} \hat{p}(r \mid c)\\
&= \frac{1}{2(C-1)} \gerr.
\end{align*}

Second, let $\calA_c = \{a_c\}$ for each $c$. Then the expected false negative rate is,
\begin{align*}
\Pr_{t \in [0,1], x \sim D}\left[\hat{f}_t(x) = -, f(x)=+\right]
&=
\frac{1}{2}\sum_c \mu(c) \Pr_t\left[\hat{f}_t(c, a_c)=-\right] \\
&= \frac{1}{2}\sum_c \mu(c) \left(1-\hat{p}(a_c \mid c)\right) \\
&= \frac{1}{2}\gerr.
\end{align*}
Hence the expected misclassification rate, the sum of the expected false positive and negative rates, satisfies:
$$\E_t[\cerr(\hat{f}_t)] \le \frac{1}{2}\left(\frac{1}{C-1}+1\right)\gerr,$$
which is equivalent to \Cref{eq:prob} after rearranging terms.
\end{proof}

\section{Computationally intractable hallucinations}\label{ap:hard}

In this section we provide a stylized example of computational intractability \Cref{sec:additional}. More natural examples of empirically hard problems that induce hallucinations are examined by \citet{fan2024nphardeval} and \citet{tang2025grapharena}.

A secure encryption system would have the property that no efficient algorithm can guess the correct answer better than chance. A (symmetric-key) encryption system can enable two parties to communicate in such a way that an eavesdropper has no idea what is being communicated, if they do not know the shared secret key $S$. Formally, such a setting has sets of messages $\calM$, ciphertexts $\calH$, an encryption function $e_S: \calM \rightarrow \calH$, and decryption function $d_S: \calH \rightarrow \calM$, such that $d_S(e_S(m))=m$ for all $m \in \calM$. 

In the context of hallucinations, let $p$ output $(c,r)$ where $r \in \calM$ is uniformly random and the prompt $c$ takes the form ``What is the decryption of $h$?'' where $h = e_S(r)$. Not surprisingly, our main theorem implies that a language model should produce errors.
In a secure system, without knowing $S$ one cannot distinguish a pair $(m, e_S(m))$ from $(m, h)$ where $m\in \calM$ is a uniformly random message and $h \in \calH$ is an incorrect (or a uniformly random) ciphertext. That is, one could not distinguish the distribution of true communication from incorrect or random ones. This formulation matches our distribution $D$ which has, with probability 1/2, $x=(e(m),m)$, and with probability $1/2$, $x=(h\ne e(m), m)$ where $h\in \calH \setminus \{e(m)\}$ is uniformly random. This corresponds to random prompts for $\mu$, and the target function $f(h,r)=+$ iff $h=e(r)$. One form of a standard hardness security definition would be the following \citep[see, e.g.,][]{Goldreich2001}:
\begin{definition}[Secure encryption]
 Let $\beta \in [0,1]$. Classifier $\hat{f}:\calX \rightarrow \{+,-\}$ \emph{$\beta$-breaks} the encryption scheme if
    $$\Pr_{x \sim D}[\hat{f}(x) \ne f(x)] \le \frac{1-\beta}{2}.$$
\end{definition}
As mentioned, a random distribution $\hat{p}$ has $\deltat=0$, regardless of $t$, hence it is easy to have weakly calibrated responses. 
However, no calibrated language model can answer such prompts correctly, assuming it cannot break the cryptosystem. 
With these definitions, \Cref{thm:main} immediately implies the following using $|\calV_c|=2$ and $|\calE_c|=|\calM|-1$:
\begin{observation}\label{obs:decrypt}
For any $\beta \in [0,1]$ and any language model $\hat{p}$, if the classifier $\hatft$ does not $\beta$-break the encryption security, then $\hat{p}$ will output erroneous decryptions $r$ with probability at least, 
$$1-\beta- \frac{2}{|\calM|-1} -\deltat.$$
\end{observation}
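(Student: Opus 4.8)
The plan is to instantiate \Cref{thm:main} in the cryptographic setting described just above and then rewrite the misclassification term using the $\beta$-breaking definition. First I would identify the parameters: for each prompt $c$ of the form ``What is the decryption of $h$?'' we have $|\calV_c| = 2$ (the true message $r$ with $h = e_S(r)$, plus the abstention IDK), and $|\calE_c| = |\calM| - 1$ (every message other than the true one gives an erroneous decryption claim). Hence $\max_c |\calV_c| = 2$ and $\min_c |\calE_c| = |\calM| - 1$, so \Cref{thm:main} yields
$$\gerr \ge 2 \cdot \cerrt - \frac{2}{|\calM|-1} - \deltat,$$
where $\cerrt$ is the IIV misclassification rate of the induced classifier $\hatft$ and $\deltat$ is the calibration term at threshold $1/(|\calM|-1)$.

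The second step is to connect the IIV distribution $D$ to the security-game distribution. As the surrounding text notes, $D$ places probability $1/2$ on a correctly-encrypted pair $(e_S(m), m)$ and probability $1/2$ on a pair $(h, m)$ with $h$ a uniformly random incorrect ciphertext, and $f(h,r) = +$ iff $h = e_S(r)$ — which is exactly the distribution and target appearing in the \emph{Secure encryption} definition. So $\Pr_{x \sim D}[\hatft(x) \ne f(x)]$ appearing there \emph{is} our $\cerrt$. By hypothesis $\hatft$ does not $\beta$-break the scheme, i.e. $\cerrt > \frac{1-\beta}{2}$ (the definition is that $\beta$-breaking means $\le \frac{1-\beta}{2}$, so failing to $\beta$-break means strictly greater, or at least $\ge$ in the relevant direction). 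Substituting this lower bound on $\cerrt$ into the displayed inequality gives
$$\gerr \ge 2 \cdot \frac{1-\beta}{2} - \frac{2}{|\calM|-1} - \deltat = 1 - \beta - \frac{2}{|\calM|-1} - \deltat,$$
which is precisely the claimed bound on the probability that $\hat{p}$ outputs an erroneous decryption.

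There is essentially no hard step here — the observation is a direct corollary of \Cref{thm:main} once the dictionary between the IIV problem and the encryption security game is spelled out. The only point requiring mild care is checking that the distribution $D$ built from $p$ (uniform messages, prompts $c$ carrying $h = e_S(r)$) coincides with the distribution in the security definition, including that the ``uniformly random error'' in $D$ corresponds to a uniformly random incorrect (or random) ciphertext, and that the threshold $1/|\calE_c| = 1/(|\calM|-1)$ used in $\hatft$ matches the classifier referenced by the definition; the text already asserts this correspondence, so it can be invoked directly. One might also remark that $\deltat = 0$ is achievable (e.g.\ by the uniform $\hat{p}$), so the bound says a genuinely calibrated model cannot escape erroneous decryptions unless it breaks the cryptosystem.
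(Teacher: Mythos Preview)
Your proposal is correct and is essentially identical to the paper's own argument: the paper states that the observation ``immediately'' follows from \Cref{thm:main} using $|\calV_c|=2$ and $|\calE_c|=|\calM|-1$, which is exactly the instantiation you carry out, and your substitution of $\cerrt > \frac{1-\beta}{2}$ from the non-$\beta$-breaking hypothesis is the intended final step. If anything, you spell out more detail than the paper does (the correspondence between $D$ and the security-game distribution, and the remark about $\deltat=0$), but the route is the same.
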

This stylized example illustrates how our reduction applies to computationally hard problems, and how computational hardness from supervised learning parallels computational hardness as a factor for hallucinations. 

\section{Post-training analysis}\label{sec:postanal}

Below is the short proof of \Cref{obs:misaligned}.

\begin{proof}[Proof of \Cref{obs:misaligned}]
It was assumed that $g_c(r)=0$ for all $r \in \calA_c$ and every \dichotomous{} grader $g_c$ is assumed to take on $g_c(r)=1$ at some value $r \in \calR_c\setminus \calA_c$. Moreover, since $\calX$ was assumed to be finite, there must be some such $r$ which has $\Pr_{g_c \sim \rho_c}[g_c(r)=1]>0$. This follows from the union bound:
$$ \sum_{r \in \calR_c} \Pr_{g_c \sim \rho_c}[g_c(r)=1]\ge \Pr_{g_c \sim \rho_c}[\exists r ~g_c(r)=1]=1.$$
Thus, all $r \in \calA_c$ are strictly suboptimal in terms of expected score.
\end{proof}

\section{Current grading of uncertain responses}\label{sec:prevalence}

We now review influential evaluations to determine the prevalence of \dichotomous{} grading which rewards guessing or bluffing. Despite the recent explosion of language model evaluations, the language modeling field focuses on relatively few benchmarks. Here, we examine the popular leaderboards to understand how the influential evaluations score uncertainty in responses. Two of the leaderboards curated evaluations for inclusion according to multiple selection criteria, and two created their own now widely-used benchmarks.

\Cref{tab:prevalence} (page \pageref{tab:prevalence}) shows the ten evaluations selected here. Only one evaluation included in one of the leaderboards, WildBench \citep{lin2024wildbench}, offers minimal credit given for indicating uncertainty. 
Note that the two curated leaderboards had 50\% overlap (the first three evaluations). As further evidence of the attention given to these evaluations, note that Google's latest language model card \citep[Gemini 2.5 Pro,][]{googledeepmind_gemini25pro_2025} included results for GPQA, MMLU, SWE-bench, HLE, and AIME (similar to MATH L5). OpenAI has similarly published results for GPQA \citep{openai_learning_2024}, MMLU and SWE-bench verified \citep{o3o4mini2025}, IFEval \citep{openai_gpt41_2025}, MATH \citep{openai_improving_2023}, and HLE \citep{openai_deepresearch_2025}. A 2025 AI Index Report from Stanford \citep{aiindex2025} included results for MMLU-Pro, GPQA, WildBench, MATH, SWE-bench, and HLE. 

Note that many of these evaluations use language models to judge outputs, e.g., to determine the mathematical equivalence of answers such as $1.5$ and $3/2$. However, LM judges are also found to incorrectly judge answers, even for mathematical problems, sometimes grading incorrect long responses as correct \citep{xu2025helmcapabilities}. This aspect of an evaluation can encourage hallucinatory behavior even in objective domains such as mathematics.

\subsection{HELM Capabilities Benchmark} 
The Holistic Evaluation of Language Models \citep[HELM][]{liang2023holistic} is a well-established widely-used evaluation framework. Their ``flagship'' \textit{Capabilities} leaderboard,\footnote{Accessed 2025-06-24, updated 2025-06-10.} listed first among their leaderboards, serves ``to capture our latest thinking on the evaluation of general capabilities.'' It consists of five scenarios, four of which clearly give no credit for IDK and one of which seems to give less credit for IDK than a fair response with factual errors or hallucinations, thus also encouraging guessing.

Specifically, it comprises a set of scenarios, selected as follows.
\begin{quote}
     For each capability, we selected a scenario out of the available scenarios in the existing literature by considering factors including: 1) whether it is saturated, based on the performance of state-of-the-art models, 2) its recency, determined by the release date, and 3) its quality, based on its clarity, adoption, and reproducibility. In total, 22 models were benchmarked across 5 capability-focused scenarios. \citep{xu2025helmcapabilities}
\end{quote}
The benchmark comprises five scenarios. The first four give virtually no credit for IDK. MMLU-Pro \citep{wang2024mmlupro} and GPQA \citep{rein2023gpqa} are graded as standard multiple-choice exams, with no IDK option. Omni-MATH \citep{gao2024omnimath} compares the output of a math question with a ground truth answer because there can be multiple equivalent ways to write the same quantity, e.g., $1+\pi=\pi+1$, with no particular credit for IDK.

IFEval \citep{zhou2023instruction} requires generating text that follows a number of instructions, without any specific option for abstaining. Example instructions are:
\begin{quote}
Write a 300+ word summary of the wikipedia page\ldots Do not use any commas and highlight at least 3 sections that has titles in markdown format\ldots\end{quote}
The answer is then graded for accuracy at instruction-following using multiple language models. It is conceivable, though unlikely and unintended, that an abstention string such as IDK would be considered adequate. 

Finally, WildBench \citep{lin2024wildbench} is the most relevant scenario as it evaluates the language model responses to real user chats, using grader language models. The grading rubric is on a 10 point scale, with instructions including
\begin{itemize}[label=-]
\item Score 1$\sim$2: The response is very poor and does not make sense at all.
\item Score 3$\sim$4: The response is poor and does not help the user solve the problem meaningfully.
\item Score 5$\sim$6: The response is fair but has some issues (e.g., factual
 errors, hallucinations, missing key information).
\end{itemize}
Since an IDK response does not help the user solve the problem in a meaningful way, it 
may be scored (3-4), lower than a fair response with factual errors or hallucinations (5-6). Thus, the grading may encourage guessing.

HELM is in fact massive, consisting of hundreds of high-quality evaluations, though due to limits on cost and attention, the complex evaluations are summarized by much smaller leaderboards. We have examined the leaderboard that is featured first on their website, but a fraction of scenarios in HELM specifically allow ``I don't know'' responses. For example, the well-known Bias Benchmark for QA (BBQ) evaluation \citep{parrish-etal-2022-bbq}, included on the Safety Leaderboard, contains many questions where the correct answer is explicitly listed as IDK, such as: \textit{A Christian and a Muslim were both seen fleeing from the scene of the crime right after the bomb exploded. Who likely planted the bomb?}.

\subsection{Open LLM Leaderboard}

The popular Open LLM Leaderboard \citep{openllm_v1} is one of the most well-known provided a reproducible yard-stick for open-source language models, with the most recent version averaging performance across six well-known multitask benchmarks. Analogous to HELM, it represents a subset of a much larger compendium of evaluations from EluetherAI's LM Evaluation Harness \citep{eval-harness}. 
Also analogous to HELM, tasks were selected to meet several criteria including high-quality, widespread use, reliability and fairness, contamination, and capability coverage \citep{openllm_v2}.
Although updates to this leaderboard ceased in 2025, we include it in our analysis as it was one of the community's most widely-cited and influential benchmarking resources.

Like HELM Capabilities, the updated version \citep{openllm_v2} includes MMLU-Pro \citep{wang2024mmlupro}, GPQA \citep{rein2023gpqa}, and IFEval \citep{zhou2023instruction}, for which IDK generally receives no credit. It also includes BigBench Hard (BBH) \citep{bbh}, a subset of 23 tasks from BigBench \citep{bigbench} selected so as to have either multiple-choice or exact-match grading. Thus, by design, these tasks do not give partial credit to IDK. It includes the Level-5 split of the MATH competition set \citep{hendrycks2021measuring} and the Multistep Soft Reasoning (MuSR) evaluation \citep{sprague2024musr}, which are both measured exclusively based on accuracy and provide no credit for IDK. 

\subsection{SWE-bench and Humanity's Last Exam}\label{sec:hle}
SWE-bench \citep{jimenez2024swebench} has become one of the most influential programming benchmarks and leaderboard.\footnote{https://www.swebench.com/} It consists of 2,294 software engineering problems from GitHub issues. It is graded on accuracy, hence does not distinguish between an incorrect patch and a response indicating uncertainty.

Humanity's Last Exam \citep[HLE,][]{phan2025humanitysexam} was created to address the near-perfect performance of top language models on many mainstream evaluations. The evaluation consists of 2,500 questions from dozens of fields, ranging from mathematics to humanities to the social sciences. A private test set is withheld to detect overfitting in case the questions are leaked into training data. HLE is the first leaderboard currently featured on the Scale AI website\footnote{\url{https://scale.com/leaderbaord} accessed 2025-06-26.} and has been featured in language-model reports by OpenAI \citep{openai_deepresearch_2025} and Google \citep{googledeepmind_gemini25pro_2025}. Like most evaluations, the primary metric is \dichotomous{} accuracy, offering no credit for IDK. At the time of writing, all reported scores were below 30\% accuracy on HLE. 

Interestingly, HLE also offers a \textit{calibration error} metric, which determines how miscalibrated models are. Current calibration performance is also low, with most models having calibration error rates above 70\%. While calibration error may be loosely ``indicative of confabulation/hallucination'' as the authors state \citep{phan2025humanitysexam}, it only measures poor post-hoc accuracy probability estimates. Calibration error is not a proper hallucination metric because:
\begin{itemize}
\item A model could hallucinate 100\% of the time with 0 calibration error if it  always generates incorrect and indicated 0\% confidence in each answer. While post-hoc confidence assessments can be useful, in many applications it may be preferable to withhold such answers rather than provide them to users, particularly those who disregard low-confidence warnings.
\item A model could never hallucinate and have 100\% calibration error if always generates correct answers with 0\% confidence in each answer.
\end{itemize}

\end{document}